\documentclass[journal, transmag]{IEEEtran}

\usepackage{xparse}

\NewDocumentCommand\bbm{}{ \begin{bmatrix} }
\NewDocumentCommand\ebm{}{ \end{bmatrix} }
\NewDocumentCommand\Vector{m}{ \boldsymbol{\mathbf{#1}} }

\NewDocumentCommand\Matrix{m}{ \boldsymbol{\mathbf{#1}} }

\NewDocumentCommand\Norm{m}{\left\Vert#1\right\Vert }




\NewDocumentCommand\Real{}{ \mathbb{R} }

\NewDocumentCommand\LieGroupO{m}{ \mathrm{O}(#1) }








\NewDocumentCommand\T{}{\mathsf{T}}



\NewDocumentCommand\diag{m}{\text{diag}\left(#1\right)}


\newcommand*\cpp{C\kern-0.1ex\raisebox{0.3ex}{\scalebox{0.9}{+\kern-0.2ex+}}}
\usepackage{times}



\usepackage{multicol}


\usepackage{xparse}
\usepackage{amsmath}
\usepackage{amssymb}
\usepackage{amsfonts}
\usepackage{dsfont}
\usepackage{graphicx}
\usepackage{url}
\usepackage{booktabs}
\usepackage[para]{threeparttable}
\usepackage{multirow}
\usepackage{makecell}


\usepackage{amsthm}
\usepackage[ruled,vlined]{algorithm2e}
\usepackage[colorlinks,bookmarksopen,bookmarksnumbered,citecolor=red,urlcolor=red]{hyperref}
\usepackage[capitalize]{cleveref}
\usepackage{xcolor}
\usepackage{framed}
\usepackage{caption}
\usepackage{subcaption}
\usepackage{pgfplots}
\usepackage{adjustbox}
\DeclareUnicodeCharacter{2212}{−}
\usepgfplotslibrary{groupplots,dateplot}
\usetikzlibrary{patterns,shapes.arrows}
\pgfplotsset{compat=newest}
\usepackage{flushend}

\captionsetup{font=small}
\captionsetup[sub]{font=small}

\SetKwInOut{Parameters}{Parameters}

\SetKwComment{Comment}{$\triangleright$\ }{}
\SetCommentSty{mycommfont}

\usepackage{setspace}
\usepackage[normalem]{ulem}

\colorlet{shadecolor}{gray!10}

\newtheorem{problem}{Problem}


\newtheorem{proposition}{Proposition}

\hyphenation{op-tical net-works semi-conduc-tor}
\newtheorem{definition}{Definition}
\begin{document}
\title{Riemannian Optimization for Distance-Geometric Inverse Kinematics}
\author{Filip Mari\'c$^{a,b,\dagger }$, Matthew Giamou$^{a,\dagger}$, Adam W. Hall$^{a,c}$,\\ Soroush Khoubyarian$^a$, Ivan Petrovi\'c$^{a,b}$, and Jonathan Kelly$^{a}$
\thanks{$^\dagger$Denotes equal contribution.}
\thanks{$^a$Filip Mari\'c, Matthew Giamou, Adam W. Hall, Soroush Khoubyarian, Ivan Petrovi\'c and Jonathan Kelly are with the Space and Terrestrial Autonomous Robotic Systems Laboratory, University of Toronto, Institute for Aerospace Studies, Toronto, Canada. \{\texttt{<first name>.<last name>@robotics.utias.utoronto.ca}\}}
\thanks{$^b$Filip Mari\'c and Ivan Petrovi\'c are with the Laboratory for Autonomous Systems and Mobile Robotics, University of Zagreb, Faculty of Electrical Engineering and Computing, Zagreb, Croatia. \{\texttt{<first name>.<last name>@fer.hr}\}}
\thanks{$^c$Adam W. Hall is jointly with the Dynamic Systems Laboratory, University of Toronto, Institute for Aerospace Studies, Toronto, Canada.\}}
}

\newenvironment{rcases}
  {\left.\begin{aligned}}
  {\end{aligned}\right\rbrace}
\newcommand{\mpg}[1]{\textcolor{green}{#1}}
\newcommand{\fm}[1]{\textcolor{blue}{#1}}
\newcommand{\jk}[1]{\textcolor{red}{\textbf{{JK:} #1}}}

\markboth{IEEE Transactions on Robotics}%
{Shell \MakeLowercase{\textit{et al.}}: Bare Demo of IEEEtran.cls for IEEE Journals}

\maketitle

\begin{abstract}
  Solving the inverse kinematics problem is a fundamental challenge in motion planning, control, and calibration for articulated robots.
  Kinematic models for these robots are typically parametrized by joint angles, generating a complicated mapping between the robot configuration and the end-effector pose.
  Alternatively, the kinematic model and task constraints can be represented using invariant distances between points attached to the robot.
  In this paper, we formalize the equivalence of distance-based inverse kinematics and the distance geometry problem for a large class of articulated robots and task constraints.
  Unlike previous approaches, we use the connection between distance geometry and low-rank matrix completion to find inverse kinematics solutions by completing a partial Euclidean distance matrix through local optimization.
  Furthermore, we parametrize the space of Euclidean distance matrices with the Riemannian manifold of fixed-rank Gram matrices, allowing us to leverage a variety of mature Riemannian optimization methods.
  Finally, we show that bound smoothing can be used to generate informed initializations without significant computational overhead, improving convergence.
  We demonstrate that our inverse kinematics solver achieves higher success rates than traditional techniques, and substantially outperforms them on problems that involve many workspace constraints.
\end{abstract}

\begin{IEEEkeywords}
	Kinematics, Computational Geometry, Riemannian Optimization, Motion and Path Planning
\end{IEEEkeywords}

\IEEEpeerreviewmaketitle

\section{Introduction}

Articulated robots consist of actuated revolute joints connected by rigid links.
A significant portion of the difficulty associated with performing a specific task involves finding joint angles that achieve a desired end-effector pose.
%
%
Identifying a set of joint angles or a \textit{configuration} that reaches the desired goal pose(s) of one or more end-effectors is known as the \textit{inverse kinematics} (IK) problem~\cite{siciliano2010robotics}.
In general, this problem cannot be solved analytically and admits an infinite number of solutions for robots with redundant degrees of freedom (DOF).
Therefore, most approaches resort to numerical methods that solve constrained local optimization problems over joint angles.
This leads to constraints on end-effector and link poses that manifest as nonlinear expressions involving joint angles and kinematic parameters such as link lengths.
Without an informed or lucky initial guess, local search algorithms may converge to local minima, leading to inefficient performance or failure to find a sufficiently accurate solution.

\begin{figure}
  \centering
  \includegraphics[width=0.9\columnwidth]{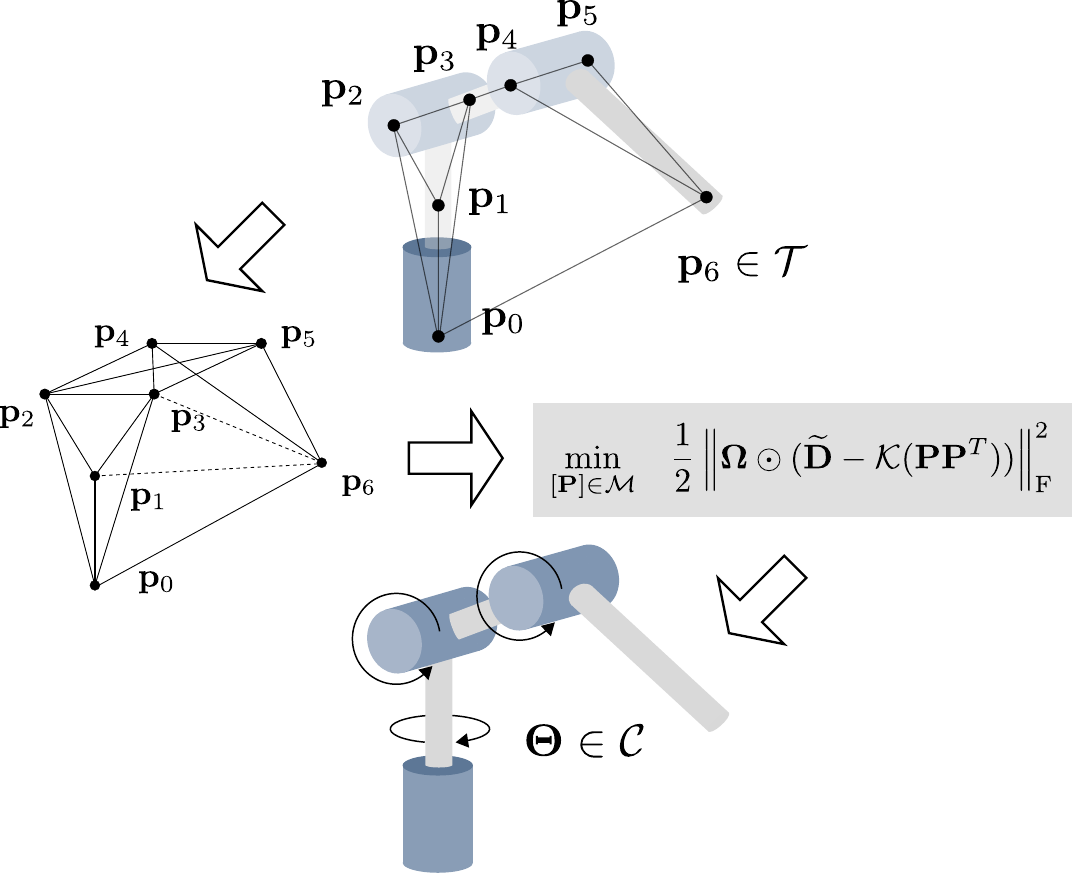}
  \caption{Overview of the proposed algorithm. A goal end-effector position $\Vector{p}_{6}$ is defined for a 3-DOF robotic manipulator (top); the inverse kinematics problem is to find the corresponding joint angles $\Vector{\Theta}$. Our method uses the matrix $\widetilde{\mathbf{D}}$ of distances between points $\Matrix{P}$ common to all feasible IK solutions to define an incomplete graph whose edges are weighted by known distances. Then, we apply Euclidean distance matrix completion with the known distance selection matrix $\Matrix{\Omega}$ to recover the weights corresponding to the unknown edges, solving the IK problem.}\label{fig:system_overview}
\vspace{-0.65\baselineskip}
\end{figure}

Many problems can be expressed using points and their relative distances~\cite{dokmanic_euclidean_2015}.
In the case of IK, a geometrically-intuitive alternative to joint angle parameters is obtained by considering points attached to the body of the robot~\cite{dejalonTwentyfiveYearsNatural2007}.
The positions of these points on the robot and their relative distances can be used to describe the kinematic model and end-effector goal poses within an IK formulation~\cite{blanchini_convex_2017, le2019kinematics}.
This approach unifies the domain and codomain (i.e., the configuration and task spaces) of the kinematic model, eliminating trigonometric constraints that appear when using joint angles.
In contrast to an angle-based parametrization, the search space is not restricted to feasible configurations, but to arbitrary point ``conformations'' in Euclidean space.
The convergence of such approaches is hindered by a large and highly redundant search space, in part because the point set can be rotated and translated without affecting the distance constraints.
A more compact representation is obtained by modelling the robot using \emph{distances} as variables~\cite{porta_inverse_2005, han_inverse_2006}.
While this avoids the redundancy induced by the invariance of the points to rigid transformations, the search space also includes points in higher dimensions, as there is no guarantee that a given set of distances can be realized as a collection of points in two- or three-dimensional Euclidean space.

In this paper, we explore IK from a \emph{distance geometry} perspective, revealing an equivalence between IK and the general distance geometry problem (DGP)~\cite{libertiEuclideanDistanceGeometry2014}.
By formalizing this equivalence for a large class of robots comprised of planar (i.e., two-dimensional), spherical, and revolute joints, we are able to connect IK to a rich literature of DGP solutions based on low-rank matrix completion~\cite{nguyenLowRankMatrixCompletion2019}.
We find solutions using the method introduced by Mishra et. al.~\cite{Mishra_2011}, where a Euclidean distance matrix~\cite{dokmanic_euclidean_2015} is parametrized with the manifold of fixed-rank Gram matrices~\cite{Journ_e_2010}, maintaining the advantages of a relaxed search space with fixed dimensionality and reduced redundancy.
Additionally, we show that bound smoothing~\cite{havelDistanceGeometryTheory2002} can be used to generate informed initializations, further improving convergence.
Our main contributions are as follows:
\begin{enumerate}
    \item we present a distance-geometric formulation of IK for a variety of robot types and prove its equivalence to traditional angle-based IK;
    \item we extend this formulation by systematically incorporating common configuration and workspace constraints;
    \item we show that our IK formulation can be solved using generic Riemannian optimization methods for low-rank matrix completion; and
    \item we demonstrate that informed initializations can be generated using the bound smoothing method from distance geometry.
\end{enumerate}
Finally, we provide a free and open-source Python implementation of our algorithms and simulation experiments, which empirically show the effectiveness of our approach on a variety of robot models.

\Cref{sec:related} briefly reviews existing IK literature and the application of distance geometry to practical problems.
\Cref{sec:background} covers the relevant background material on distance geometry, elucidating the connection between Euclidean distance matrices, low-rank matrix completion, and Riemannian optimization.
\Cref{sec:ikdb} begins with an introduction to the IK problem and its relevant terminology.
Sections \ref{sec:kinematic_models} to \ref{sec:solution_recovery} introduce distance-geometric IK formulations for a variety of robots and constraints (e.g., end-effector poses and joint limits), while \Cref{sec:eqdgp} formally proves their equivalence to the DGP.
\Cref{sec:algorithm} gives a detailed description of our IK algorithm and the bound smoothing procedure that can be used to find an informed initialization.
\Cref{sec:experiments} contains extensive experimental results for several commercial manipulators and a variety of complex high-DOF mechanisms.
Finally, \Cref{sec:conclusion} provides a summary of our findings and a discussion of limitations and potential future work.
Compared to typical IK formulations based on joint angles, experimental results indicate that our method often achieves higher success rates and faster convergence, and outperforms benchmark algorithms when many workspace constraints are present.
%


\section{Related Work}
\label{sec:related}
We begin with a discussion of existing theory and algorithms for characterizing and solving inverse kinematics problems. 
Since our main contribution is a principled application of distance geometry to IK, we proceed 
with a review of the distance geometry literature, with a particular focus on its (previously) limited intersection with IK.

\subsection{Inverse Kinematics}
Due to its widespread use in areas such as robotics \cite{angeles2013computational} and computer graphics \cite{aristidouInverseKinematicsTechniques2018}, inverse kinematics remains an active research area with an abundance of relevant literature that we can only briefly summarize here.
IK is typically formulated with a set of trigonometric equations and inequalities that constrain the kinematic model configuration to achieve a desired end-effector position or pose~\cite{siciliano2010robotics}.
It is well established that kinematic chains with up to six DOF admit a finite number of configurations that satisfy these constraints for feasible end-effector poses~\cite{lee1988new}.
Moreover, the entire solution set can be determined analytically~\cite{manocha1994efficient, husty2007new}, and even generated in an automated manner using the \textit{IKFast} algorithm~\cite{diankov2010automated}.
Unfortunately, in many cases of interest there exist redundant DOF or multiple end-effectors, rendering an analytical approach infeasible.

Heuristic methods can be used to efficiently solve IK problems for a limited class of robots.
The \textit{cyclic coordinate descent} (CCD) algorithm~\cite{kenwright2012inverse} iteratively adjusts joint angles using simple geometric expressions, resulting in very low computational overhead and making it useful in real-time applications.
Similarly, the \emph{triangulation} algorithm from~\cite{muller2007triangualation} incurs an even lower computational overhead than CCD, providing global convergence guarantees within a predetermined number of calculations for robots comprised of joints with unconstrained axes of rotation.
While highly efficient, both methods require additional modifications and engineering in order to be adapted to robots with multiple end-effectors or with joint limits.
The algorithms developed by Han and Rudolph apply unique parametrizations to IK problems over spherical linkages~\cite{han_inverse_2006, han_unified_2007}.
These parametrizations reveal a piecewise-convex structure that simplifies the solution, but the approach does not apply to generic revolute manipulators and cannot readily incorporate simple constraints like joint limits.
Recently, Aristidou et al.\ introduced FABRIK~\cite{Aristidou_2011, aristidou_extending_2016}, a heuristic solution to the IK problem which uses iterative forward-backward passes over joint positions to quickly produce high-quality solutions for a variety of robot models.
Because of its speed and general applicability, we implement FABRIK as a benchmark for comparison with our algorithm in \Cref{sec:experiments}.
IK is often formulated as a local optimization problem over joint angles and solved using unconstrained or bounded nonlinear programming methods such as L-BFGS-B \cite{zhu_algorithm_1997} or SQP~\cite{schulman2014motion}.
These methods have robust theoretical underpinnings~\cite{boyd2004convex} and can approximately support a wide range of constraints through the addition of penalties to the cost function~\cite{beeson2015trac}.
However, the highly nonlinear nature of the problem makes them susceptible to local minima, often requiring multiple initial guesses before returning a global minimum, if at all.
In robotics, many such methods belong to the family of closed-loop IK (CLIK) techniques~\cite{siciliano2010robotics}, where the kinematic Jacobian (pseudo)inverse is used to apply differential kinematics in a closed-loop fashion, emulating a feedback control problem~\cite{sciavicco1986coordinate}.
Major advantages of CLIK methods include their ease of implementation and a variety of extensions providing numerical robustness~\cite{buss2005selectively} and efficient incorporation of secondary objectives through redundancy resolution~\cite{nakamura1987task}.
However, alongside the convergence issues commonly encountered with first-order local optimization, CLIK methods often have problems with singularities~\cite{spong2020robot} due to their reliance on the kinematic Jacobian.
In \Cref{sec:experiments}, we compare our algorithm to the IK solver recently introduced by Erleben et. al.~\cite{erleben_solving_2019}, who show that second-order methods with exact Hessian matrices have improved convergence properties.

Recently, several optimization approaches have been introduced that forgo the standard joint angle parametrization in favour of models based on Cartesian coordinates (also known as \emph{natural coordinates}~\cite{dejalonTwentyfiveYearsNatural2007}).
The authors of~\cite{dai_global_2019} use a piecewise-convex relaxation of the $\mathrm{SO(3)}$ group together with a set of points on the robot to formulate the constrained IK problem as a mixed-integer linear program (MILP).
Their formulation can detect infeasible problems and provide approximate solutions to feasible problems, at the cost of a computationally intensive solution method.
Yenamandra et al.~\cite{yenamandra_convex_2019} use a similar relaxation to formulate IK as a semidefinite program.
Blanchini et al.~\cite{blanchini_inverse_2015, blanchini_convex_2017} treat points on a rigid manipulator as virtual masses in a potential field, leading to ``minimum energy'' solutions to convex formulations of planar and spherical inverse kinematics.
Naour et al.~\cite{le2019kinematics} formulate IK as a nonlinear program over inter-point distances, showing that solutions can be recovered for unconstrained articulated bodies.
While our kinematic model is based on inter-point distances, our approach differs from previous work by encompassing a larger class of robots and allowing for constraints such as symmetric joint limits and spherical obstacle avoidance.
Moreover, we provide a comparison with both heuristic and nonlinear optimization approaches, demonstrating that our proposed solution method provides a benchmark for IK problems.

\subsection{Distance Geometry}
The theory of distance geometry plays an important role in the development of computational methods for analyzing problems defined using inter-point distances~\cite{libertiEuclideanDistanceGeometry2014}.
This elegant theoretical framework is often applied to solve a diverse set of problems spanning computational chemistry~\cite{havelDistanceGeometryTheory2002}, signal processing~\cite{dingSensorNetworkLocalization2010}, and acoustics~\cite{dokmanic_euclidean_2015}.
Liberti et al.~\cite{libertiEuclideanDistanceGeometry2014} present a detailed taxonomy of distance geometry problems, which can be collectively described as completing a partially-connected graph of inter-point distances.
When a high degree of connectivity is present (i.e., most distances are known), the classical multidimensional scaling (MDS) algorithm~\cite{cox2008multidimensional} is often used.
The EMBED algorithm models the problem of molecular conformation using distances, providing bounds on unknown distances using \emph{bound smoothing}~\cite{havelDistanceGeometryTheory2002} and iteratively finding solutions for smaller problem instances.
Larger problem instances that satisfy some additional assumptions can be solved with a branch-and-prune strategy~\cite{liberti2008branch}.
Convex relaxations of the DGP have been coupled with semidefinite programming methods in both chemistry and sensor network localization~\cite{biswas2006semidefinite, leung2010sdp}.

Known distances can be arranged in an incomplete Euclidean distance matrix (EDM)~\cite{dokmanic_euclidean_2015} of rank at most $K + 2$, where $K$ is the dimension of the Euclidean space.
In many applications, the DGP can be solved by determining the unknown EDM entries using a low-rank \emph{matrix completion} approach~\cite{nguyenLowRankMatrixCompletion2019}.
Recently, numerical optimization methods based on results from Riemannian geometry have been utilized to efficiently perform EDM completion~\cite{vandereyckenLowrankMatrixCompletion2012}.
Mishra et al.\ solve the EDM completion problem using a Riemannian trust-region method by parametrizing the EDM with elements of a quotient manifold invariant to orthogonal transformations~\cite{Mishra_2011}.
Nguyen et al. present a similar approach in \cite{nguyenLocalizationIoTNetworks2019} that finds solutions to the problem of sensor network localization using a Riemannian conjugate gradient method on a manifold representation of EDMs.

Many problems pertaining to active structures such as robots also admit purely distance-based descriptions~\cite{portaDistanceGeometryActive2018}.
Porta et al.\ relate the IK problem to EDM completion~\cite{porta_inverse_2005} for several common classes of manipulators and leverage an algebraic approach to find configurations reaching a desired end-effector pose.
For general systems of kinematic and geometric constraints, Porta et al.\ apply a complete but computationally expensive branch-and-prune solver that iteratively eliminates regions of the solution space using geometric techniques~\cite{porta_branch-and-prune_2005}.
Our recent work~\cite{maricInverseKinematicsSerial2020} applies a convex \textit{sum-of-squares} (SOS) relaxation~\cite{parrilo_semidefinite_2003, lasserre_global_2001} to a distance-geometric formulation of inverse kinematics, exploiting theoretical properties that guarantee a globally optimal solution for many problem instances.
Moreover, we previously showed that kinematic constraints induce an inherently sparse structure that can be used to significantly reduce the computational burden usually associated with the SOS approach (and other convex relaxation-based methods).

In this paper, we use the Riemannian optimization method in~\cite{Mishra_2011} to solve distance-geometric formulations of the IK problem.
We model the motion constraints of individual joint-link pairs by defining distances between key points in the structure of the robot in a manner similar to~\cite{porta_inverse_2005}, which allows us to set constraints on end-effector poses and joint angles by limiting the associated distances to a predetermined range.
In contrast to~\cite{porta_inverse_2005}, the numerical optimization approach makes our algorithm suitable for a more general class of robots with redundant DOF.
Crucially, we use the DGP as a mathematical basis for our approach and derive simple and inclusive criteria for the compatibility of a robot mechanism with our method.
Finally, we show that bound smoothing~\cite{havelDistanceGeometryTheory2002} can be used as an effective initialization method that significantly improves convergence.

\section{Background}\label{sec:background}
In this section, we state the core distance geometry problems that serve as a theoretical foundation for the proposed inverse kinematics formulation.
By analyzing well-known identities for representing collections of points in the form of  Euclidean distance matrices~\cite{dokmanic_euclidean_2015}, we arrive at a DGP formulation that is based on low-rank matrix completion~\cite{Hendrickson_1992}.
Finally, we summarize the optimization-based EDM completion approach introduced in~\cite{Mishra_2011}, which we use to solve the problems presented herein.

\subsection{Distance Geometry}\label{sec:dgp}
The fundamental problem of distance geometry, as stated in~\cite{libertiEuclideanDistanceGeometry2014}, is as follows:
\begin{problem}[Distance Geometry Problem]\label{prob:DGP}
Given an integer $K > 0$, a set of vertices $V$, and a simple undirected graph $G = (V,E)$ whose edges $\{u, v\} \in E$ (where $u, v \in V$) are assigned non-negative weights 
\begin{equation*}
	\{u,v\} \mapsto d_{u,v} \in \Real_{+} \, ,
\end{equation*}
find a function $ p :V \rightarrow \mathbb{R}^{K}$ such that the Euclidean distances between pairs match the assigned weights
\begin{equation}
	\forall\, \{u, v\} \in E\, , \quad \|p(u) - p(v) \| = d_{u,v}.
\end{equation}
\end{problem}
\noindent The function $p:V \rightarrow \mathbb{R}^{K}$ is also known as a \textit{realization} of the graph $G$.
Any realization $p$ of $G$ maps all the vertices in $V$ to a collection of points $\mathbf{P} \in \mathbb{R}^{|V| \times K}$, where each row is the position $\Vector{p}_{u} = p(u) \in \mathbb{R}^{K}$ of the point corresponding to vertex $u \in V$.

In some cases, we may wish to reduce the number of possible realizations by constraining a subset of inter-point distances (i.e., edge weights) to some interval.
Consequently, we can extend \Cref{prob:DGP} such that the edges in $E$ are weighted by positive intervals, resulting in the more general \textit{interval distance geometry problem}~\cite{libertiEuclideanDistanceGeometry2014}:
\begin{problem}[Interval Distance Geometry Problem]\label{prob:iDGP}
Given an integer $K > 0$ and a simple undirected graph $G = (V,E)$ whose edges $\{u, v\} \in E$ are weighted by intervals
\begin{equation*}
	\{u,v\} \mapsto [d_{u,v}^-, d_{u,v}^+] \subseteq \Real_+,
\end{equation*}
find a realization in $\Real^{K}$ such that Euclidean distances between pairs belong to the edge intervals
\begin{equation}
	\forall\, \{u, v\} \in E\, , \quad \|p(u) - p(v) \| \in [d_{u,v}^-, d_{u,v}^+].
\end{equation}
\end{problem}
\noindent Note that for all $e = \{u,v\} \in E$, the notation for \Cref{prob:iDGP} supports unconstrained or missing distances ($d_e^- = 0$, $d_e^+ \rightarrow \infty$), as well as the equality constraints found in \Cref{prob:DGP} ($d_e^- = d_e^+$).

In this paper, we refer to both \Cref{prob:DGP} and~\ref{prob:iDGP} as the DGP, where the specific formulation can be inferred from the presence or absence of distance intervals on the edge weights of $G$.
If a realization that satisfies an instance of the DGP exists, the corresponding collection of points $\Matrix{P}$ can be arbitrarily translated, rotated, and reflected such that the distance constraints still hold~\cite{dokmanic_euclidean_2015}.
This defines the equivalence class $[\Matrix{P}]$ of \cref{eq:equivalence}.
Additionally, there may be multiple equivalence classes $[\Matrix{P}]$ that correspond to distinct solutions to \Cref{prob:DGP} or~\ref{prob:iDGP}.
We will make use of this distinction in \Cref{sec:eqdgp}.

\subsection{Euclidean Distance Matrices}\label{subsec:EDM}
Consider a realization of a graph $G$, obtained by solving \Cref{prob:DGP}.
By arranging the resulting points in a matrix ${\mathbf{P} = {[\mathbf{p}_{0}, \mathbf{p}_{1},\dots ,\mathbf{p}_{N-1}]}^\T \in \mathbb{R}^{N \times K}}$, all inter-point distances $d_{u,v}$ can be determined via the Euclidean norm:
\begin{equation*}
	d_{u,v}= \|\mathbf{p}_{u} - \mathbf{p}_{v}\|\,.
\end{equation*}
The product $\mathbf{X} \triangleq \Matrix{P}\Matrix{P}^\T$ is known as the \textit{Gram matrix}, and belongs to $\mathcal{S}_{+}^{N}$, the set of $N\times N$ symmetric positive semidefinite matrices.
Elements of the Gram matrix can conveniently express squared inter-point distances
\begin{equation}\label{eq:gram_dist}
	d_{u,v}^{2} = \Matrix{X}_{u,u} -2\Matrix{X}_{u,v} + \Matrix{X}_{v,v}\,,
\end{equation}
and the full set of squared inter-point distances in \cref{eq:gram_dist} can be efficiently calculated using the matrix identity
\begin{equation}\label{eq:EDM}
	\Matrix{D} = \diag{\Matrix{X}}\Matrix{1}^{\T} + \Matrix{1}\diag{\Matrix{X}}^{\T} - 2\Matrix{X},
\end{equation}
where $\diag{\Matrix{X}}$ is the vector formed by the main diagonal of the Gram matrix~\cite{dokmanic_euclidean_2015}, and $\Matrix{1}$ is a column vector of ones.
The resulting matrix $\Matrix{D}$ is known as a \textit{Euclidean Distance Matrix} (EDM).
We use $\mathcal{K}\left(\Matrix{X}\right)$ to denote the linear operator mapping $\Matrix{X}$ to $\Matrix{D}$ as defined by \cref{eq:EDM}.

Consider the problem of recovering the original collection of points $\Matrix{P}$ from squared inter-point distances in the matrix $\Matrix{D}$.
Necessary and sufficient conditions for a matrix to be an EDM can be found in~\cite{Sippl_1986}.
If $\mathbf{D}$ is an EDM, a Gram matrix that satisfies \cref{eq:EDM} can be recovered by taking
\begin{equation}\label{eq:EDM2Gram}
	\Matrix{X} = -\frac{1}{2}\Matrix{J}\Matrix{D}\Matrix{J},
\end{equation}
where $\mathbf{J} = \mathbf{I} - \frac{1}{N}\mathbf{1}\mathbf{1}^{\T}$ is the so-called geometric centering matrix~\cite{dokmanic_euclidean_2015}.
Once $\Matrix{X}$ has been recovered, a collection of points $\hat{\Matrix{P}} \in \mathbb{R}^{N \times K}$ can be obtained through the eigenvalue decomposition $\Matrix{X} = \Matrix{U}\Matrix{\Lambda}\Matrix{U}^{\T}$ by taking the first $K$ eigenvalues $\lambda_i$\footnote{Assuming the recovered realization is $K$-dimensional, only the first $K$ eigenvalues are nonzero.}:
\begin{equation}\label{eq:SVD}
	\hat{\mathbf{P}}^{\T} = \left[\diag{\sqrt{\lambda_{0}},\sqrt{\lambda_{1}}, \dots, \sqrt{\lambda_{K-1}}}, \mathbf{0}_{K\times N-K}\right]\mathbf{U}^{\T}.
\end{equation}
While the squared distances of points in $\hat{\mathbf{P}}$ recovered using this procedure match those defined in $\mathbf{D}$ exactly, they are in general not equal to the original $\Matrix{P}$.
This is due to the fact that inter-point distances are preserved under rigid transformations.
In order to recover the absolute positions of the points, at least $K$ points, known as \textit{anchors}, need to have their positions defined a priori.
These anchors are used to formulate the orthogonal Procrustes problem~\cite{dokmanic_euclidean_2015}, whose solution is the rotation (or reflection) $\mathbf{R} \in \LieGroupO{K}$ and translation $\mathbf{t} \in \mathbb{R}^{K}$ that transform the positions of anchors in $\hat{\Matrix{P}}$ to their predefined positions.
This transformation can then be applied to all the points in $\hat{\Matrix{P}}$ to yield the desired set of points $\Matrix{P}$.

\subsection{Euclidean Distance Matrix Completion}
\label{subsec:EDMCP}

As discussed in Section~\ref{sec:dgp}, a collection of points can be described using a graph $G = (V,E)$ that is weighted by inter-point distances.
If all inter-point distances are known, the graph is \textit{complete}, meaning that all of its edges and corresponding weights are prescribed.
This graph can be compactly represented by the EDM whose elements are
\begin{equation}
	\forall\, \{u,v\} \in E, \quad \Matrix{D}_{u,v} \triangleq d_{u,v}^{2}\, ,
\end{equation}
and a realization of the graph can be obtained simply by taking the collection of points recovered via \cref{eq:EDM2Gram} and \cref{eq:SVD}.
It follows that the recovered collection of points is in fact a solution of the DGP defined by Problem~\ref{prob:DGP}.
Conversely, many DGP instances are represented by graphs that only have a subset of edges defined a priori, resulting in an EDM with missing elements.

The problem of finding the missing elements in a partially defined EDM is known as the \textit{EDM completion problem}~\cite{dokmanic_euclidean_2015}, which is strongly NP-hard in general~\cite{libertiEuclideanDistanceGeometry2014}.
By defining the symmetric binary matrix $\Matrix{\Omega}$ with elements
\begin{equation} \label{eq:omega}
	\Matrix{\Omega}_{u,v} \triangleq
	\begin{cases}
		\begin{split}
			&1 \enspace \mathrm{if} \enspace \{u, v\} \in E, \\
			&0 \enspace \mathrm{otherwise},
		\end{split}
	\end{cases}
\end{equation}
we arrive at a common statement of the EDM completion problem as low-rank matrix completion:
\begin{align}\label{eq:EDM_completion}
	\min_{\Matrix{X}\in \mathcal{X}}  f(\Matrix{X}) \triangleq \frac{1}{2}\left\lVert \Matrix{\Omega}\odot(\widetilde{\Matrix{D}}-\mathcal{K}\left(\Matrix{X}\right))\right\rVert_{\mathrm{F}}^{2}\,,
\end{align}
where $\odot$ is the Hadamard (element-wise) matrix product, the subscript F denotes the Frobenius norm, and $\widetilde{\Matrix{D}}$ is the incomplete distance matrix.
Since the workspace dimension $K$ of the robot is known, the Gram matrix $\Matrix{X}$ defined in \cref{eq:gram_dist} is constrained to the manifold
\begin{equation}\label{eq:SPD}
	\mathcal{X} = \left\{\Matrix{P}\Matrix{P}^{\T} : \Matrix{P}\in \mathbb{R}_{*}^{N\times K} \right\}\, ,
\end{equation}
which is exactly the manifold of rank-$K$ positive-semidefinite matrices.
The NP-hardness of this problem originates from the non-convex constraint on the rank of $\Matrix{X}$, which can be relaxed in order to obtain a solution using Euclidean local search or  \textit{semidefinite programming}~\cite{alfakih1999solving}.
From~\cref{eq:SPD}, we see that relaxing the rank constraint expands the search space to collections of points with dimension greater than $K$.
This is fundamentally incompatible with physical problems, for which the dimension of points is fixed.
In fact, many interior point methods that solve the resulting convex semidefinite program tend to return a max-rank (and therefore potentially non-physical) solution~\cite{so_theory_2007}.

We can avoid explicit rank constraints in \cref{eq:EDM_completion} by using the Burer-Monteiro factorization~\cite{Burer_2004} to define the cost function directly in terms of the points $\Matrix{P} \in \mathbb{R}^{N \times K}$.
This results in a non-convex optimization problem
\begin{align}\label{eq:EDM_completion_iDGP2}
	f^{*} = \min_{\Matrix{P}\in \mathbb{R}^{N \times K}} & f(\Matrix{P}\Matrix{P}^\T),
\end{align}
which reduces the number of variables without changing the global minimum~\cite{fangEuclideanDistanceMatrix2012}.
Following the derivation in~\cite{Mishra_2011}, the gradient of \cref{eq:EDM_completion} with respect to $\Matrix{P}$ is defined as
\begin{equation}
	\nabla_{\Matrix{P}} f = 4\big(\Matrix{S} - \text{diag}\left(\Matrix{S}\Matrix{1}\right)\big) \, \Matrix{P},
\end{equation}
where $\Matrix{S} = \Matrix{\Omega}\odot(\widetilde{\Matrix{D}}-\mathcal{K}\left(\Matrix{X}\right))$ and $\text{diag}\left(\Matrix{S}\Matrix{1}\right)$ is a diagonal matrix formed by the vector $\Matrix{S}\Matrix{1}$.

Second-order optimization methods benefit from an exact analytical expression of the Hessian $\nabla_{\Matrix{P}}( \nabla_{\Matrix{P}}f) = \nabla_{\Matrix{P}}^{2}f$.
In~\cite{chuLEASTSQUARESEUCLIDEAN}, an analytic expression for the full Hessian of \cref{eq:EDM_completion_iDGP2} is obtained in an element-wise fashion.
However, this expensive computation can be avoided by observing that many optimization methods only require the Hessian-vector product~\cite{Pearlmutter_1994} and by making use of the identity
\begin{equation*}
	D_{\Matrix{Z}}(\nabla_{\Matrix{P}}f) \triangleq \nabla_{\Matrix{P}}^{2}f\cdot\Matrix{Z}\, ,
\end{equation*}
where $D_{\Matrix{Z}}(\nabla_{\Matrix{P}}f) = \frac{d \nabla f\left({(\Matrix{P} + t\Matrix{Z})(\Matrix{P} + t\Matrix{Z})}^{\T}\right)}{dt}$ is the Frech\'et (directional) derivative of the gradient in the direction $\Matrix{Z}$.
We then obtain
\begin{equation}\label{eq:hessian}
	\begin{aligned}
		\nabla_{\Matrix{P}}^{2}f\cdot \Matrix{Z} = & \,4\,D_{\Matrix{Z}}\big(\Matrix{S} - \text{diag}\left(\Matrix{S}\Matrix{1}\right)\big)\,\Matrix{P} \\
		+                                          & \,4\,\big(\Matrix{S} - \text{diag}\left(\Matrix{S}\Matrix{1}\right)\big)\,\Matrix{Z}\, .
	\end{aligned}
\end{equation}
Unlike gradient descent, second-order optimization methods feature a superlinear convergence rate, which is useful when highly accurate solutions of \cref{eq:EDM_completion_iDGP2} are required.

\subsection{Optimization on the Manifold}
\label{subsec:Riemannian}
As stated in Section~\ref{subsec:EDM}, inter-point distances are invariant to rigid transformations of the underlying point set.
It follows that the problem in \cref{eq:EDM_completion_iDGP2} is invariant to right-multiplication of the variable $\Matrix{P}$ with orthogonal matrices $\Matrix{Q} \in \LieGroupO{K}$.
This results in non-isolated minima, which have been shown to cause step evaluation issues for second-order methods~\cite{absil2009optimization, Journ_e_2010} when close to a solution, rendering the classical result of superlinear convergence void.
This issue is circumvented in~\cite{Journ_e_2010} by considering the set of all equivalence classes of the form
\begin{equation}\label{eq:equivalence}
	[\Matrix{P}] = \left\{\Matrix{P}\Matrix{Q} \vert \,\Matrix{Q} \in \mathbb{R}^{K \times K}\, , \Matrix{Q}^{\T}\Matrix{Q}= \Matrix{I}\right\}.
\end{equation}
Elements of this set constitute a manifold $\mathcal{M}$ which is the quotient of the set of full-rank $N \times K$ matrices by the orthogonal group $\LieGroupO{K}$:
\begin{equation}\label{eq:manifold}
	\mathcal{M} \triangleq \mathbb{R}_{*}^{N \times K}/\, \LieGroupO{K}.
\end{equation}
It follows that the overall search space is reduced by reformulating \cref{eq:EDM_completion_iDGP2} on the manifold $\mathcal{M}$ as
\begin{align}\label{eq:EDM_completion_3}
	\phi^{*} = \min_{\left[\Matrix{P}\right]\in \mathcal{M}} & \phi(\left[\Matrix{P}\right]),
\end{align}
where $\phi(\left[\Matrix{P}\right]) = f(\Matrix{P}\Matrix{P}^{T})$.
Further, it can be shown that the quotient manifold $\mathcal{M}$ has the structure of a Riemannian manifold~\cite{absil2009optimization}.
\begin{definition}[Riemannian Manifold]
	A Riemannian manifold $\mathcal{M}$ is a smooth manifold equipped with a positive-definite inner product $g_{\Matrix{P}} : T_{\Matrix{P}}\mathcal{M} \times  T_{\Matrix{P}}\mathcal{M} \to \mathbb{R}$ on the tangent space $T_{\Matrix{P}}\mathcal{M}$ at each point $\Matrix{P} \in \mathcal{M}$ that varies smoothly from point to point.
\end{definition}
\noindent Next, we provide an overview of how the EDM completion problem in \cref{eq:EDM_completion_3} can be adapted to the Riemannian setting, as described by Mishra et. al.~\cite{Mishra_2011}.
We refer the reader to~\cite{absil2009optimization} for a detailed treatment of quotient manifolds and their geometry.

Formally, the tangent space $T_{\Matrix{P}}\mathcal{M}$ of a point $\Matrix{P}$ in $\mathcal{M}$ is the space of all tangent vectors $\gamma'(0)$ to curves $\gamma: \mathbb{R} \rightarrow \mathcal{M}$, where $\gamma(0) = \Matrix{P}$.
The tangent space $T_{\Matrix{P}}\mathcal{M}$ is endowed with the inner product, also known as the Riemannian metric
\begin{equation}\label{eq:riemmanian_metric}
	g_{\Matrix{P}}(\Matrix{Z}_{1}, \Matrix{Z}_{2}) = \operatorname{Tr}(\Matrix{Z}^{\T}_{1}\Matrix{Z}_{2}), \quad \Matrix{Z}_{1},\Matrix{Z}_{2} \in T_{\Matrix{P}}\mathcal{M},
\end{equation}
which is the usual metric on $\mathbb{R}^{N\times K}$.
%
We can divide the tangent space into two orthogonal subspaces~\cite{Journ_e_2010}
\begin{equation*}
	T_{\Matrix{P}}\mathcal{M} = \mathcal{V}_{P}\mathcal{M} \oplus \mathcal{H}_{P}\mathcal{M}\, ,
\end{equation*}
where the tangent space to the equivalence classes in \cref{eq:equivalence} is known as the \textit{vertical subspace}
\begin{equation}
	\mathcal{V}_{P}\mathcal{M} = \left\{\Matrix{P}\Matrix{Q} \vert \,\Matrix{Q} \in \mathbb{R}^{K \times K}\, , \Matrix{Q}^{\T} + \Matrix{Q} = 0\right\}\, ,
\end{equation}
and the orthogonal complement of $\mathcal{V}_{P}\mathcal{M}$ in $T_{P}\mathcal{M}$ is known as the \textit{horizontal subspace}
\begin{equation}\label{eq:horizontal}
	\mathcal{H}_{\Matrix{P}}\mathcal{M} = \left\{\Matrix{Z} \in  T_{\Matrix{P}}\mathbb{R}_{*}^{N \times K} \vert \, \Matrix{Z}^{\T}\Matrix{P} = \Matrix{P}^{\T}\Matrix{Z}\right\}.
\end{equation}
Given a tangent vector $\Matrix{Z} \in T_{\Matrix{P}}\mathcal{M}$ at a point $\Matrix{P} \in \mathcal{M}$, we can recover the horizontal component from \cref{eq:horizontal} using the \textit{horizontal projection} operator.
\begin{definition}[Horizontal projection]
	The horizontal projection $P_{\mathcal{H}_{\Matrix{P}}\mathcal{M}}: T_{\Matrix{P}}\mathcal{M} \rightarrow \mathcal{H}_{\Matrix{P}}\mathcal{M}$, that recovers the horizontal lift $\Matrix{Z}_{\mathcal{H}}$ of the tangent vector $\Matrix{Z} \in T_{\Matrix{P}}\mathcal{M}$ corresponding to the horizontal subspace in \cref{eq:horizontal} is defined as
	\begin{equation}\label{eq:projection}
		P_{\mathcal{H}_{\Matrix{P}}\mathcal{M}}(\Matrix{Z}) = \Matrix{Z} - \Matrix{P}\Matrix{C}\, ,
	\end{equation}
	where $\Matrix{C}$ is a skew-symmetric matrix solving the Sylvester equation:
	\begin{equation*}
		\Matrix{C}\Matrix{P}^{\T}\Matrix{P} + \Matrix{P}^{\T}\Matrix{P}\Matrix{C} = \Matrix{P}^{\T}\Matrix{Z} - \Matrix{Z}^{\T}\Matrix{P}.
	\end{equation*}
\end{definition}
\noindent Using the projection operator $P_{\mathcal{H}_{\Matrix{P}}\mathcal{M}}$, derivatives of the function $\phi$ (defined on the manifold) are computed from the derivatives of the function $f$ (defined in Euclidean space)~\cite{absil2009optimization} by producing the horizontal lift of the Euclidean gradient of $f$ at point $\Matrix{P}$:
\begin{equation}\label{eq:manifold_grad}
	\text{grad}_{\Matrix{P}} \phi = P_{\mathcal{H}_{\Matrix{P}}\mathcal{M}}(\nabla_{\Matrix{P}} f).
\end{equation}
Similarly, by projecting the directional derivative of the gradient defined in \cref{eq:hessian}, we compute the Hessian-vector product of $\phi$ from that of $f$ as
\begin{equation}\label{eq:manifold_grad}
	\text{Hess}_{\Matrix{P}} \phi \left[ \Matrix{Z}\right] = P_{\mathcal{H}_{\Matrix{P}}\mathcal{M}}(D_{\Matrix{Z}}(\text{grad}_{\Matrix{P}}\phi)).
\end{equation}
Once the geometrically-correct derivatives have been produced, the step size is calculated and the point is moved along a descent direction on the manifold.
To ensure the resulting point remains on the manifold, we use the \textit{retraction} operator.
\begin{definition}[Retraction]
	In order to apply a direction of movement in $\mathcal{H}_{\Matrix{P}}\mathcal{M}$ while staying on the manifold $\mathcal{M}$, we use the \textit{retraction} operator, which is defined as
	\begin{equation}\label{eq:retr}
		R_{\Matrix{P}}(\Matrix{W}) = \Matrix{P} + \Matrix{W}.
	\end{equation}
\end{definition}
\noindent The projection and retraction operators allow for the adaptation of classic local optimization algorithms to the Riemannian setting~\cite{boumal2015lowrank, weiGuaranteesRiemannianOptimization2016}.
%


\section{Distance-Geometric Inverse Kinematics}
\label{sec:ikdb}
Articulated bodies, such as the robotic manipulator shown in Fig.~\ref{fig:system_overview}, are composed of revolute joints connected by rigid links.
Joint angles can be arranged in a vector $\boldsymbol{\Theta} \in \mathcal{C}$, where $\mathcal{C} \subseteq \Real^n$ is known as the \textit{configuration space}.
Analogously, the poses of one or multiple end-effectors constitute the \textit{task space} $\mathcal{T}$.
The mapping $F: \mathcal{C} \rightarrow \mathcal{T}$, relating joint variables to task space coordinates (e.g., end-effector poses) is known as the \textit{forward kinematics} of a robot.
This leads to the accompanying notion of \textit{inverse kinematics}, which is simply the inverse mapping $F^{-1}: \mathcal{T} \rightarrow \mathcal{C}$.
Since the mapping $F^{-1}$ is generally not injective (i.e., there are multiple solutions for a single target pose) and cannot be determined analytically, numerical methods are instead used to find a single solution.
This leads us to the definition of the central problem in this paper:
\begin{problem}[Inverse Kinematics]\label{prob:IK}
  Given a task space goal $\Vector{w} \in \mathcal{T}$, find a configuration $\boldsymbol{\Theta} \in \mathcal{C}$ such that the forward kinematic mapping $F(\boldsymbol{\Theta}) = \Vector{w}$ holds.
\end{problem}
Note that the IK problem extends to cases where various constraints on the position of the robot are present.
For example, it is often necessary to avoid collision with obstacles in the environment, as well as to respect limits on joint angles.
Therefore, while the core problem is relatively simple, it can be extended to include challenging instances that commonly occur in practice.
Unlike most approaches that attempt to directly solve \Cref{prob:IK} in terms of joint variables $\Vector{\Theta}$, we adopt an alternative formulation based on inter-point distances~\cite{maricInverseKinematicsSerial2020, porta_inverse_2005, Le_Naour_2019}.
Our approach, which allows us to trivially recover $\Vector{\Theta}$ from our distance-based solution, is developed in detail in Sections \ref{sec:kinematic_models} to \ref{sec:solution_recovery} and summarized in \cref{fig:system_overview}.
In \cref{sec:eqdgp}, we prove that our formulation of \cref{prob:IK} is equivalent to the distance geometry problem~\cite{libertiEuclideanDistanceGeometry2014} for a broad class of manipulators.
Finally, in \Cref{subsec:special_cases} we show how robots with planar and spherical joints constitute special cases for which our formulation can be trivially simplified.

\subsection{Kinematic Model}\label{sec:kinematic_models}
\begin{figure}
	\centering
	\includegraphics[width=0.8\columnwidth]{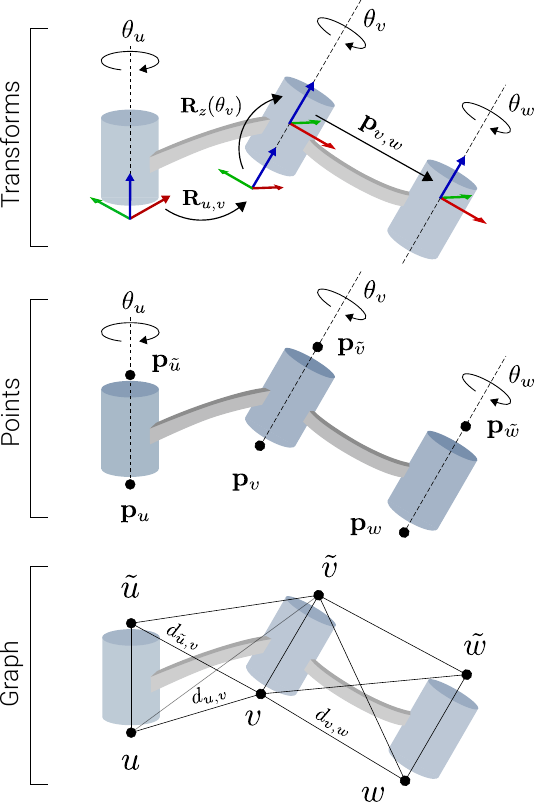}
	\caption{
		Visualization of the point placement used to describe a generic linkage of revolute joints and the corresponding graph representation.
		The top graphic shows the transformations used to obtain the poses of the joint coordinate frames.
		The middle graphic shows how pairs of points indexed by $(u, \tilde{u})$, $(v, \tilde{v})$, and $(w, \tilde{w})$ are placed along the rotation axis of their respective joints.
		The bottom graphic shows how the corresponding vertex representations form a graph whose edges are weighted by the known inter-point distances defined by link geometry in \cref{eq:dh_links}.
	}\label{fig:linkage_model}
\end{figure}

\begin{figure*}
	\centering
	\begin{subfigure}{0.245\textwidth}
		\centering
		\includegraphics[scale = 0.65]{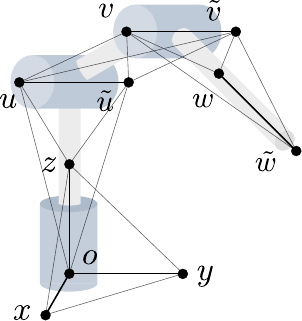}
		\caption{}\label{fig:revolute_link_constr}
	\end{subfigure}
	\begin{subfigure}{0.245\textwidth}
		\centering
		\includegraphics[scale = 0.65]{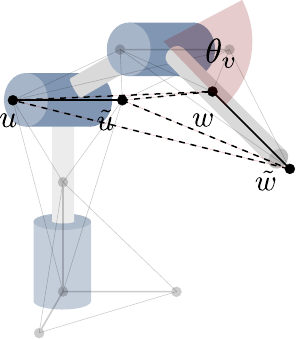}
		\caption{}\label{fig:revolute_angle_constr}
	\end{subfigure}
	\begin{subfigure}{0.245\textwidth}
		\centering
		\includegraphics[scale = 0.65]{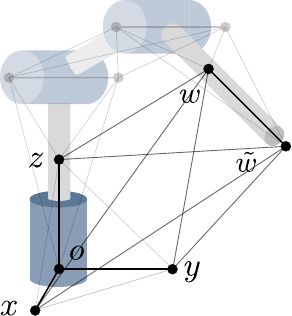}
		\caption{}\label{fig:revolute_goal_constr}
	\end{subfigure}
	\begin{subfigure}{0.245\textwidth}
		\centering
		\includegraphics[scale = 0.65]{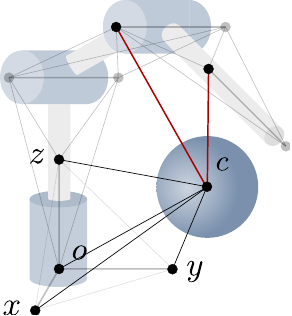}
		\caption{}\label{fig:revolute_chain_a}
	\end{subfigure}
	\caption{Visualization of the procedure in Section~\ref{sec:ikdb} for a 3-DOF revolute manipulator with joint limits. a) The solid black lines represent fixed distances between neighbouring joints and between base vertices. b) The dashed lines correspond to the distances constrained to some interval $[d^{-},\ d^{+}]$ determined by symmetric limits on the joint angle $\theta_{v}$. c) The solid black lines represent the distances fixed by setting a desired end-effector pose. d) Spherical obstacles are represented as vertices in the graph whose position is fixed by defining the distances to the base nodes. The lines drawn in red represent some distances whose lower bounds can be set in order to achieve obstacle avoidance.}
\label{fig:revolute_chain_big}
\end{figure*}
Articulated robots are comprised of a series of single-axis revolute joints oriented to provide a useful range of poses in their task spaces.
Our goal in this section is to construct a graph representation of such mechanisms that is compatible with the DGP formulation of \Cref{prob:iDGP}.
We achieve this by rigidly attaching a pair of points to the rotation axis of each joint in a manner similar to \cite{porta_inverse_2005}.
As the example in~\cref{fig:linkage_model} illustrates, the distances between points corresponding to neighbouring joints are invariant to changes in their angles during movement.
These distances are key to describing the degrees of freedom of the robot.

Consider the points attached to the rotation axes of neighbouring joints, shown in \cref{fig:linkage_model} and labelled $u$ and $v$.
We denote the positions of these points and the orientations of their respective coordinate frames as $\Vector{p}_{u}$, $\Vector{p}_{v}$, $\Matrix{R}_{u}$, and $\Matrix{R}_{v}$, respectively.
Further, $\Vector{p}_{u,v}$ and $\Matrix{R}_{u,v}$ denote the position and orientation of the fixed coordinate frame at $v$ in the rotating coordinate frame at $u$. The matrix $\Matrix{R}_{z}(\theta_u)$ rotates $\Vector{p}_v$ and its child joints about the axis $\hat{\Vector{z}}$ by the joint angle $\theta_{u}$.
Given the joint angles $\Vector{\Theta}$, these positions and orientations can be computed recursively as:
\begin{equation}\label{eq:root_points}
	\begin{split}
		\Matrix{R}_{v} &= \Matrix{R}_{u}\,\Matrix{R}_{z}(\theta_u)\,\Matrix{R}_{u,v}\, ,\\
		\Vector{p}_{v} &= \Vector{p}_{u} + \Matrix{R}_{u}\,\Matrix{R}_{z}(\theta_u) \,\Vector{p}_{u,v}\\
	\end{split}
	\,\forall\, (u,v):v \succ u,
\end{equation}
where $\succ$ indicates that the joint indexed by $u$ is the parent of the joint indexed by $v$ in the directed graph describing the robot structure.\footnote{We assume that the graph is a tree whose root is the fixed robot base and whose leaves are the end-effectors; we leave extensions of our formulation to parallel manipulators, which contain loops, for future work.}
For neighbouring joints, $\Vector{p}_{u,v}$ and $\Matrix{R}_{u,v}$ are determined by the robot model parametrization (e.g., the DH convention~\cite{hartenberg1955kinematic} or Lie groups~\cite{lynch2017modern}).
The second pair of points, labelled by $\tilde{u}$ and $\tilde{v}$, are obtained by translation along axis of rotation of each joint:
\begin{equation}\label{eq:aux_points}
	\begin{split}
		\Vector{p}_{\tilde{u}} &= \Vector{p}_u + \Matrix{R}_{u}\hat{\Vector{z}}\, ,\\
		\Vector{p}_{\tilde{v}} &= \Vector{p}_v + \Matrix{R}_{v}\hat{\Vector{z}}\, .
	\end{split}
\end{equation}
Together, these four points describe the relative position and orientation of the joints' rotation axes.
For a given robot, we index the points obtained using~\cref{eq:root_points} and~\cref{eq:aux_points} with the set of vertices $V_{s} \subset V$ of an incomplete graph $G = (V,E)$, where the set of edges $E$ is weighted by inter-point distances that are known a priori.
These distances describe the overall geometry and degrees of freedom of the robot, and they are invariant to the feasible motions of the robot (i.e., they remain constant in spite of changes to the configuration $\Matrix{\Theta}$):
\begin{equation}\label{eq:dh_links}
	\begin{split}
		d_{u, \tilde{u}} &= d_{v, \tilde{v}} = 1, \\
		d_{u, v} &= \lVert \Vector{p}_{u,v} \rVert, \\
		d_{u, \tilde{v}} &= \lVert \Vector{p}_{u,v} + \Matrix{R}_{u,v}\hat{\Vector{z}}\rVert,\\
		d_{\tilde{u}, v} &= \lVert \Vector{p}_{u,v} - \hat{\Vector{z}}\rVert,\\
		d_{\tilde{u}, \tilde{v}} &= \lVert \Vector{p}_{u,v} - \hat{\Vector{z}} + \Matrix{R}_{u,v}\hat{\Vector{z}}\rVert\, .
	\end{split}
	\,\forall\, (u,v) \in V_{s} \, .
\end{equation}
Depending on the specific link geometry, some identities in \cref{eq:dh_links} may vanish, allowing us to merge identical points and reduce the overall size of the graph.

\subsection{Constraints}
In addition to describing the rigid structure of a robot using the distances in \cref{eq:dh_links}, we also need to introduce distance constraints that encode features of specific IK problem instances.
To that end, this section describes how additional vertices and distances may be used to constrain the end-effector(s) of a robot, implement obstacle avoidance, and enforce joint limits.

\subsubsection{Base structure}
In order to uniquely specify points with known positions (i.e., end-effectors) in terms of distances, we define the ``base vertices" of a robot as  $V_b = \{o, x, y, z\} \subset V$, where $o$ is the root or base joint.
The elements of $V_b$ are used to form a coordinate frame with vertex $o$ at the origin, as shown in \cref{fig:revolute_link_constr}.
We achieve this by defining the following edge weights:
\begin{equation} \label{eq:base_structure_constraints}
	\begin{split}
		d_{o, x} &= d_{o, y} = d_{o, z} = 1, \\
		d_{x, y} &= d_{x, z} = d_{y, z} =\sqrt{2}.
	\end{split}
\end{equation}
This base structure is used in the remainder of the section to specify, in terms of distance constraints, end-effector poses and joint limit constraints for links starting at the root.
Note that the vertices $x$ and $y$ may be dropped in cases where the root joint angle is not limited to some interval, as this makes the solution set depend only on the distances from the goal points to the root $o$ and to each other.
Moreover, we may trivially reduce the graph size by using the points $\Vector{p}_{o}$ and $\Vector{p}_{\tilde{o}}$ attached to the base joint axis instead of the vertices $o$ and $z$~\cite{porta_inverse_2005}.

\subsubsection{End-effector pose} \label{subsubsec:end_effector_constraints}
We consider two types of task space goals for an end-effector:
\begin{enumerate}
	\item a 3-DOF position goal ($\mathcal{T}_p = \Real^3$), and
	\item a 5-DOF ``direction goal" defined by the position of two distinct points ($\mathcal{T}_d = \mathcal{T}_p \times \mathcal{T}_p$).\footnote{A full 6-DOF pose goal is not supported, as purely distance-geometric constraints cannot prevent reflections of the tool frame: this is equivalent to the assumption that the final joint has no angle limits when that axis is aligned with the final joint (e.g., for a common spherical wrist).}
\end{enumerate}
In both cases, $\Vector{w} \in \mathcal{T}$ is encoded as a set of points fixed to the end-effector.
These points are indexed by vertices $V_{e} \subset V$, and their positions are completely determined by the goal $\Vector{w}$.
Thus, an end-effector goal can be enforced by weighting the relevant edges with distances
\begin{equation}\label{eq:ee}
	d_{u,v} = \lVert \Vector{p}_{u} - \Vector{p}_{v}\rVert \, , \, u \in V_{e}\, , v \in V_{b}\, .
\end{equation}

\subsubsection{Joint limits}
Depending on the kinematic structure of the robot, symmetric joint limits can be represented by using distance intervals.
In \cref{fig:revolute_angle_constr}, we see that a given joint angle can be represented (up to sign) using up to four distinct distances:
\begin{equation*}
	\begin{aligned}
		d^{-}_{u,w} & = \sqrt{d^{2}_{u,v} + d^{2}_{v,w} - 2d_{u,v}d_{v,w}\cos(\theta_{v}^{\mathrm{lim}})}\, , \\
		d^{+}_{u,w} & = d_{u,v}+d_{v,w}\, .
	\end{aligned}
\end{equation*}
It follows that joint values can be restricted to symmetric intervals by constraining the distances between vertices assigned to the parent and child of a particular joint.
Generally, limiting one of the aforementioned distances to an interval results in a distinct set of joint angle limits, depending on the particular distance chosen.
However, it is important to note that the nature of the distance-based representation may make it difficult to implement arbitrary joint limits, as undesired symmetries may occur in certain ranges.

\subsubsection{Obstacle avoidance}
We extend our model to incorporate spherical obstacles whose centers are indexed with the set of vertices $V_{o} \subset V$.
The radius of each obstacle is given by the function $\rho: V_o \rightarrow \Real_+$.
Much like the elementary basis vectors in $V_b$, we can fix each center $\Vector{p}_c$, $\forall\, c \in V_o$, in the global reference frame and augment $G$ to include the constant inter-point distances for edges in $V_o \times V_o$ and $V_o \times V_b$:
\begin{equation}
	\lVert \Vector{p}_i - \Vector{p}_j \rVert = d_{i, j} \ \forall\, (i, j) \in V_o \times V_o \cup V_o \times V_b.
\end{equation}
Finally, points attached to the joints of a robot (i.e., those indexed by $V_s$) can be constrained to lie outside of each obstacle:
\begin{equation}
	\lVert \Vector{p}_u - \Vector{p}_c \rVert \geq \rho(c) \ \forall (u, c) \in V_s \times V_o.
\end{equation}
The radii given by $\rho$ can be inflated to account for the shape and size of the robot's joints or as a conservative safety measure.
For robots with long links, auxiliary points indexed by $V_{s'}$ can be easily added between points in $V_s$ for higher precision collision avoidance.

\subsection{Solution Recovery}
\label{sec:solution_recovery}
The remaining edge weights can be determined by completing the resulting partial EDM and a canonical realization $\Matrix{P}^{*}$ can be recovered.
This result is achieved by identifying the points indexed by $V_b$ with the origin and $K$ elementary basis vectors in $\Real^K$, which act as anchors in the solution of the Procrustes procedure~\cite{dokmanic_euclidean_2015} discussed in \Cref{subsec:EDM}.
For $K=3$, since the anchors form a right-handed frame that fully specifies a 6-DOF pose, $\Matrix{P}^*$ is unique.
In \Cref{prop:equivalence}, we prove that $\Matrix{P}^{*}$ will correspond to a unique feasible configuration $\boldsymbol{\Theta} \in \mathcal{C}$ if the successive joint axes of our robot are \textit{coplanar}.
Once $\Matrix{P}^{*}$ is obtained, we can iteratively recover all the joint values by solving
\begin{equation}\label{eq:rev_angle_rec}
	\begin{split}
		\theta_{u} = \min_{\theta} &\lVert \Matrix{R}_{u}\Matrix{R}_{z}(\theta)\Vector{p}_{u,v} - (\Vector{p}_{v} - \Vector{p}_{u}) \rVert^{2} \\
		+& \lVert \Matrix{R}_{u}\Matrix{R}_{z}(\theta)\Vector{p}_{u,v} + \Matrix{R}_v\hat{\Vector{z}} - (\Vector{p}_{\tilde{v}} - \Vector{p}_{u}) \rVert^{2}\, .
	\end{split}
\end{equation}
This problem can be reduced to finding the roots of a quartic polynomial and therefore admits a fast closed-form solution.
Alternatively, $\Matrix{\Theta}$ can be recovered from \cref{eq:root_points} and \cref{eq:aux_points} with inverse trigonometric functions.
Notably, the optimization formulation of \cref{eq:rev_angle_rec} is robust to numerical errors in the calculation of $\Matrix{P}^{*}$ by \Cref{alg:rtr}.


\subsection{Equivalence to Distance Geometry}
\label{sec:eqdgp}

In this section, we prove that the robot models (i.e., the proposed graph descriptions) discussed thus far allow us to solve inverse kinematics (\Cref{prob:IK}) by means of the DGP (\Cref{prob:DGP}).
The main result is as follows:

\begin{proposition}[IK $\equiv$ DGP] \label{prop:equivalence}
	Suppose that the kinematic model of \Cref{sec:kinematic_models} describes a robot whose successive joint axes are \emph{coplanar}.
	Then, the solutions to \Cref{prob:DGP} correspond one-to-one with the solutions to \Cref{prob:IK}.
	More precisely, if $\Vector{p}_u, \Vector{p}_{\tilde{u}}, \Vector{p}_v$, and $\Vector{p}_{\tilde{v}}$ are coplanar for all $v \succ u$, then for any end-effector target $\Vector{w} \in \mathcal{T}$, we have a corresponding DGP encoded in $G$ and there exists a bijection
	\begin{equation}
		Q: \mathcal{C}_{\Vector{w}} \rightarrow \mathcal{G},
	\end{equation}
	where $\mathcal{C}_{\Vector{w}} \subset \mathcal{C}$ is the set of configurations achieving $\Vector{w}$, and $\mathcal{G}$ is the space of all realizations (up to an arbitrary Euclidean transformation) of $G$.
\end{proposition}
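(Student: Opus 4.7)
The plan is to construct the bijection $Q$ explicitly from forward kinematics and to isolate the single step at which the coplanarity hypothesis is really needed. For $\Vector{\Theta} \in \mathcal{C}_{\Vector{w}}$, I define $Q(\Vector{\Theta})$ to be the equivalence class (under rigid motions of $\Real^K$) of the point cloud obtained by applying the recursions \cref{eq:root_points,eq:aux_points} at every joint, together with the base vertices $V_b$, the goal-determined end-effector vertices $V_e$, and any obstacle vertices $V_o$. That $Q$ is well defined reduces to checking that every weighted edge of $G$ --- the link distances \cref{eq:dh_links}, the base-frame distances \cref{eq:base_structure_constraints}, and the end-effector distances \cref{eq:ee} --- is actually realized as a Euclidean distance of $Q(\Vector{\Theta})$; the first two sets are built into the recursion, and the third uses the hypothesis $F(\Vector{\Theta}) = \Vector{w}$.

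For surjectivity, I would take any $[\Matrix{P}] \in \mathcal{G}$ and pick its canonical representative by Procrustes-aligning the base vertices $V_b$ with the world-frame basis; this alignment is unique in $\Real^3$ because \cref{eq:base_structure_constraints} encodes a right-handed orthonormal frame. Marching outward from the root along the kinematic tree, at each parent--child pair $(u,v)$ the positions $\Vector{p}_u, \Vector{p}_{\tilde u}$ are already fixed by prior steps and the five link distances in \cref{eq:dh_links} together with the entries of $\Matrix{P}$ indexed by $v, \tilde v$ determine the child frame uniquely; hence \cref{eq:rev_angle_rec} reduces to a one-variable least-squares problem with a zero-residual minimizer, yielding a valid $\theta_u$. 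The end-effector distances \cref{eq:ee} then force $F(\Vector{\Theta}) = \Vector{w}$, so the recovered $\Vector{\Theta}$ lies in $\mathcal{C}_{\Vector{w}}$ and $Q(\Vector{\Theta}) = [\Matrix{P}]$ by construction.

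Injectivity is the main obstacle, and is where the coplanarity hypothesis enters. Generically, the five distances among $\{u, \tilde u, v, \tilde v\}$ do not pin $\Vector{p}_v, \Vector{p}_{\tilde v}$ relative to the parent axis uniquely: there is a two-fold ambiguity corresponding to reflection of the child axis through the plane spanned by the parent axis and the displacement to $\Vector{p}_v$, which translates into a sign ambiguity in $\theta_u$. The coplanarity condition $\Vector{p}_u, \Vector{p}_{\tilde u}, \Vector{p}_v, \Vector{p}_{\tilde v}$ lie in a common plane forces this local reflection to fix the parent frame setwise, so that extending it to a rigid motion of the entire downstream subtree produces a global isometry of the augmented realization; the two candidate configurations then land in the same class of $\mathcal{G}$. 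I would finish by induction along the kinematic tree, verifying at each joint that any reflection ambiguity introduced during solution recovery lifts to a global Euclidean isometry because the base vertices $V_b$ anchor the parent chain and the end-effector vertices $V_e$ are rigidly tied to it, so the quotient in \cref{eq:equivalence} collapses exactly the relevant pair. Degenerate link geometries in which distances in \cref{eq:dh_links} vanish and vertices of $V_s$ merge would be handled by applying the same induction to the reduced graph.
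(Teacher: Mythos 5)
Your overall architecture --- define $Q$ by the forward-kinematics recursion, check well-definedness against the edge weights, then prove injectivity and surjectivity separately --- is the same as the paper's. However, you have attached the coplanarity hypothesis to the wrong half of the argument, and this leaves a gap in the half where it is actually needed. Injectivity does not require coplanarity: once the base vertices $V_b$ anchor a right-handed frame, two configurations map to the same class of $\mathcal{G}$ only if their canonical point clouds coincide, and then $\Vector{p}_u + \Matrix{R}_u\Matrix{R}_z(\theta_{u_1})\Vector{p}_{u,v} = \Vector{p}_u + \Matrix{R}_u\Matrix{R}_z(\theta_{u_2})\Vector{p}_{u,v}$ forces $\theta_{u_1}=\theta_{u_2}$ at the first joint where the angles could differ (arguing with $\Vector{p}_{\tilde{v}}$ instead when $\Vector{p}_{u,v}$ is parallel to the joint axis). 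Your injectivity paragraph is also internally inconsistent: concluding that ``the two candidate configurations then land in the same class of $\mathcal{G}$'' would be a counterexample to injectivity, not a proof of it.

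Coplanarity is genuinely needed for \emph{surjectivity}, and there your argument silently asserts exactly the fact that fails without it. You claim the five link distances in \cref{eq:dh_links} ``determine the child frame uniquely,'' so that \cref{eq:rev_angle_rec} has a zero-residual minimizer. But given $\Vector{p}_u$, $\Vector{p}_{\tilde{u}}$, and $\Vector{p}_v$, the remaining distances to $\Vector{p}_{\tilde{v}}$ generically admit two solutions related by reflection across the plane through those three anchors --- two tetrahedra of opposite chirality. A realization in $\mathcal{G}$ may place $\Vector{p}_{\tilde{v}}$ at the reflected position, which satisfies every distance constraint yet is produced by no joint angle, so no zero-residual $\theta_u$ exists and surjectivity fails. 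Coplanarity of $\Vector{p}_u, \Vector{p}_{\tilde{u}}, \Vector{p}_v, \Vector{p}_{\tilde{v}}$ is precisely the condition that collapses the two mirror-image placements into one, which is where the paper invokes it (cf.\ \cref{fig:proof}). To repair the proof, move the reflection discussion into the surjectivity step and use coplanarity there to exclude the spurious tetrahedron; the injectivity step then reduces to the short anchored-frame argument above, and the interval constraints are handled at the end by noting that restricting a bijection to a subset of its domain (or codomain) yields a bijection onto the corresponding image $\mathcal{G}' \subset \mathcal{G}$.
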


\begin{proof}
	We begin by recalling that the presence of base structure constraints (\cref{eq:base_structure_constraints}) in our model allows us to identify ``equivalent" realizations of a completion $G$ with a canonical point assignment $\Matrix{P}^*$.
	We will assume $K=3$ for the entire proof: $K=2$ is a special case which can be simplified by noting that all joints share the same axis of rotation and by removing the ``auxiliary" points defined by \cref{eq:aux_points}.

	For a given robot and $\Vector{w} \in \mathcal{T}$, the preceding sections described a set of DGP constraints which we write as the incomplete graph $G = (V,E)$.
	We will now proceed to prove that there is a bijection between $\mathcal{C}_{\Vector{w}}$ and the equivalence classes representing distinct solutions to $G$ (each equivalence class is represented by its canonical solution $\Matrix{P}^*$).
	It suffices to construct a map $Q$ that is injective and surjective, where $Q$ is simply the iterative procedure described by \cref{eq:root_points} and \cref{eq:aux_points}.

	\textit{Injective}: We will use a proof by contradiction. Suppose $\exists\,\Matrix{\Theta}_1 \neq \Matrix{\Theta}_2 \in \mathcal{C}_{\Vector{w}}$ such that $Q(\Matrix{\Theta}_1) = \Matrix{P}^* = Q(\Matrix{\Theta}_2)$.
	Let $u$ be the vertex label for a joint whose corresponding angle $\Matrix{\Theta}_1^u = \theta_{u_1} \neq \theta_{u_2} = \Matrix{\Theta}_2^u$, but has $\theta_{s_1} = \theta_{s_2}$ for all ancestors $s$ of $u$.
	At least one such $u$ is guaranteed to exist because \cref{eq:root_points} and \cref{eq:aux_points} tell us that the axis points $\Vector{p}_v$ and $\Vector{p}_{\tilde{v}}$ of joint $v \succ u$ only depend on $\theta_u$ and the positions of all its ancestor joints' points.
	This gives us:
	\begin{align}
		\Vector{p}_u + \Matrix{R}_u\Matrix{R}_z(\theta_{u_1})\Vector{p}_{u,v} & = \Vector{p}_u + \Matrix{R}_u\Matrix{R}_z(\theta_{u_2})\Vector{p}_{u,v}, \\ \notag
		\implies \Matrix{R}_z(\theta_{u_1})^\T \Matrix{R}_z(\theta_{u_2})     & = \Matrix{I},                                                            \\
		\implies \theta_{u_1}                                                 & = \theta_{u_2}, \notag
	\end{align}
	where we have assumed without loss of generality that $\Vector{p}_{u,v} \neq c\,\hat{\Vector{z}}$ for some $c \in \Real$.\footnote{In the special case where $\Vector{p}_{u,v} = c\,\hat{\Vector{z}}$, injectivity can be proved with $\Vector{p}_{\tilde{v}}$ instead of $\Vector{p}_{v}$. If $\Vector{p}_{\tilde{v}}$ is collinear with $\Vector{p}_{v}$, $\Vector{p}_{u}$, and $\Vector{p}_{\tilde{u}}$, then joint $v$ is a rotation around the same axis as joint $u$ and they can be effectively combined into a single joint.}
	This contradicts our premise that $\theta_{u_1} \neq \theta_{u_2}$ and proves that the mapping is injective.

	\textit{Surjective}: We will show that for each $\Matrix{P}^* \in \mathcal{G}$ there exists $\Matrix{\Theta} \in \mathcal{C}_{\Vector{w}}$ such that $Q(\Matrix{\Theta}) = \Matrix{P}^* \in \mathcal{G}$.
	By the definition of $\mathcal{G}$, $\lVert \Matrix{P}^*_v - \Matrix{P}^*_u \rVert = \lVert \Vector{p}_{u, v} \rVert$ and $\lVert \Matrix{P}^*_v - \Matrix{P}^*_{\tilde{u}} \rVert = \lVert \Vector{p}_{u, v} - \hat{\Vector{z}} \rVert$ for all $v \succ u$, therefore we can always find $\theta_u$ such that
	\begin{equation}
		\Matrix{P}^*_v = \Matrix{P}^*_u + \Matrix{R}_{u}\,\Matrix{R}_{z}(\theta_u) \,\Vector{p}_{u,v},
	\end{equation}
	as required by \cref{eq:root_points}.
	Since we have assumed that the points $\Vector{p}_u, \Vector{p}_{\tilde{u}}, \Vector{p}_v$, and $\Vector{p}_{\tilde{v}}$ are coplanar, the position of $\Matrix{P}^*_{\tilde{v}}$ is uniquely determined by the other three points and therefore \emph{must} take the form specified by $Q$ and given by $\theta_u$ in \cref{eq:aux_points}:
	\begin{equation}
		\Matrix{P}^*_{\tilde{v}} = \Matrix{P}^*_v + \Matrix{R}_{v}\hat{\Vector{z}} = \Matrix{P}^*_u + \Matrix{R}_{u}\,\Matrix{R}_{z}(\theta_u) \,\Vector{p}_{u,v} + \Matrix{R}_{v}\hat{\Vector{z}}.
	\end{equation}
	If the points were \emph{not} coplanar, the six distance constraints in \cref{eq:dh_links} would only specify their relative positions up to a reflection ambiguity (i.e., there would be two feasible tetrahedra with opposite ``chirality" or ``handedness").
	This situation is illustrated in \cref{fig:proof}.

	Finally, we address the interval constraints in $G$ which correspond to joint angle limits of $\mathcal{C}$ and obstacle avoidance in $\mathcal{T}$.
	We have established the desired bijection $Q: \mathcal{C}_{\Vector{w}} \rightarrow \mathcal{G}$ for IK problems defined only by equality constraints.
	Including interval constraints simply limits the space of DGP solutions to $\mathcal{G}' \subset \mathcal{G}$.
	Since the inverse of a bijection is a bijection, and a subset of the domain of a bijection induces a bijection, we have the desired $Q': \mathcal{C}_{\Vector{w}} \rightarrow \mathcal{G}'$ and the proof is complete.
\end{proof}

\Cref{prop:equivalence} establishes that the IK problem for a large class of robots can be formulated as a DGP.
Our experiments in \Cref{sec:experiments} demonstrate that the requirement of coplanar neighbouring axes is satisfied by many popular commercial manipulators.
Additionally, it is worth noting that planar and spherical manipulators satisfy this requirement by definition.

\begin{figure}
	\centering
	\includegraphics[width=0.5\columnwidth]{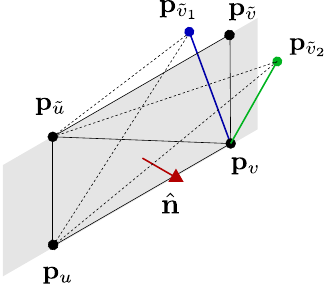}
	\caption{
	Illustration of the \emph{chirality} or \emph{handedness} issue that arises for non-coplanar pairs of joint axes $v \succ u$.
	If $\Vector{p}_u, \Vector{p}_{\tilde{u}}, \Vector{p}_v$, and $\Vector{p}_{\tilde{v}}$ are coplanar, our distance-geometric approach cannot introduce spurious solutions based on reflections of the true robot geometry.
	If $\Vector{p}_{\tilde{v}}$ were replaced with $\Vector{p}_{\tilde{v_1}}$ as shown, the spurious point $\Vector{p}_{\tilde{v_2}}$ would also satisfy the distance constraints in \cref{eq:root_points} because it is a reflection of $\Vector{p}_{\tilde{v_1}}$ across the plane containing $\Vector{p}_u, \Vector{p}_{\tilde{u}}$ and $\Vector{p}_v$.
	}\label{fig:proof}
\end{figure}


\subsection{Special Cases}\label{subsec:special_cases}
\begin{figure}[b]
  \centering
	\begin{subfigure}{0.2\textwidth}
    \centering
    \includegraphics[scale = 0.6]{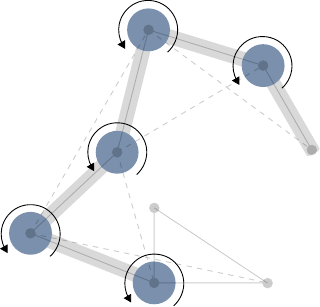}
    \caption{}\label{fig:planar_chain}
  \end{subfigure}
	\begin{subfigure}{0.2\textwidth}
    \centering
    \includegraphics[scale = 0.6]{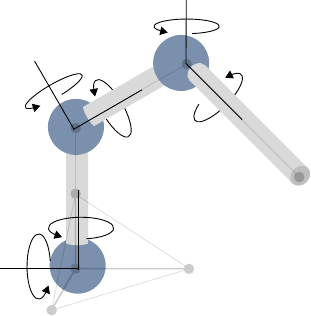}
    \caption{}\label{fig:spherical_chain}
  \end{subfigure}
  \caption{Visualization of planar and spherical mechanisms. These are two common examples of models for which our IK formulation uses a reduced number of variables.}\label{fig:planar_and_spherical_chain}
\end{figure}
In many cases, the generic DGP formulation of the IK problem presented in this section will result in kinematically redundant points in $V_s$.
One example is points $\Vector{p}_u$, $\Vector{p}_v$ that can be selected to coincide (i.e., $\Vector{p}_u =\Vector{p}_v$ for all values of $\Vector{\Theta}$) because they lie on joint axes $u \succ v$ that intersect.
While finding a generic strategy for generating DGP representations with minimal graph sizes is beyond the scope of this paper, we can identify two cases of practical importance where this reduction is both trivial and significant.

By reducing the point dimensionality to $K=2$, we can represent planar mechanisms using a single point per joint, as shown in~\cref{fig:planar_chain}.
Without loss of generality, this significantly reduces both the number of points and distances used to describe the IK problem, resulting in lower overall computation times.
For a detailed derivation of this simplified planar problem formulation, see \cite{maricInverseKinematicsSerial2020}.

A similar simplification can be obtained for a class of robots with joints allowing full or partial spherical motion, such as those of the human shoulder and hip.
In addition to humanoids, these spherical joints can be found in highly redundant snake-like robots used in applications that include pipe inspection and surgery~\cite{ananthanarayanan_real-time_2015}.
As shown in~\cref{fig:spherical_chain}, the joints can be represented as two orthogonal revolute joints in the same position, which again allows us to drop the points used to define the rotation axes, while still allowing us to limit the elevation angle and thereby restricting link motions to a spherical cone.
The remaining constraints can be trivially derived from those presented for the general case.



\section{Algorithm}
\label{sec:algorithm}

To summarize so far, in~\cref{sec:kinematic_models} we derived distance geometric IK formulations for a variety of different robot types.
In~\cref{subsec:EDMCP} and~\cref{subsec:Riemannian} we showed how a Riemannian optimization approach to low-rank matrix completion can be used to efficiently solve instances of the DGP.
In this section, we propose an extension to the optimization problem in \Cref{subsec:Riemannian}, making it compatible with the distance-geometric IK formulation.

Modifying the matrix $\Matrix{\Omega}$ from \cref{eq:omega} to select only those EDM elements corresponding to known invariant distances in $\widetilde{\Matrix{D}}$, we introduce the matrices $\Matrix{\Psi}_{-}$ and $\Matrix{\Psi}_{+}$ that select inter-point distances with lower or upper bounds defined by joint limits or obstacle avoidance constraints (i.e., the known elements of $\widetilde{\Matrix{D}}_{-}$ and $\widetilde{\Matrix{D}}_{+}$).
We can now formulate the DGP as the Riemannian optimization problem
\begin{equation}\label{eq:EDM_completion_iDGP}
	\begin{split}
		\min_{\left[\Matrix{P}\right]\in \mathcal{M}}  &\phi(\left[\Matrix{P}\right]) \triangleq \frac{1}{2}\left\lVert \Matrix{\Omega}\odot(\widetilde{\Matrix{D}}-\mathcal{K}\left(\Matrix{P}\Matrix{P}^{T}\right))\right\rVert_{\mathrm{F}}^{2}\\ + &\frac{1}{2}\left\lVert \text{max}\left\{\Matrix{\Psi}_{\pm}\odot(\pm \widetilde{\Matrix{D}}_{\pm} \mp \mathcal{K}(\Matrix{P}\Matrix{P}^{T})), 0 \right\}\right\rVert_{\mathrm{F}}^{2}\, .
	\end{split}
\end{equation}
The first term in this cost serves to enforce equality constraints on the distance matrix $\mathcal{K}(\Matrix{X})$, representing the constant set of distances defined by robot and task geometry.
The second term enforces joint limits and obstacle avoidance constraints by setting either a lower or upper bound on a set of distances related to the joint rotation angle or distances from the obstacle center, with the element-wise $\max$ operator producing a non-zero value when these bounds are violated.
Assuming that the joint limits are symmetric, the lower or upper bound on the distances will be implicitly enforced by the link length constraints, meaning that only one bound needs to be explicitly included in \cref{eq:EDM_completion_iDGP}.

\subsection{The Riemannian Trust-Region Algorithm}\label{sec:trust_region}
The quality of the configuration recovered by the reconstruction step outlined in \cref{sec:solution_recovery} depends on highly accurate solutions to~\cref{eq:EDM_completion_iDGP}.
As mentioned in \cref{subsec:EDMCP}, the superlinear convergence guarantee of second-order optimization methods helps to quickly obtain accurate solutions.
To maintain this guarantee for the non-isolated minima of~\cref{eq:EDM_completion_iDGP}, we use the second-order Riemannian trust region (RTR) algorithm introduced in \cite{genrtr}.
Briefly, the trust region algorithm focuses on sequentially solving the problem
\begin{equation}\label{eq:rtr_subproblem}
	\begin{split}
		\min_{\Matrix{Z} \in \mathcal{H}_{\Matrix{P}}\mathcal{M}} &m_{\Matrix{P}}(\Matrix{Z})\, ,\\
		\text{s.t.} \quad &g_{\Matrix{P}}(\Matrix{Z}, \Matrix{Z}) \leq \Delta^{2} \, ,
	\end{split}
\end{equation}
where
\begin{equation}\label{eq:linearlized_cost}
	m_{\Matrix{P}}(\Matrix{Z}) \triangleq \phi(\Matrix{P}) + g_{\Matrix{P}}(\Matrix{Z}, \text{grad}_{\Matrix{P}}\phi) + \frac{1}{2}g_{\Matrix{P}}(\Matrix{Z}, \text{Hess}_{\Matrix{P}} \phi \left[ \Matrix{Z} \right]).
\end{equation}
In other words, a quadratic approximation of the model constructed at point $\Matrix{P}$ informs a search for the optimal descent direction $\Matrix{Z}$ within a trust region of radius $\Delta$.
Accepting or rejecting a candidate descent direction and updating the trust region radius is based on the quotient
\begin{equation}\label{eq:rtr_ratio}
	\rho = \frac{\phi(\Matrix{P}) - \phi(R_{\Matrix{P}}(\Matrix{Z}))}{m_{\Matrix{P}}(0) - m_{\Matrix{P}}(\Matrix{Z})}.
\end{equation}
A basic variant of this procedure is formalized in \Cref{alg:rtr}, where the subproblem in \cref{eq:rtr_subproblem} is approximately solved using the truncated conjugate gradient method introduced in \cite{genrtr}.
The numerical cost per iteration of this approach was shown in~\cite{Mishra_2011} to be $\mathcal{O}(dK + NK + NK^{2} + K^{3})$, where $d$ is the number of known entries in the EDM.
Since $K \in [2,3]$ for IK problems, the complexity of \Cref{alg:rtr} is linear with respect to the number of points and known distances.
Similarly to conventional trust region methods, the dominant computational bottleneck of RTR is the Hessian calculation, making the additional overhead added by the horizontal projection in~\cref{eq:projection} comparatively insignificant.
\begin{algorithm}[t]
	\SetAlgoLined
	\KwIn{Initial point $\Matrix{P}_{0}$}
	\KwData{Cost function $\phi$}
	\Parameters{$\bar{\Delta} > 0$, $\Delta_{0} \in [0, \bar{\Delta}]$, $\rho' \in [0, \frac{1}{4})$}
	\KwResult{Solution $\Matrix{P}_{N}$}
	\For {$k = 0,1,\,\dots\, N$}{
		$\Matrix{Z}_{k} \leftarrow$~\cref{eq:rtr_subproblem} and~\cref{eq:linearlized_cost}\Comment*[f]{Compute step}\\
		$\rho \leftarrow$~\cref{eq:rtr_ratio}\\
		\uIf(\Comment*[f]{Update TR radius}){$\rho < \frac{1}{4}$}{
			$\Delta_{k+1} \leftarrow \frac{1}{4}\Delta_{k}$
		}
		\uElseIf{$\rho > \frac{3}{4}$ and $\lVert\Matrix{Z}_{k}\rVert = \Delta_{k}$}{
			$\Delta_{k+1} \leftarrow \min(2\Delta_{k}, \bar{\Delta})$
		}
		\Else{$\Delta_{k+1} \leftarrow \Delta_{k}$}
		\uIf(\Comment*[f]{Accept or reject step}){$\rho > \rho'$}
		{$\Matrix{P}_{k+1} \leftarrow R_{\Matrix{P}}(\Matrix{Z}_{k})$ \Comment*[f]{Retraction (\cref{eq:retr})}}
		\Else{$\Matrix{P}_{k+1} \leftarrow \Matrix{P}_{k}$}
	}
	\caption{Riemannian Trust-Region (RTR)}\label{alg:rtr}
\end{algorithm}

\subsection{Bound Smoothing}
\label{sec:bound_smoothing}
It is well established that the convergence of local optimization methods depends on the choice of starting point.
The distance-based parametrization of the IK problem has the unique advantage of admitting informed initializations generated via a procedure known as \textit{bound smoothing}~\cite{havelDistanceGeometryTheory2002}.
First, we take the known set of distance bounds generated by the robot structure and problem constraints as described in Section~\ref{sec:ikdb} and form the graph $G$.
Next, a bipartite graph is formed with two copies of $G$, with the vertices of the two graphs connected by edges weighted by their negative respective distance.
Finally, the resulting all-pairs shortest path problem is solved using the Floyd-Warshall algorithm in $\mathcal{O}(|V|^{3})$ time.
For the IK problems analyzed in this paper, the computation time of this search is on the order of 1 ms for a simple Python implementation on a laptop computer.
The lower bounds on the distances between vertices can now be obtained by taking the shortest path between their representations in different subgraphs, with the upper bounds being the shortest path within one subgraph.
An initial guess for the distance matrix, known as a pre-EDM, can then be generated by sampling individual elements within these bounds.
Note that this procedure can be applied iteratively to produce even better approximations.
Moreover, the computation time can be further reduced through the use of parallelization.

The full algorithm is described in~\cref{alg:rik}.
We assume that an incomplete graph $G$ describing the IK problem and an initializing conformation $\Matrix{P}_{0}$ are provided as input.
Alternatively, we may also generate an initialization using the bound smoothing procedure described in the previous section.
Next, we solve the local optimization problem using~\cref{alg:rtr} and transform the resulting configuration back to the canonical coordinate system.
Finally, we recover the joint angle variables using~\cref{eq:rev_angle_rec}.

\begin{algorithm}[t]
	\SetAlgoLined
	\KwIn{Incomplete graph $G$, Initial point $\Matrix{P}_{0}$}
	\KwResult{Configuration $\Matrix{\Theta}$}
	Define $\phi$ from $G$ via~\cref{eq:EDM_completion_iDGP}\\
	\If{\textbf{not} $\Matrix{P}_{0}$}
	{
		Get $\Matrix{P}_{0}$ using bound smoothing (\cref{sec:bound_smoothing})
	}
	$\Matrix{P}^{*} \leftarrow \text{RTR}(\Matrix{P}_{0}; \phi)$\\
	$\Matrix{P} \leftarrow \text{OrthogonalProcrustes}(\Matrix{P}^{*})$\\
	Obtain $\Matrix{\Theta}$ from $\Matrix{P}$ using~\cref{eq:rev_angle_rec}
	\caption{Inverse Kinematics}\label{alg:rik}
\end{algorithm}


\section{Experiments}
\label{sec:experiments}

In this section, we present an analysis of the performance of our method relative to multiple benchmark algorithms in a series of simulation studies.
A variety of 2D and 3D kinematic models, including commercial manipulators and hyper-redundant kinematic chains and trees, were tested with and without joint angle limits and spherical obstacles.
All Python code used in our experiments is freely available in our Git repository.\footnote{\texttt{\url{https://github.com/utiasSTARS/GraphIK}}}

\subsection{Experimental Methodology}
\label{sec:methodology}
The results for each experiment were obtained with the following procedure:

\begin{enumerate}
	\item generate a kinematic model (e.g., a planar robot with links arranged in a perfect binary tree  with randomly generated symmetric joint angle limits);
	\item randomly sample an angle configuration $\Matrix{\Theta}_g \in \mathcal{C}$ for this model from a uniform distribution over the joint angle limits;
	\item determine the target position(s) or pose(s) $\Vector{w} \in \mathcal{T}$ of the end-effector(s) using $\Matrix{\Theta}_g$ and the model's forward kinematics (i.e., $\Vector{w} = F(\Matrix{\Theta}_g)$);
	\item run each IK algorithm on the problem instance defined by the kinematic model and the goal $\Vector{w}$ using $\Matrix{\Theta}_0 = \Vector{0}$ as the initial configuration;
	\item record statistics for each algorithm, including the number of iterations required until convergence, runtime, end-effector error(s), and joint angle limit violations;
	\item repeat steps 2-5 above $N$ times and summarize the statistics.
\end{enumerate}
In all tables and figures, the success rate of an algorithm for a particular experiment was determined as the portion of runs where the solution satisfied all of the following criteria:
\begin{itemize}
	\item joint angle limits were obeyed to within a tolerance of 1\% of the bound magnitude;
	\item obstacle avoidance constraints were obeyed to within a tolerance of 0.01 m;
	\item the sum of the position errors of the end-effectors was less than 0.01 m; and
	\item the sum of the rotation errors of the end-effectors was less than 0.01 rad.
\end{itemize}
The success rates of experiments reported in the tables and waterfall curves correspond to 95$\%$ Jeffreys confidence intervals~\cite{tonycai_one-sided_2005}.
The statistics on solution error, runtime, and the number of iterations required for convergence that appear in various tables and figures are computed using the entire set of $N$ runs (not just the successful portions).
We denote joint angle-limited variants of experiments with a $+$ symbol (e.g., results labelled ``6-DOF$+$'' use the same robot as those labelled ``6-DOF'' but additionally enforce randomly generated joint angle limits).

In all relevant figures, the results from our algorithm are labelled \texttt{RTR} for ``Riemannian Trust Region.''
When the bound smoothing procedure of \Cref{sec:bound_smoothing} is used, \texttt{-B} is appended to the label (i.e., \texttt{RTR-B}).
For each experiment, we compare our approach with a variety of benchmark algorithms from the optimization and IK literature.
While we report runtime statistics for many of our experiments, we stress that our selection of baseline algorithms is designed to illustrate a variety of techniques and highlight the unique advantages of our novel distance-geometric formulation.
Therefore, we did not choose particularly fast or industrially-proven implementations and opted for Python variants of all algorithms.
Finally, all experiments were performed on a laptop computer with a 2.9 GHz Dual-Core Intel Core i5 processor.

\subsection{Benchmark Algorithms} \label{subsec:benchmarks}
As discussed in \cref{sec:background}, generalized approaches to solving the IK problem most often resort to numerical methods that search for a joint configuration $\Matrix{\Theta} \in \mathcal{C}$ satisfying the defined constraints.
Among the large variety of such approaches, local nonlinear programming is perhaps the most common and versatile.
Therefore, we primarily compare our algorithm to the formulation
\begin{align} \label{prob:slsqp}
	\min_{\Vector{\Theta} \in \mathcal{C}} & \quad \Norm{\Vector{e}\left(F(\Matrix{\Theta}), \Vector{w}\right)}^2                                                                     \\
	\text{s.t.}                     & \quad \Norm{\Vector{p}_i(\Vector{\Theta}) - \Vector{c}_j}^2 \geq l^2_j \enspace \forall\, i \in V_{s}, \enspace \forall\, j \in V_{o},\notag \\
	                                       & \quad \Vector{\Theta}_{\mathrm{min}} \le \Vector{\Theta} \le \Vector{\Theta}_{\mathrm{max}},\notag
\end{align}
where $F(\Matrix{\Theta})$ is the forward kinematic mapping and $\Vector{w}$ is the goal, as defined in \Cref{prob:IK}.
The vector-valued function $\Vector{e}$ in the objective represents an appropriate error for the task space.
The inequality constraints serve to enforce obstacle avoidance and joint limits.
Note that while the error $\Vector{e}$ can also be driven to zero with an equality constraint, we have empirically found that incorporating the error in an objective function results in higher success rates---this phenomenon is also reported in~\cite{beeson2015trac}.

\subsection{Hyper-redundant and Tree-like Robots} \label{sec:planar_results}
\begin{table*}
	\centering
	\resizebox{\textwidth}{!}{\begin{tabular}{lrcrcrccrcc}
    \toprule
    Method & \multicolumn{2}{c}{\texttt{trust-exact/constr}} & \multicolumn{2}{c}{\texttt{FABRIK}} & \multicolumn{3}{c}{\texttt{RTR}} & \multicolumn{3}{c}{\texttt{RTR-B}} \\
    \cmidrule(r){2-3}  \cmidrule(r){4-5}   \cmidrule(r){6-8} \cmidrule(r){9-11}
     &Success ($\%$) &Iter. $\mu$ ($\sigma$) &Success ($\%$) &Iter. $\mu$ ($\sigma$) &Success ($\%$) &Iter. $\mu$ ($\sigma$) &Runtime [ms] $\mu$ ($\sigma$) &Success ($\%$) &Iter. $\mu$ ($\sigma$) &Runtime [ms] $\mu$ ($\sigma$)\\
    \midrule
    \midrule
    6-DOF & $100.0 \pm 0.1$ & 26 (5) & $100.0 \pm 0.1$ & 10 (19) & $100.0 \pm 0.1$ & 16 (2) & 2.28 (0.39) & $100.0 \pm 0.1$ & 9 (2) & 1.54 (0.37) \\
    6-DOF$+$ & $72.0 \pm 2.8$ & 35 (14) & $36.0 \pm 3.0$ & 1333 (916) & $91.0 \pm 1.7$ & 32 (22) & 5.24 (2.83) & $\mathbf{98.0 \pm 0.9}$ & 24 (17) & 5.02 (3.01) \\
    10-DOF & $100.0 \pm 0.1$ & 28 (2) & $100.0 \pm 0.1$ & 9 (13) & $100.0 \pm 0.1$ & 20 (2) & 3.27 (0.53) & $100.0 \pm 0.1$ & 10 (2) & 2.25 (0.43) \\
    10-DOF$+$ & $97.0 \pm 1.0$ & 43 (54) & $59.0 \pm 3.0$ & 856 (971) & $88.0 \pm 2.0$ & 74 (101) & 14.6 (1.15) & $\mathbf{98.0 \pm 0.8}$ & 23 (46) & 5.47 (3.85) \\
\end{tabular}

}
	\caption{Results for planar chain manipulators over 1,000 random experiments with pose goals. The $+$ indicates joint angle limits.}
	\label{tab:planar_chain_results}
\end{table*}

\begin{table*}
	\centering
	\resizebox{\textwidth}{!}{\begin{tabular}{lrcrcrccrcc}
    \toprule
    Method & \multicolumn{2}{c}{\texttt{trust-exact/constr}} & \multicolumn{2}{c}{\texttt{FABRIK}} & \multicolumn{3}{c}{\texttt{RTR}} & \multicolumn{3}{c}{\texttt{RTR-B}} \\
    \cmidrule(r){2-3}  \cmidrule(r){4-5}   \cmidrule(r){6-8} \cmidrule(r){9-11}
     &Success ($\%$) &Iter. $\mu$ ($\sigma$) &Success ($\%$) &Iter. $\mu$ ($\sigma$) &Success ($\%$) &Iter. $\mu$ ($\sigma$) &Runtime [ms] $\mu$ ($\sigma$) &Success ($\%$) &Iter. $\mu$ ($\sigma$) &Runtime [ms] $\mu$ ($\sigma$)\\
    \midrule
    \midrule
    14-DOF & $54.0 \pm 3.1$ & 14 (4) & $56.0 \pm 3.1$ & 949 (964) & $64.0 \pm 3.0$ & 20 (4) & 4.39 (1.00) & $\mathbf{67.0 \pm 2.9}$ &  9 (3) & 4.12 (1.11) \\
    14-DOF$+$ & $90.0 \pm 1.9$ & 189 (423) & $85.0 \pm 2.2$ & 463 (716) & $\mathbf{96.0 \pm 1.2}$ & 18 (4) & 4.98 (1.40) & $96.0 \pm 1.2$ & 8 (2) & 4.79 (1.76) \\
    30-DOF & $16.0 \pm 2.3$ & 23 (7) & $12.0 \pm 2.0$ & 1848 (479) & $\mathbf{20.0 \pm 2.5}$ & 34 (10) & 20.75 (8.52) & $18.0 \pm 2.4$ & 18 (10) & 45.25 (32.38) \\
    30-DOF$+$ & $49.0 \pm 3.1$ & 818 (830) & $0.0 \pm 0.3$ & 2000 (0) & $\mathbf{85.0 \pm 2.2}$ & 36 (19) & 31.64 (17.95) & $85.0 \pm 2.2$ & 20 (15) & 43.98 (32.62) \\
\end{tabular}

}
	\caption{Results for planar tree manipulators over 1,000 random experiments with pose goals. The $+$ indicates joint angle limits.}
	\label{tab:planar_tree_results}
\end{table*}

We begin by analyzing the performance of our algorithm for hyper-redundant and tree-like planar robots.
This approach helps to avoid introducing confounding factors in the analysis, as the choice of any particular revolute manipulator opaquely affects the difficulty of IK problems.
More importantly, these mechanisms allow us to systematically scale the size and number of constraints of the IK problem by adding joints and introducing multiple end-effectors, while minimizing the number of redundant points and fixed distances as noted in~\Cref{subsec:special_cases}.
Since the full pose of a planar end-effector is determined by its position and the position of its parent joint, the error $\Vector{e}$ in the cost function of the joint angle-based approach in \cref{prob:slsqp} can simply be defined as the difference between the end-effectors' and their parent joints' positions and their goal positions in $\Vector{w}$.
We solve~\cref{prob:slsqp} using second-order trust region methods with similar convergence guarantees to those provided by our algorithm, referring to the unconstrained method as \texttt{trust-exact} and the joint angle-constrained method as \texttt{trust-constr}.
Our open source code provides an interface for testing this problem formulation with other solvers provided by \texttt{scipy.optimize}.
In addition to the formulation in \cref{prob:slsqp}, we implemented the \texttt{FABRIK} \cite{Aristidou_2011} heuristic IK solver in Python as an additional benchmark for our experiments.
In contrast to generic nonlinear solvers, this is an IK-specific heuristic method tailored to planar and spherical robots.

For the experiments in this section, all algorithms were allowed a total of 2,000 iterations and used a numerical tolerance of $10^{-9}$ for all stopping criteria.
Where applicable, the magnitude of the gradient of the cost function or Lagrangian was used as the stopping criterion.
Otherwise, the magnitude of the cost function or norm of the variable change in one iteration was used.
All other parameters were assigned their default values as provided by the \texttt{pymanopt}~\cite{townsend2016pymanopt} and \texttt{scipy.optimize} libraries~\cite{virtanen2020scipy}.
Since the implementations of these benchmark algorithms were not extensively tuned for performance, we place greater emphasis on the number of iterations taken by each algorithm as a more meaningful statistic than runtime in the results to follow.
\texttt{FABRIK} was allowed a maximum of 2,000 full forward and backward iterations per problem instance.

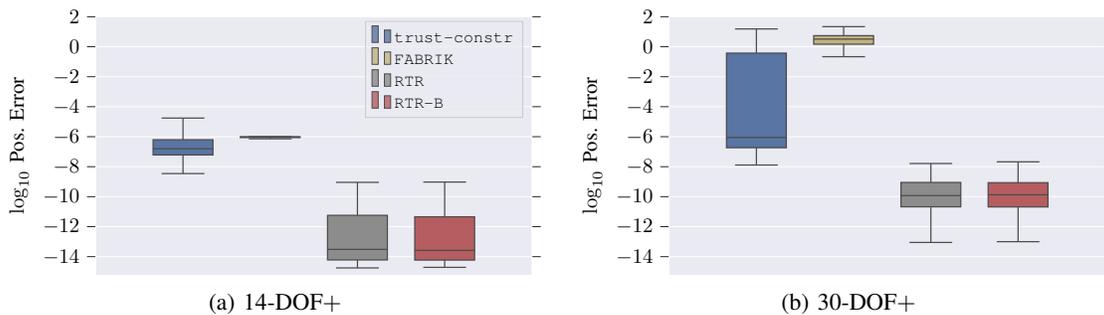
\begin{figure*}
	\centering
	\begin{subfigure}[t]{0.4\textwidth}
		\centering
		\begin{adjustbox}{width=\linewidth}
\begin{tikzpicture}

	\definecolor{color0}{rgb}{0.917647058823529,0.917647058823529,0.949019607843137}
	\definecolor{color1}{rgb}{0.71078431372549,0.363725490196079,0.375490196078431}
	\definecolor{color2}{rgb}{0.347058823529412,0.458823529411765,0.641176470588235}
	\definecolor{color3}{rgb}{0.756862745098039,0.700980392156863,0.498039215686275}

	\begin{axis}[
			axis background/.style={fill=color0},
			axis line style={white},
			legend cell align={left},
			legend style={fill opacity=0.8, draw opacity=1, text opacity=1, draw=white!80!black, fill=color0, nodes={scale=0.8, transform shape}},
			tick align=outside,
			x grid style={white},
			xmajorticks=false,
			xmin=-0.5, xmax=0.5,
			xtick style={color=white!15!black},
			xtick={0},
			xticklabels={14},
			y grid style={white},
			ylabel={$\log_{10}$ Pos. Error},
			ymajorgrids,
			ymin=-15.2500930724719, ymax=2.05863736751692,
			ytick style={color=white!15!black},
			ytick={-16,-14,-12,-10,-8,-6,-4,-2, 0, 2, 4},
			width=9cm,
			height=6cm
			]
		\path [draw=white!29.8039215686275!black, fill=color2, semithick]
		(axis cs:-0.3686,-7.21757334361375)
		--(axis cs:-0.2314,-7.21757334361375)
		--(axis cs:-0.2314,-6.2005410032743)
		--(axis cs:-0.3686,-6.2005410032743)
		--(axis cs:-0.3686,-7.21757334361375)
		--cycle;
		\path [draw=white!29.8039215686275!black, fill=color3, semithick]
		(axis cs:-0.1686,-6.05873361834331)
		--(axis cs:-0.0314,-6.05873361834331)
		--(axis cs:-0.0314,-6.00345663867157)
		--(axis cs:-0.1686,-6.00345663867157)
		--(axis cs:-0.1686,-6.05873361834331)
		--cycle;
		\path [draw=white!29.8039215686275!black, fill=white!54.9019607843137!black, semithick]
		(axis cs:0.0314,-14.2182172349764)
		--(axis cs:0.1686,-14.2182172349764)
		--(axis cs:0.1686,-11.2437721232048)
		--(axis cs:0.0314,-11.2437721232048)
		--(axis cs:0.0314,-14.2182172349764)
		--cycle;
		\path [draw=white!29.8039215686275!black, fill=color1, semithick]
		(axis cs:0.2314,-14.2279428203707)
		--(axis cs:0.3686,-14.2279428203707)
		--(axis cs:0.3686,-11.3413387821919)
		--(axis cs:0.2314,-11.3413387821919)
		--(axis cs:0.2314,-14.2279428203707)
		--cycle;
		\draw[draw=white!29.8039215686275!black,fill=color2,line width=0.3pt] (axis cs:0,0) rectangle (axis cs:0,0);
		\addlegendimage{ybar,ybar legend,draw=white!29.8039215686275!black,fill=color2,line width=0.3pt}
		\addlegendentry{\texttt{trust-constr}}

		\draw[draw=white!29.8039215686275!black,fill=color3,line width=0.3pt] (axis cs:0,0) rectangle (axis cs:0,0);
		\addlegendimage{ybar,ybar legend,draw=white!29.8039215686275!black,fill=color3,line width=0.3pt}
		\addlegendentry{\texttt{FABRIK}}

		\draw[draw=white!29.8039215686275!black,fill=white!54.9019607843137!black,line width=0.3pt] (axis cs:0,0) rectangle (axis cs:0,0);
		\addlegendimage{ybar,ybar legend,draw=white!29.8039215686275!black,fill=white!54.9019607843137!black,line width=0.3pt}
		\addlegendentry{\texttt{RTR}}

		\draw[draw=white!29.8039215686275!black,fill=color1,line width=0.3pt] (axis cs:0,0) rectangle (axis cs:0,0);
		\addlegendimage{ybar,ybar legend,draw=white!29.8039215686275!black,fill=color1,line width=0.3pt}
		\addlegendentry{\texttt{RTR-B}}

		\addplot [semithick, white!29.8039215686275!black, forget plot]
		table {%
				-0.3 -7.21757334361375
				-0.3 -8.45847554225671
			};
		\addplot [semithick, white!29.8039215686275!black, forget plot]
		table {%
				-0.3 -6.2005410032743
				-0.3 -4.75800408879936
			};
		\addplot [semithick, white!29.8039215686275!black, forget plot]
		table {%
				-0.349 -8.45847554225671
				-0.251 -8.45847554225671
			};
		\addplot [semithick, white!29.8039215686275!black, forget plot]
		table {%
				-0.349 -4.75800408879936
				-0.251 -4.75800408879936
			};
		\addplot [semithick, white!29.8039215686275!black, forget plot]
		table {%
				-0.1 -6.05873361834331
				-0.1 -6.14045937793233
			};
		\addplot [semithick, white!29.8039215686275!black, forget plot]
		table {%
				-0.1 -6.00345663867157
				-0.1 -6.00001205034161
			};
		\addplot [semithick, white!29.8039215686275!black, forget plot]
		table {%
				-0.149 -6.14045937793233
				-0.051 -6.14045937793233
			};
		\addplot [semithick, white!29.8039215686275!black, forget plot]
		table {%
				-0.149 -6.00001205034161
				-0.051 -6.00001205034161
			};
		\addplot [semithick, white!29.8039215686275!black, forget plot]
		table {%
				0.1 -14.2182172349764
				0.1 -14.7504697875351
			};
		\addplot [semithick, white!29.8039215686275!black, forget plot]
		table {%
				0.1 -11.2437721232048
				0.1 -9.04336099540379
			};
		\addplot [semithick, white!29.8039215686275!black, forget plot]
		table {%
				0.051 -14.7504697875351
				0.149 -14.7504697875351
			};
		\addplot [semithick, white!29.8039215686275!black, forget plot]
		table {%
				0.051 -9.04336099540379
				0.149 -9.04336099540379
			};
		\addplot [semithick, white!29.8039215686275!black, forget plot]
		table {%
				0.3 -14.2279428203707
				0.3 -14.7122437698007
			};
		\addplot [semithick, white!29.8039215686275!black, forget plot]
		table {%
				0.3 -11.3413387821919
				0.3 -9.02380628249027
			};
		\addplot [semithick, white!29.8039215686275!black, forget plot]
		table {%
				0.251 -14.7122437698007
				0.349 -14.7122437698007
			};
		\addplot [semithick, white!29.8039215686275!black, forget plot]
		table {%
				0.251 -9.02380628249027
				0.349 -9.02380628249027
			};
		\addplot [semithick, white!29.8039215686275!black, forget plot]
		table {%
				-0.3686 -6.80603988877453
				-0.2314 -6.80603988877453
			};
		\addplot [semithick, white!29.8039215686275!black, forget plot]
		table {%
				-0.1686 -6.02136418529904
				-0.0314 -6.02136418529904
			};
		\addplot [semithick, white!29.8039215686275!black, forget plot]
		table {%
				0.0314 -13.5146126197731
				0.1686 -13.5146126197731
			};
		\addplot [semithick, white!29.8039215686275!black, forget plot]
		table {%
				0.2314 -13.582679753653
				0.3686 -13.582679753653
			};
	\end{axis}

\end{tikzpicture}
		\end{adjustbox}
		\caption{14-DOF$+$}
		\label{fig:boxplot_planar_tree_14DOF}
	\end{subfigure}
	~
	\begin{subfigure}[t]{0.4\textwidth}
		\centering
		\begin{adjustbox}{width=\linewidth}
\begin{tikzpicture}

	\definecolor{color0}{rgb}{0.917647058823529,0.917647058823529,0.949019607843137}
	\definecolor{color1}{rgb}{0.71078431372549,0.363725490196079,0.375490196078431}
	\definecolor{color2}{rgb}{0.347058823529412,0.458823529411765,0.641176470588235}
	\definecolor{color3}{rgb}{0.756862745098039,0.700980392156863,0.498039215686275}

	\begin{axis}[
			axis background/.style={fill=color0},
			axis line style={white},
			legend style={nodes={scale=0.5, transform shape}},
			tick align=outside,
			x grid style={white},
			xmajorticks=false,
			xmin=-0.5, xmax=0.5,
			xtick style={color=white!15!black},
			xtick={0},
			xticklabels={30},
			y grid style={white},
			ylabel={$\log_{10}$ Pos. Error},
			ymajorgrids,
			ymin=-15.2500930724719, ymax=2.05863736751692,
			ytick style={color=white!15!black},
			ytick={-16,-14,-12,-10,-8,-6,-4,-2, 0, 2, 4},
			width=9cm,
			height=6cm
		]
		\path [draw=white!29.8039215686275!black, fill=color2, semithick]
		(axis cs:-0.3686,-6.73437285030697)
		--(axis cs:-0.2314,-6.73437285030697)
		--(axis cs:-0.2314,-0.42588367252396)
		--(axis cs:-0.3686,-0.42588367252396)
		--(axis cs:-0.3686,-6.73437285030697)
		--cycle;
		\path [draw=white!29.8039215686275!black, fill=color3, semithick]
		(axis cs:-0.1686,0.169323355720213)
		--(axis cs:-0.0314,0.169323355720213)
		--(axis cs:-0.0314,0.738287570438005)
		--(axis cs:-0.1686,0.738287570438005)
		--(axis cs:-0.1686,0.169323355720213)
		--cycle;
		\path [draw=white!29.8039215686275!black, fill=white!54.9019607843137!black, semithick]
		(axis cs:0.0314,-10.6762709279857)
		--(axis cs:0.1686,-10.6762709279857)
		--(axis cs:0.1686,-9.05068245106047)
		--(axis cs:0.0314,-9.05068245106047)
		--(axis cs:0.0314,-10.6762709279857)
		--cycle;
		\path [draw=white!29.8039215686275!black, fill=color1, semithick]
		(axis cs:0.2314,-10.6838024640081)
		--(axis cs:0.3686,-10.6838024640081)
		--(axis cs:0.3686,-9.06703781201418)
		--(axis cs:0.2314,-9.06703781201418)
		--(axis cs:0.2314,-10.6838024640081)
		--cycle;
		\draw[draw=white!29.8039215686275!black,fill=white!54.9019607843137!black,line width=0.3pt] (axis cs:0,0) rectangle (axis cs:0,0);

		\draw[draw=white!29.8039215686275!black,fill=color1,line width=0.3pt] (axis cs:0,0) rectangle (axis cs:0,0);

		\draw[draw=white!29.8039215686275!black,fill=color2,line width=0.3pt] (axis cs:0,0) rectangle (axis cs:0,0);

		\draw[draw=white!29.8039215686275!black,fill=color3,line width=0.3pt] (axis cs:0,0) rectangle (axis cs:0,0);

		\addplot [semithick, white!29.8039215686275!black, forget plot]
		table {%
				-0.3 -6.73437285030697
				-0.3 -7.88913203835553
			};
		\addplot [semithick, white!29.8039215686275!black, forget plot]
		table {%
				-0.3 -0.42588367252396
				-0.3 1.18582874825968
			};
		\addplot [semithick, white!29.8039215686275!black, forget plot]
		table {%
				-0.349 -7.88913203835553
				-0.251 -7.88913203835553
			};
		\addplot [semithick, white!29.8039215686275!black, forget plot]
		table {%
				-0.349 1.18582874825968
				-0.251 1.18582874825968
			};
		\addplot [semithick, white!29.8039215686275!black, forget plot]
		table {%
				-0.1 0.169323355720213
				-0.1 -0.664552951165415
			};
		\addplot [semithick, white!29.8039215686275!black, forget plot]
		table {%
				-0.1 0.738287570438005
				-0.1 1.33905408084789
			};
		\addplot [semithick, white!29.8039215686275!black, forget plot]
		table {%
				-0.149 -0.664552951165415
				-0.051 -0.664552951165415
			};
		\addplot [semithick, white!29.8039215686275!black, forget plot]
		table {%
				-0.149 1.33905408084789
				-0.051 1.33905408084789
			};
		\addplot [semithick, white!29.8039215686275!black, forget plot]
		table {%
				0.1 -10.6762709279857
				0.1 -13.0526116525326
			};
		\addplot [semithick, white!29.8039215686275!black, forget plot]
		table {%
				0.1 -9.05068245106047
				0.1 -7.78906483096653
			};
		\addplot [semithick, white!29.8039215686275!black, forget plot]
		table {%
				0.051 -13.0526116525326
				0.149 -13.0526116525326
			};
		\addplot [semithick, white!29.8039215686275!black, forget plot]
		table {%
				0.051 -7.78906483096653
				0.149 -7.78906483096653
			};
		\addplot [semithick, white!29.8039215686275!black, forget plot]
		table {%
				0.3 -10.6838024640081
				0.3 -13.0005071641432
			};
		\addplot [semithick, white!29.8039215686275!black, forget plot]
		table {%
				0.3 -9.06703781201418
				0.3 -7.67554588121192
			};
		\addplot [semithick, white!29.8039215686275!black, forget plot]
		table {%
				0.251 -13.0005071641432
				0.349 -13.0005071641432
			};
		\addplot [semithick, white!29.8039215686275!black, forget plot]
		table {%
				0.251 -7.67554588121192
				0.349 -7.67554588121192
			};
		\addplot [semithick, white!29.8039215686275!black, forget plot]
		table {%
				-0.3686 -6.05407164631351
				-0.2314 -6.05407164631351
			};
		\addplot [semithick, white!29.8039215686275!black, forget plot]
		table {%
				-0.1686 0.508130911340562
				-0.0314 0.508130911340562
			};
		\addplot [semithick, white!29.8039215686275!black, forget plot]
		table {%
				0.0314 -9.92639606149873
				0.1686 -9.92639606149873
			};
		\addplot [semithick, white!29.8039215686275!black, forget plot]
		table {%
				0.2314 -9.8701787247363
				0.3686 -9.8701787247363
			};
	\end{axis}

\end{tikzpicture}
		\end{adjustbox}
		\caption{30-DOF$+$}
		\label{fig:boxplot_planar_tree_30DOF}
	\end{subfigure}
	\caption{Box-and-whiskers plots summarizing end-effector position error over 1,000 experiments with planar binary tree robots with joint angle limits.}
	\label{fig:boxplots_planar}
\end{figure*}
\begin{figure*}
	\centering
	\begin{subfigure}[t]{0.4\textwidth}
		\centering
		\begin{adjustbox}{width=\linewidth}
\begin{tikzpicture}

	\definecolor{color0}{rgb}{0.917647058823529,0.917647058823529,0.949019607843137}
	\definecolor{color1}{rgb}{0.16078431372549,0.27843137254902,0.447058823529412}
	\definecolor{color2}{rgb}{0.964705882352941,0.666666666666667,0.109803921568627}

	\begin{axis}[
			axis background/.style={fill=color0},
			axis line style={white},
			legend cell align={left},
			legend style={
					fill opacity=0.8,
					draw opacity=1,
					text opacity=1,
					at={(0.03,0.03)},
					anchor=south west,
					draw=white!80!black,
					fill=color0,
					nodes={scale=0.8, transform shape}
				},
			tick align=outside,
			x grid style={white},
			xlabel={log\(\displaystyle _{10}\) Pos. Error Tolerance (m)},
			xmajorgrids,
			xmin=-6.35, xmax=1.35,
			xtick style={color=white!15!black},
			y grid style={white},
			ylabel={Success Rate},
			ymajorgrids,
			ymin=-0.05, ymax=1.15,
			ytick style={color=white!15!black},
			width=9cm,
			height=6cm
		]
		\path [draw=white, fill=color1, opacity=0.29]
		(axis cs:-6,0.818195218290493)
		--(axis cs:-6,0.768105846329158)
		--(axis cs:-5,0.845966641787302)
		--(axis cs:-4,0.863048782112033)
		--(axis cs:-3,0.869484522616579)
		--(axis cs:-2,0.877015892683888)
		--(axis cs:-1,0.877015892683888)
		--(axis cs:0,0.877015892683888)
		--(axis cs:1,0.877015892683888)
		--(axis cs:1,0.914690474563871)
		--(axis cs:1,0.914690474563871)
		--(axis cs:0,0.914690474563871)
		--(axis cs:-1,0.914690474563871)
		--(axis cs:-2,0.914690474563871)
		--(axis cs:-3,0.908262222035956)
		--(axis cs:-4,0.90273257977768)
		--(axis cs:-5,0.887907061013872)
		--(axis cs:-6,0.818195218290493)
		--cycle;

		\path [draw=white, fill=color2, opacity=0.29]
		(axis cs:-6,0.855138759440034)
		--(axis cs:-6,0.808937033458904)
		--(axis cs:-5,0.814202204630636)
		--(axis cs:-4,0.821586471601234)
		--(axis cs:-3,0.82687066656374)
		--(axis cs:-2,0.832163336368779)
		--(axis cs:-1,0.837464844317301)
		--(axis cs:0,0.837464844317301)
		--(axis cs:1,0.837464844317301)
		--(axis cs:1,0.880455041053051)
		--(axis cs:1,0.880455041053051)
		--(axis cs:0,0.880455041053051)
		--(axis cs:-1,0.880455041053051)
		--(axis cs:-2,0.875785416385717)
		--(axis cs:-3,0.871106955792915)
		--(axis cs:-4,0.866420022359904)
		--(axis cs:-5,0.859844712585662)
		--(axis cs:-6,0.855138759440034)
		--cycle;

		\path [draw=white, fill=black, opacity=0.29]
		(axis cs:-6,0.966559652647457)
		--(axis cs:-6,0.940813159043345)
		--(axis cs:-5,0.940813159043345)
		--(axis cs:-4,0.940813159043345)
		--(axis cs:-3,0.940813159043345)
		--(axis cs:-2,0.949932698402058)
		--(axis cs:-1,0.967478399145281)
		--(axis cs:0,0.976000024847965)
		--(axis cs:1,0.976000024847965)
		--(axis cs:1,0.991205352794808)
		--(axis cs:1,0.991205352794808)
		--(axis cs:0,0.991205352794808)
		--(axis cs:-1,0.985764599624488)
		--(axis cs:-2,0.97339452113149)
		--(axis cs:-3,0.966559652647457)
		--(axis cs:-4,0.966559652647457)
		--(axis cs:-5,0.966559652647457)
		--(axis cs:-6,0.966559652647457)
		--cycle;

		\path [draw=white, fill=red!60!black, opacity=0.29]
		(axis cs:-6,0.969992807564852)
		--(axis cs:-6,0.945357176480583)
		--(axis cs:-5,0.945357176480583)
		--(axis cs:-4,0.945357176480583)
		--(axis cs:-3,0.945357176480583)
		--(axis cs:-2,0.949932698402058)
		--(axis cs:-1,0.971093535357963)
		--(axis cs:0,0.974762571884519)
		--(axis cs:1,0.974762571884519)
		--(axis cs:1,0.990448029510171)
		--(axis cs:1,0.990448029510171)
		--(axis cs:0,0.990448029510171)
		--(axis cs:-1,0.988133085872606)
		--(axis cs:-2,0.97339452113149)
		--(axis cs:-3,0.969992807564852)
		--(axis cs:-4,0.969992807564852)
		--(axis cs:-5,0.969992807564852)
		--(axis cs:-6,0.969992807564852)
		--cycle;

		\addplot [line width=0.52pt, color1, opacity=1.0, mark=square*, mark size=2.5, mark options={solid}]
		table {%
				-6 0.794
				-5 0.868
				-4 0.884
				-3 0.89
				-2 0.897
				-1 0.897
				0 0.897
				1 0.897
			};
		\addlegendentry{\texttt{trust-constr}}
		\addplot [line width=0.52pt, color2, opacity=1.0, dashed, mark=*, mark size=2.5, mark options={solid}]
		table {%
				-6 0.833
				-5 0.838
				-4 0.845
				-3 0.85
				-2 0.855
				-1 0.86
				0 0.86
				1 0.86
			};
		\addlegendentry{\texttt{FABRIK}}
		\addplot [line width=0.52pt, black, opacity=1.0, dash pattern=on 1pt off 3pt on 3pt off 3pt, mark=diamond*, mark size=3.5, mark options={solid}]
		table {%
				-6 0.955
				-5 0.955
				-4 0.955
				-3 0.955
				-2 0.963
				-1 0.978
				0 0.985
				1 0.985
			};
		\addlegendentry{\texttt{RTR}}
		\addplot [line width=0.52pt, red!60!black, opacity=1.0, mark=asterisk, mark size=3.5, mark options={solid}]
		table {%
				-6 0.959
				-5 0.959
				-4 0.959
				-3 0.959
				-2 0.963
				-1 0.981
				0 0.984
				1 0.984
			};
		\addlegendentry{\texttt{RTR-B}}
	\end{axis}

\end{tikzpicture}
		\end{adjustbox}
		\caption{14-DOF$+$}
		\label{fig:waterfall_planar_tree_14DOF}
	\end{subfigure}
	~
	\begin{subfigure}[t]{0.4\textwidth}
		\centering
		\begin{adjustbox}{width=\linewidth}
\begin{tikzpicture}

	\definecolor{color0}{rgb}{0.917647058823529,0.917647058823529,0.949019607843137}
	\definecolor{color1}{rgb}{0.16078431372549,0.27843137254902,0.447058823529412}
	\definecolor{color2}{rgb}{0.964705882352941,0.666666666666667,0.109803921568627}

	\begin{axis}[
			axis background/.style={fill=color0},
			axis line style={white},
			tick align=outside,
			x grid style={white},
			xlabel={log\(\displaystyle _{10}\) Pos. Error Tolerance (m)},
			xmajorgrids,
			xmin=-6.35, xmax=1.35,
			xtick style={color=white!15!black},
			y grid style={white},
			ymajorgrids,
			ylabel={Success Rate},
			ymin=-0.05, ymax=1.15,
			ytick style={color=white!15!black},
            width=9cm,
            height=6cm
		]
		\path [draw=white, fill=color1, opacity=0.29]
		(axis cs:-6,0.415455872559329)
		--(axis cs:-6,0.355208736100581)
		--(axis cs:-5,0.413404023508557)
		--(axis cs:-4,0.427278327824702)
		--(axis cs:-3,0.445152695113199)
		--(axis cs:-2,0.456095727979951)
		--(axis cs:-1,0.457091292259424)
		--(axis cs:0,0.457091292259424)
		--(axis cs:1,0.457091292259424)
		--(axis cs:1,0.51897805893106)
		--(axis cs:1,0.51897805893106)
		--(axis cs:0,0.51897805893106)
		--(axis cs:-1,0.51897805893106)
		--(axis cs:-2,0.517979402474992)
		--(axis cs:-3,0.506986007222521)
		--(axis cs:-4,0.488964401043636)
		--(axis cs:-5,0.474919614641571)
		--(axis cs:-6,0.415455872559329)
		--cycle;

		\path [draw=white, fill=color2, opacity=0.29]
		(axis cs:-6,0.00250816434142244)
		--(axis cs:-6,0)
		--(axis cs:-5,0)
		--(axis cs:-4,0)
		--(axis cs:-3,0)
		--(axis cs:-2,0.000415831365153044)
		--(axis cs:-1,0.00135164878776638)
		--(axis cs:0,0.00135164878776638)
		--(axis cs:1,0.00135164878776638)
		--(axis cs:1,0.00948283061234889)
		--(axis cs:1,0.00948283061234889)
		--(axis cs:0,0.00948283061234889)
		--(axis cs:-1,0.00948283061234889)
		--(axis cs:-2,0.00640049976053014)
		--(axis cs:-3,0.00250816434142244)
		--(axis cs:-4,0.00250816434142244)
		--(axis cs:-5,0.00250816434142244)
		--(axis cs:-6,0.00250816434142244)
		--cycle;

		\path [draw=white, fill=black, opacity=0.29]
		(axis cs:-6,0.852311601125584)
		--(axis cs:-6,0.805781517885908)
		--(axis cs:-5,0.805781517885908)
		--(axis cs:-4,0.808937033458904)
		--(axis cs:-3,0.815256155424137)
		--(axis cs:-2,0.831104107425106)
		--(axis cs:-1,0.880251719073068)
		--(axis cs:0,0.901966466683103)
		--(axis cs:1,0.903059537837456)
		--(axis cs:1,0.936508505111118)
		--(axis cs:1,0.936508505111118)
		--(axis cs:0,0.935607334995834)
		--(axis cs:-1,0.917437347156946)
		--(axis cs:-2,0.874850419079706)
		--(axis cs:-3,0.860784986841285)
		--(axis cs:-4,0.855138759440034)
		--(axis cs:-5,0.852311601125584)
		--(axis cs:-6,0.852311601125584)
		--cycle;

		\path [draw=white, fill=red!60!black, opacity=0.29]
		(axis cs:-6,0.856080553553634)
		--(axis cs:-6,0.809989464088767)
		--(axis cs:-5,0.809989464088767)
		--(axis cs:-4,0.812095226345119)
		--(axis cs:-3,0.816310417062526)
		--(axis cs:-2,0.833222918734547)
		--(axis cs:-1,0.870558873237504)
		--(axis cs:0,0.895423785572038)
		--(axis cs:1,0.895423785572038)
		--(axis cs:1,0.930184578946515)
		--(axis cs:1,0.930184578946515)
		--(axis cs:0,0.930184578946515)
		--(axis cs:-1,0.909182102556705)
		--(axis cs:-2,0.87672006035825)
		--(axis cs:-3,0.861724950316633)
		--(axis cs:-4,0.857963240962008)
		--(axis cs:-5,0.856080553553634)
		--(axis cs:-6,0.856080553553634)
		--cycle;

		\addplot [line width=0.52pt, color1, opacity=1.0, mark=square*, mark size=2.5, mark options={solid}]
		table {%
				-6 0.385
				-5 0.444
				-4 0.458
				-3 0.476
				-2 0.487
				-1 0.488
				0 0.488
				1 0.488
			};
		\addplot [line width=0.52pt, color2, opacity=1.0, dashed, mark=*, mark size=2.5, mark options={solid}]
		table {%
				-6 0
				-5 0
				-4 0
				-3 0
				-2 0.002
				-1 0.004
				0 0.004
				1 0.004
			};
		\addplot [line width=0.52pt, black, opacity=1.0, dash pattern=on 1pt off 3pt on 3pt off 3pt, mark=diamond*, mark size=3.5, mark options={solid}]
		table {%
				-6 0.83
				-5 0.83
				-4 0.833
				-3 0.839
				-2 0.854
				-1 0.9
				0 0.92
				1 0.921
			};
		\addplot [line width=0.52pt, red!60!black, opacity=1.0, mark=asterisk, mark size=3.5, mark options={solid}]
		table {%
				-6 0.834
				-5 0.834
				-4 0.836
				-3 0.84
				-2 0.856
				-1 0.891
				0 0.914
				1 0.914
			};
	\end{axis}

\end{tikzpicture}
		\end{adjustbox}
		\caption{30-DOF$+$}
		\label{fig:waterfall_planar_tree_30DOF}
	\end{subfigure}
	\caption{Waterfall curves of success rate versus position error tolerance for 1,000 experiments with planar tree robots with joint angle limits. The shaded regions are 95$\%$ Jeffreys confidence intervals centered on the solid lines. The rotation error tolerance is fixed at 0.01 rad.}
	\label{fig:waterfall_planar}
\end{figure*}
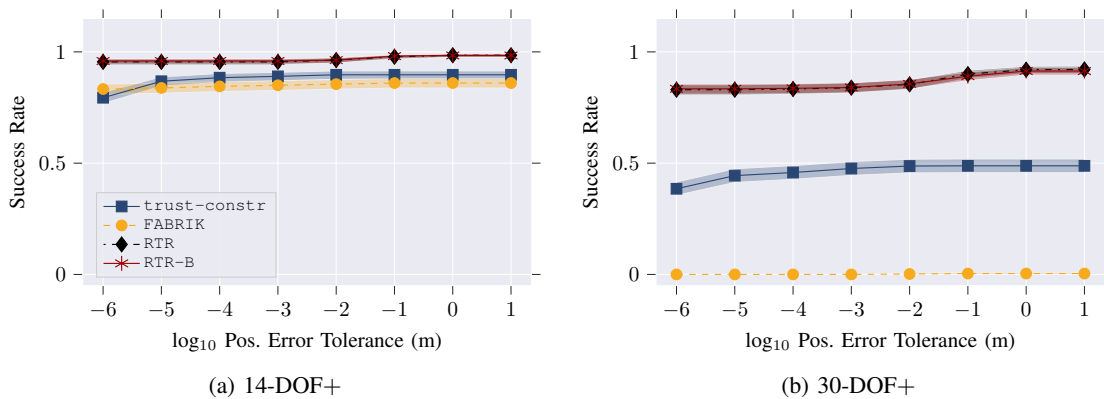

\Cref{tab:planar_chain_results} summarizes our results for 6- and 10-DOF planar manipulators of the type shown in~\cref{fig:planar_chain}, with and without joint angle limits.
Unsurprisingly, all four algorithms achieve a 100\% success rate when no joint angle limits are present.
When joint angle limits are introduced, \texttt{FABRIK} performs far worse, while the \texttt{RTR} algorithm performs similarly to \texttt{trust-constr} in terms of success rate and iterations required.
Initializing the problem using bound smoothing reduces the number of iterations needed  while increasing the success rate of our \texttt{RTR-B} algorithm compared to a naive initialization.
Curiously, both \texttt{trust-constr} and \texttt{FABRIK}'s performance improve as the number of DOF increases.
We suspect that the additional DOF give the heuristic approach of \texttt{FABRIK} more time to ``steer'' away from a difficult initial configuration induced by pose goals.  
The \texttt{trust-constr} algorithm may benefit from more DOF that can be used to ``escape'' from local minima or extremely flat regions of the cost function landscape.
While we only report on experiments involving \emph{pose} goals in this paper, we found that when joint angle limits were introduced, \texttt{FABRIK} was much better suited to \emph{position} goals (i.e., without a specified orientation) for the robot end-effector(s).

\Cref{tab:planar_tree_results} contains results for experiments involving binary tree-like robots, such as the 6-DOF example with a height of two shown in~\cref{fig:convergence}.
The 14- and 30-DOF results correspond to robots with a perfect binary tree kinematic structure of height three and four, respectively.
While not as practical as chain-like robots or the revolute manipulators of \Cref{subsec:rev_exp_1}, these experiments serve to showcase the performance of IK methods on highly kinematically-constrained mechanisms, while also scaling naturally to a higher number of DOF.
In all cases, the \texttt{RTR} and \texttt{RTR-B} algorithms outperform the three benchmark approaches in terms of success rate.
When no joint limits are present, the overall difference in success rates is relatively small, with \texttt{RTR} and \texttt{trust-exact} having a similar number of outer iterations.
When joint limits are introduced, the experimental procedure in~\Cref{sec:methodology} generates problems within a tighter range around a naive initialization, resulting in higher success rates for all approaches.
Due to a high number of constraints, the number of outer iterations for \texttt{trust-constr} increases more than ten-fold, significantly degrading performance.
In contrast, this effect does not occur for \texttt{RTR} and \texttt{RTR-B}, where the number of iterations remains considerably lower while a similar increase in success rate is observed.
The box-and-whiskers plots in \cref{fig:boxplots_planar} summarize the position error statistics for each algorithm over \emph{all} runs in the constrained case, including those runs that did \emph{not} qualify as a success.
The waterfall curves in \cref{fig:waterfall_planar} display the success rate as a function of an increasing position error tolerance, demonstrating that the higher success rate of our algorithm is maintained for different accuracy requirements.

Mean runtimes of both \texttt{RTR} and \texttt{RTR-B} across all planar experiments remain below 0.1 s, with the 30-DOF tree-like robot unsurprisingly resulting in the most computationally-intensive problems.
The runtime for \texttt{FABRIK} across all planar experiments was 3.4 s, while the local solvers had a mean runtime of 10.2 s.
Both of these sets of runtime statistics are influenced by the large proportion of unsuccessful runs that required all 2,000 allowed iterations before terminating.
However, since we cannot guarantee that the local algorithms were provided with equally well-tuned implementations of core subroutines (e.g., Hessian computations for the second-order trust region solvers), we urge our readers to treat the statistics on iterations reported in \Cref{tab:planar_chain_results} and \Cref{tab:planar_tree_results} as more qualitative indicators of performance.

\begin{figure*}[t]
	\centering
	\includegraphics[width=\textwidth]{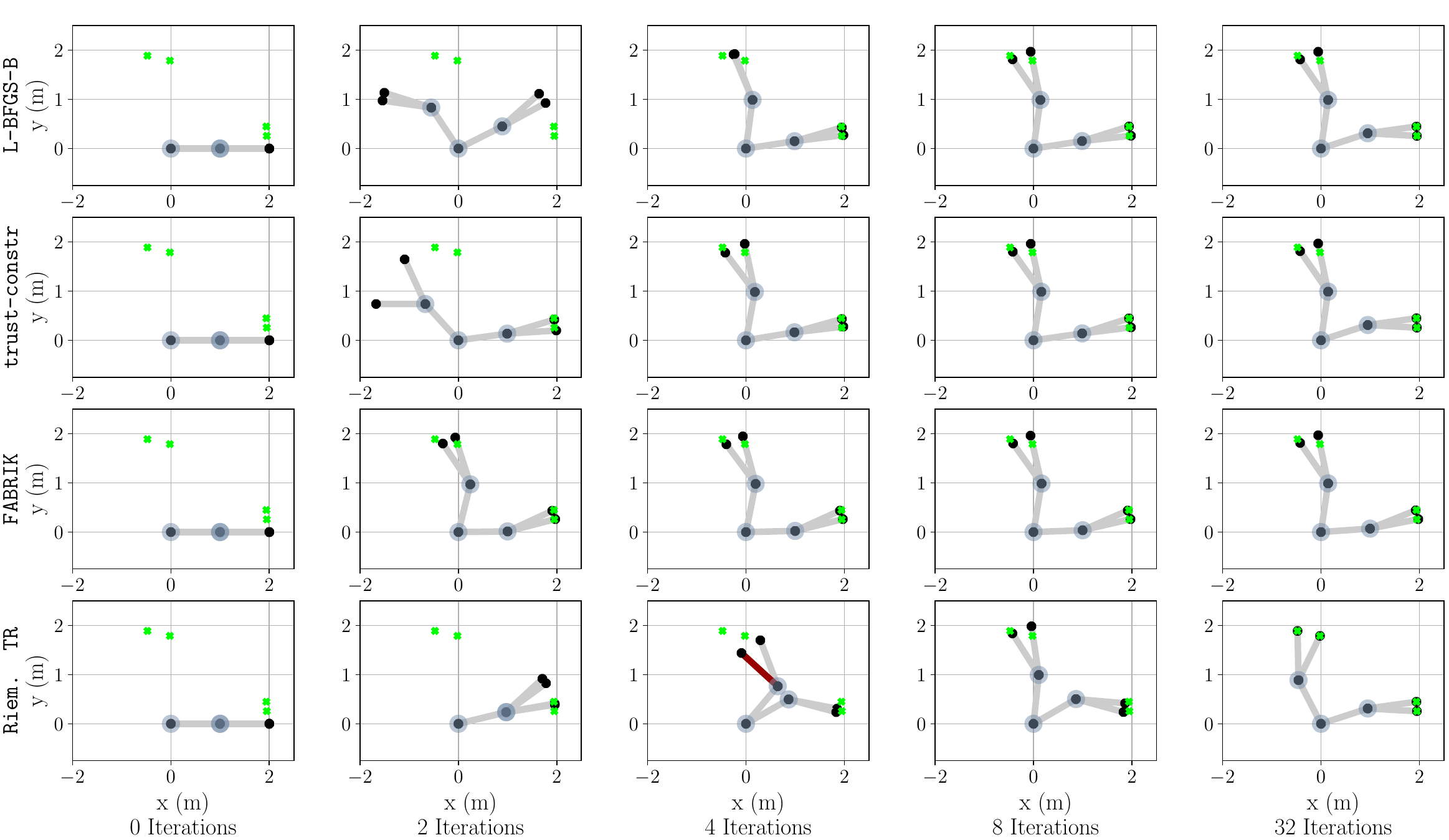}
	\caption{Convergence of various algorithms on a 6-DOF binary tree robots with joint angle limits. \texttt{FABRIK} and the angle-parametrized \texttt{L-BFGS-B} and \texttt{trust-constr} all converge to a local minimum, whereas \texttt{Riem.\ TR} is able to converge to the the global minimum from the same initial condition ($\Vector{\Theta} = \Vector{0}$).
		Note that \texttt{FABRIK} quickly converges but is unable to accurately reach any of the four end-effector targets.
		The red link in the \texttt{RTR} solution after four iterations indicates that the link's parent joint is violating its joint angle limits.
	}
	\label{fig:convergence}
\end{figure*}
\begin{figure*}
	\centering
	\begin{subfigure}[t]{0.49\textwidth}
		\centering
		\includegraphics[width=\textwidth]{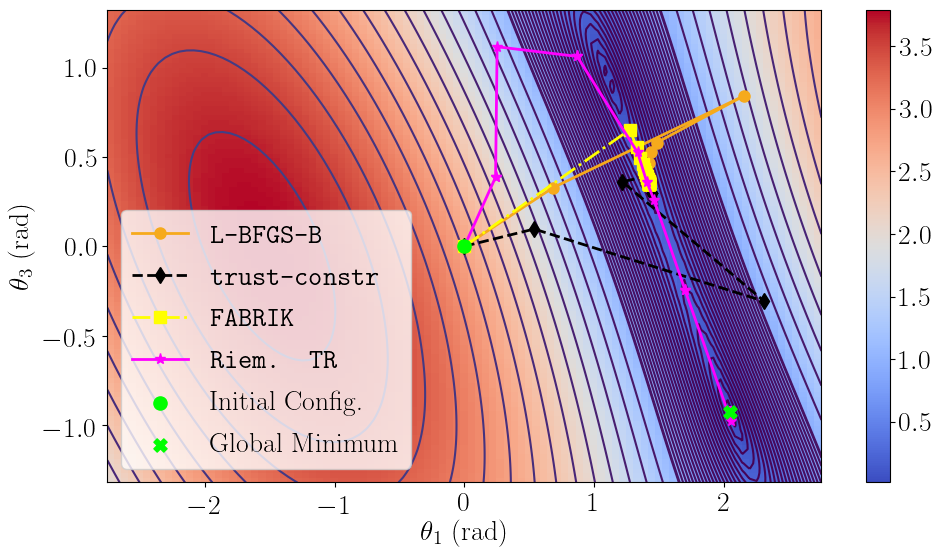}
		\caption{L$_2$ error for the end-effector actuated by $\theta_1$ and $\theta_3$.}
		\label{fig:heatmap_1_3}
	\end{subfigure}
	~
	\begin{subfigure}[t]{0.49\textwidth}
		\centering
		\includegraphics[width=\textwidth]{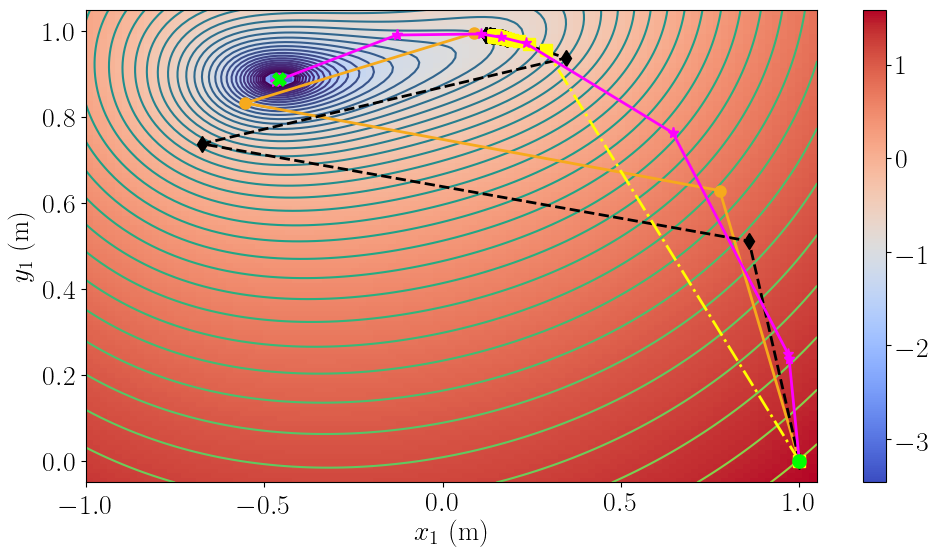}
		\caption{Base-10 logarithm of distance-geometric cost involving the joint position which is actuated by $\theta_1$.}
		\label{fig:heatmap_pos_1_3}
	\end{subfigure}
	\caption{Contour maps of different cost function components overlaid with solver trajectories for the IK problem instance depicted in \cref{fig:convergence}.
		The angle $\theta_1$ is the angle of the joint connected to the root and pointing towards the top left of corner of the plot of \texttt{Riem.\ TR} after 32 iterations in \cref{fig:convergence}.
		Angles $\theta_3$ and $\theta_4$ are the angles of the two child links of the link actuated by $\theta_1$.
		In \cref{fig:heatmap_pos_1_3}, $x_1 = \cos{\theta_1}$ and $y_1 = \sin{\theta_1}$ are the coordinates of the point actuated by $\theta_1$, and the cost function is the quartic terms of \cref{eq:EDM_completion} involving $x_1$ and $y_1$.
		This distance-geometric cost function is very well posed, and the Riemannian solver follows its contours to the global minimum.
		The other three methods attempt to minimize the ill-conditioned cost function in \cref{fig:heatmap_1_3} and return a suboptimal solution.
	}
	\label{fig:heatmaps}
\end{figure*}
To illustrate the optimization procedure and help elucidate the relatively superior performance of our method on branching tree-like robots, we conducted a simple empirical analysis on one of the many problem instances where \texttt{RTR} outperformed the benchmark algorithms.
\cref{fig:convergence} shows the convergence of four solvers with the same initial condition on a sample low-dimensional IK problem involving a 6-DOF binary planar tree robot with symmetric joint angle limits and end-effector position goals.
Only our algorithm, \texttt{RTR}, is able to find the global minimum.
The algorithms all perform similarly for the first eight iterations, but the three competitors are unable to escape from the same local minimum.
The difference in behaviour is explained by \cref{fig:heatmaps}, which compares contour maps of different cost function terms used by the algorithms, overlaid with the progress of each algorithm across iterations.
\cref{fig:heatmap_1_3} illustrates the Euclidean distance of the end-effector controlled by $\theta_1$ and $\theta_3$ and its goal position, which is used in the cost function of \texttt{L-BFGS-B} and \texttt{trust-constr}.
In contrast, the contour map in \cref{fig:heatmap_pos_1_3} is the logarithm of the quartic function containing the terms in the distance-geometric cost function of \cref{eq:EDM_completion} involving the position of the joint actuated by $\theta_1$.
The angular cost function is ill-conditioned, leading the algorithms that minimize it to converge to the local minimum of \cref{fig:convergence}, whereas \texttt{RTR} minimizes the distance-geometric cost of \cref{fig:heatmap_pos_1_3} and quickly converges to the global minimum, which has a large and well-conditioned basin of convergence.
In spite of its simplicity, this toy problem illustrates the behavioural differences of the algorithms in a state space with low enough dimension to visualize clearly.

\subsection{Revolute Manipulators}\label{subsec:rev_exp_1}
\begin{figure*}
	\centering
	\begin{subfigure}{0.22\textwidth}
		\centering
		\includegraphics[width = 0.8\textwidth]{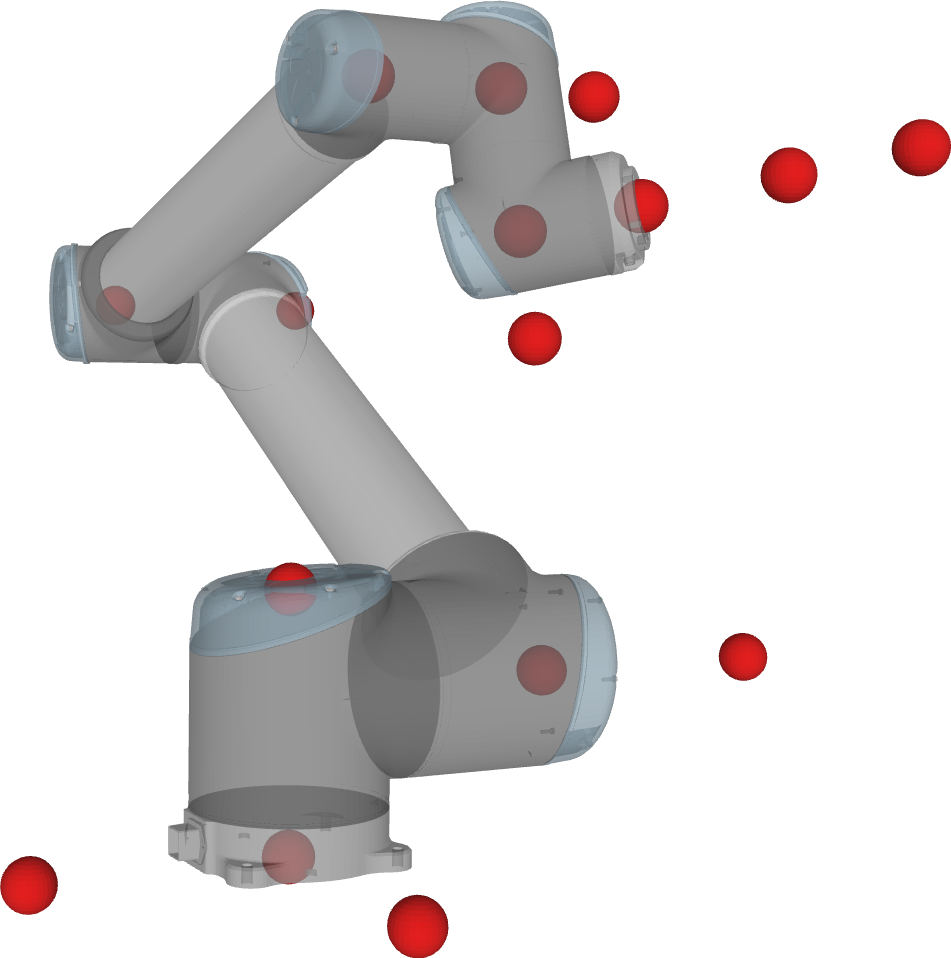}
		\caption{UR10}\label{fig:ur10_urdf}
	\end{subfigure}
	\begin{subfigure}{0.22\textwidth}
		\centering
		\includegraphics[width = 0.8\textwidth]{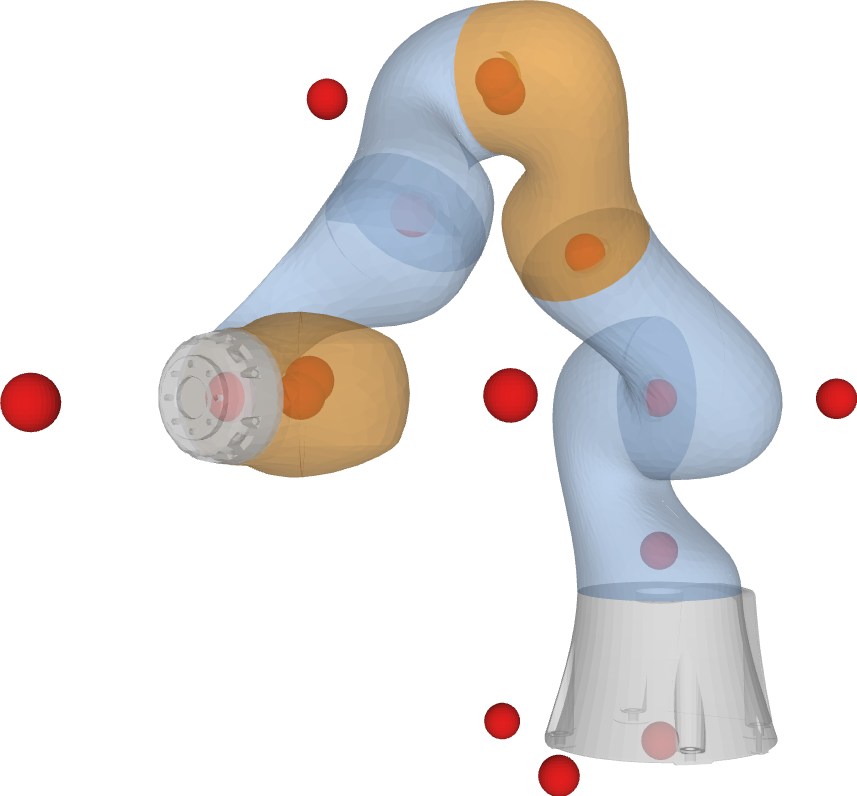}
		\caption{KUKA-IIWA}\label{fig:kuka_urdf}
	\end{subfigure}
	\begin{subfigure}{0.22\textwidth}
		\centering
		\includegraphics[width = \textwidth]{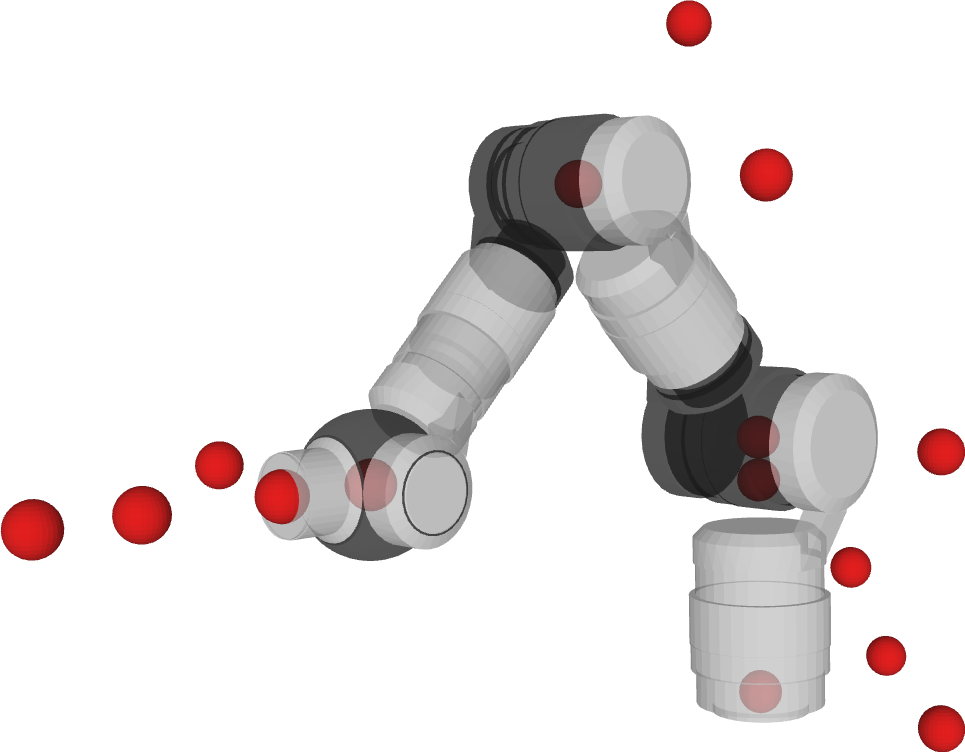}
		\caption{Schunk-LWA4D}\label{fig:lwa4d_urdf}
	\end{subfigure}
	\begin{subfigure}{0.22\textwidth}
		\centering
		\includegraphics[width = 0.7\textwidth]{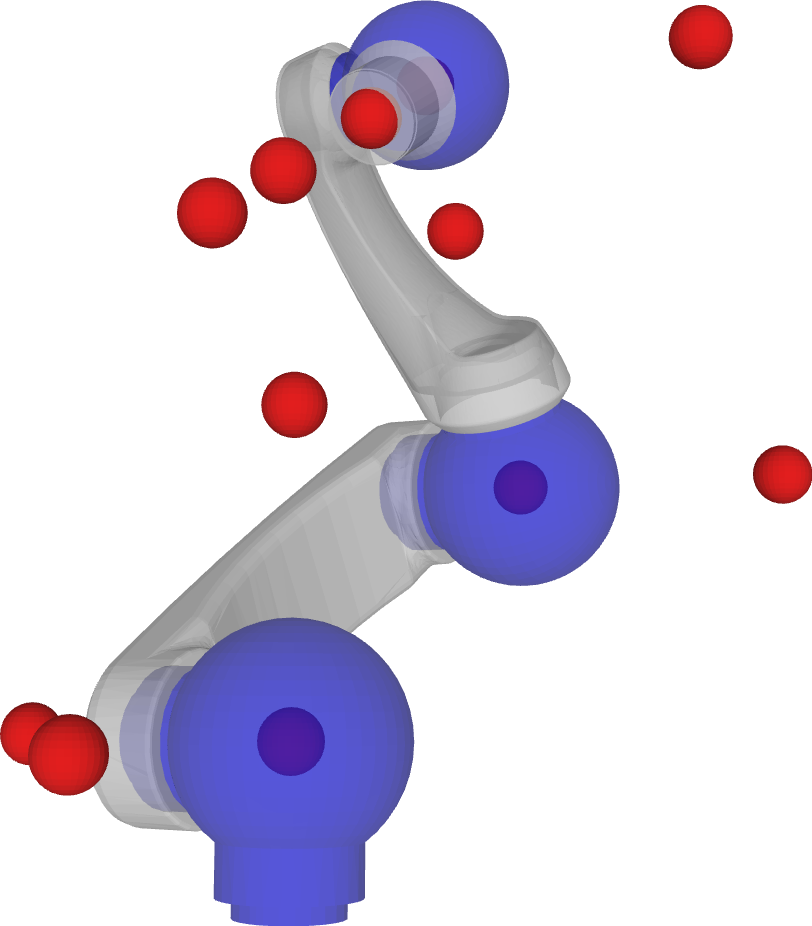}
		\caption{Schunk-LWA4P}\label{fig:lwa4p_urdf}
	\end{subfigure}
	\caption{Points in the distance-based models of the commercial manipulators used in our experiments. Note that the distances between pairs of points on individual rotation axes have been reduced for clarity.}
	\label{fig:urdf_big}
\end{figure*}
Next, \Cref{tab:revolute_chain_results} compares the performance of our algorithm and the trust region benchmark algorithms on 3D robots with revolute joints and within an unconstrained workspace.
These problems are of greater practical interest than the planar results in \cref{sec:planar_results} and showcase the expressiveness of our problem formulation.
All experiments are conducted for the Universal Robots UR10, KUKA IIWA, Schunk LWA4D, and Schunk LWA4P manipulators shown in \cref{fig:urdf_big}.
For these robots, the distance-geometric IK formulation derived using the procedure described in~\Cref{sec:kinematic_models} results in points that always overlap (i.e., have a fixed distance of zero).
While these points could be ``merged'' to reduce the graph size---thereby improving overall performance---we used the generic models for transparency.

In terms of success rate, our algorithm outperforms \texttt{trust-exact} and \texttt{trust-constr} on the UR10 and KUKA IIWA manipulators with and without joint angle limits, as well as on the Schunk LWA4P with joint limits.
The bound smoothing procedure used to initialize \texttt{RTR-B} reduces the overall number of iterations in all cases, but has a variable effect on the success rate.
Both \texttt{RTR} and \texttt{RTR-B} require a significantly larger number of iterations to converge compared to the planar case, while \texttt{trust-*} remains in a similar range to that observed in~\Cref{tab:planar_chain_results}.
We can partially attribute this to the iteration complexity discussed in~\cref{sec:algorithm}, which is increased by the unfavourably high ratio of points to DOF in the kinematic models of these mechanisms.
Again, this effect could be mitigated in future work by removing overlapping points from the kinematic models, reducing the overall number of variables.

The box-and-whiskers plots in \cref{fig:boxplots_ur10} summarize the position error statistics for each algorithm over \emph{all} runs with the UR10 manipulator, with and without joint limits.
In both cases the \texttt{trust-*} algorithms converge to significantly lower cost function values.
This suggests that the highly variable magnitudes of known distances may cause numerical issues in the gradient for our algorithms, causing early termination.
We suspect this issue can be avoided by using a weighting matrix to regularize elements of the cost function, as shown in~\cite{nguyenLocalizationIoTNetworks2019}.
The waterfall curves in \cref{fig:waterfall_ur10} corroborate these findings, showing that the success rate of our algorithm drops as the position error tolerance decreases, and suggesting that decreasing the gradient termination criteria may increase accuracy at the cost of increased computation time.

The mean runtime of both \texttt{RTR} and \texttt{RTR-B} remains below 1.0 s for all  robot models, reaching the lowest mean runtimes of less than 0.2 s on the UR10.
In the same instance, the \texttt{trust-*} algorithms have a slightly higher mean runtime of 0.3 s.
We observe the highest computation times for our algorithms with the 7-DOF Schunk LWA4D, with mean runtimes slightly below 1.0 s.
While the \texttt{trust-*} methods exhibit similarly worse performance with a mean runtime of 0.5 s, the overall increase in runtime is smaller due to the number of variables only increasing from six to seven.
\begin{table*}
	\centering
	\begin{adjustbox}{width=\linewidth}
		\begin{tabular}{lrcrccrcc}
    \toprule
    Method & \multicolumn{2}{c}{\texttt{trust-exact/constr}} & \multicolumn{3}{c}{\texttt{RTR}} & \multicolumn{3}{c}{\texttt{RTR-B}} \\
    \cmidrule(r){2-3}  \cmidrule(r){4-6}   \cmidrule(r){7-9}
     &Success ($\%$) &Iter. $\mu$ ($\sigma$) &Success ($\%$) &Iter. $\mu$ ($\sigma$) &Runtime [ms] $\mu$ ($\sigma$) &Success ($\%$) &Iter. $\mu$ ($\sigma$) &Runtime [ms] $\mu$ ($\sigma$) \\
    \midrule
    \midrule
    UR10  & $90.0 \pm 1.3$ & 12 (7) & $87 \pm 1.0$ & 364 (523) & 205.8 (264.1) & $\mathbf{95.0 \pm 1.0}$ & $\mathbf{319 \,(493)}$ & 198.4 (239.3) \\
    UR10$+$ & $63.0 \pm 2.1$ & 36 (19) & $\mathbf{77.0 \pm 1.8}$ & $\mathbf{364\,(523)}$ & 206.6 (264.2) & $66.0 \pm 2.0$ & 251 (457) & 138.2 (193.0) \\
    KUKA & $100.0 \pm 0.1$ & 20 (6) & $100.0 \pm 0.2$ & 317 (373) & 242.8 (211.1) & $100.0 \pm 0.1$ & 315 (386) & 268.3 (220.0) \\
    KUKA$+$ & $87.0 \pm 1.5$ & 48 (18) & $82.0 \pm 1.7$ & 734 (771) & 432.4 (395.1) & $\mathbf{89 \pm 1.3}$ & $\mathbf{506\,(660)}$ & 386.8 (374.8)  \\
    LWA4P & $\mathbf{100.0 \pm 0.2}$ & $\mathbf{20\,(17)}$ & $89.0 \pm 1.4$ & 513 (624) & 416.6 (542.4) & $90.0 \pm 1.3$ & 503 (614) & 290.0 (330.1) \\
    LWA4P$+$ & $77.0 \pm 1.8$ & 46 (25) & $\mathbf{87.0 \pm 1.5}$ & $\mathbf{435\,(569)}$ & 246.7 (282.2) & $81 \pm 1.7$ & 324 (482) & 201.1 (219.7) \\
    LWA4D & $\mathbf{100.0 \pm 0.1}$ & $\mathbf{24\,(17)}$ & $99.0 \pm 0.5$ & 868 (747) &969.9 (878.0) & $97.0 \pm 0.7$ & 713 (699) & 895.6 (897.1) \\
    LWA4D$+$ & $\mathbf{96.0 \pm 0.8}$ & $\mathbf{47\,(17)}$ & $91.0 \pm 1.3$ & 867 (769) & 900.2 (874.1) & $90.0 \pm 1.3$ & 825 (758) & 991.2 (943.8) \\
\end{tabular}

	\end{adjustbox}
	\caption{Results for revolute chain manipulators over 2,000 random experiments with pose goals. The $+$ indicates joint angle limits.}
	\label{tab:revolute_chain_results}
\end{table*}

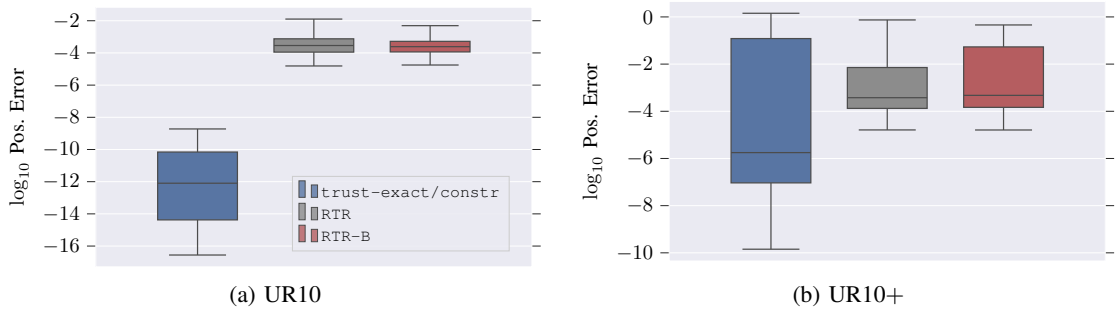
\begin{figure*}
	\centering
	\begin{subfigure}[t]{0.4\textwidth}
		\centering
		\begin{adjustbox}{width=\linewidth}
\begin{tikzpicture}

	\definecolor{color0}{rgb}{0.917647058823529,0.917647058823529,0.949019607843137}
	\definecolor{color1}{rgb}{0.347058823529412,0.458823529411765,0.641176470588235}
	\definecolor{color2}{rgb}{0.71078431372549,0.363725490196079,0.375490196078431}

	\begin{axis}[
			axis background/.style={fill=color0},
			axis line style={white},
			legend cell align={left},
			legend style={
					fill opacity=0.8,
					draw opacity=1,
					text opacity=1,
					at={(0.45,0.35)},
					anchor=north west,
					draw=white!80!black,
					fill=color0,
          nodes={scale=0.8, transform shape}
				},
			tick align=outside,
			x grid style={white},
			xmajorticks=false,
			xmin=-0.5, xmax=0.5,
			xtick style={color=white!15!black},
			xtick={0},
			xticklabels={ur10},
			y grid style={white},
			ylabel={$\log_{10}$ Pos. Error},
			ymajorgrids,
			ymin=-17.2896601281028, ymax=-1.163432063259,
			ytick style={color=white!15!black},
			ytick={-18,-16,-14,-12,-10,-8,-6,-4,-2,0},
			width=9cm,
			height=6cm
		]
		\path [draw=white!29.8039215686275!black, fill=color1, semithick]
		(axis cs:-0.358133333333333,-14.3740933133648)
		--(axis cs:-0.1752,-14.3740933133648)
		--(axis cs:-0.1752,-10.1576470484392)
		--(axis cs:-0.358133333333333,-10.1576470484392)
		--(axis cs:-0.358133333333333,-14.3740933133648)
		--cycle;
		\path [draw=white!29.8039215686275!black, fill=white!54.9019607843137!black, semithick]
		(axis cs:-0.0914666666666667,-3.94972475304225)
		--(axis cs:0.0914666666666667,-3.94972475304225)
		--(axis cs:0.0914666666666667,-3.12225077857308)
		--(axis cs:-0.0914666666666667,-3.12225077857308)
		--(axis cs:-0.0914666666666667,-3.94972475304225)
		--cycle;
		\path [draw=white!29.8039215686275!black, fill=color2, semithick]
		(axis cs:0.1752,-3.93821977446198)
		--(axis cs:0.358133333333333,-3.93821977446198)
		--(axis cs:0.358133333333333,-3.27984985034484)
		--(axis cs:0.1752,-3.27984985034484)
		--(axis cs:0.1752,-3.93821977446198)
		--cycle;
		\draw[draw=white!29.8039215686275!black,fill=color1,line width=0.3pt] (axis cs:0,0) rectangle (axis cs:0,0);
		\addlegendimage{ybar,ybar legend,draw=white!29.8039215686275!black,fill=color1,line width=0.3pt}
		\addlegendentry{\texttt{trust-exact/constr}}

		\draw[draw=white!29.8039215686275!black,fill=white!54.9019607843137!black,line width=0.3pt] (axis cs:0,0) rectangle (axis cs:0,0);
		\addlegendimage{ybar,ybar legend,draw=white!29.8039215686275!black,fill=white!54.9019607843137!black,line width=0.3pt}
		\addlegendentry{\texttt{RTR}}

		\draw[draw=white!29.8039215686275!black,fill=color2,line width=0.3pt] (axis cs:0,0) rectangle (axis cs:0,0);
		\addlegendimage{ybar,ybar legend,draw=white!29.8039215686275!black,fill=color2,line width=0.3pt}
		\addlegendentry{\texttt{RTR-B}}

		\addplot [semithick, white!29.8039215686275!black, forget plot]
		table {%
				-0.266666666666667 -14.3740933133648
				-0.266666666666667 -16.556649761519
			};
		\addplot [semithick, white!29.8039215686275!black, forget plot]
		table {%
				-0.266666666666667 -10.1576470484392
				-0.266666666666667 -8.72338587568205
			};
		\addplot [semithick, white!29.8039215686275!black, forget plot]
		table {%
				-0.332 -16.556649761519
				-0.201333333333333 -16.556649761519
			};
		\addplot [semithick, white!29.8039215686275!black, forget plot]
		table {%
				-0.332 -8.72338587568205
				-0.201333333333333 -8.72338587568205
			};
		\addplot [semithick, white!29.8039215686275!black, forget plot]
		table {%
				0 -3.94972475304225
				0 -4.80985953249471
			};
		\addplot [semithick, white!29.8039215686275!black, forget plot]
		table {%
				0 -3.12225077857308
				0 -1.8964424298428
			};
		\addplot [semithick, white!29.8039215686275!black, forget plot]
		table {%
				-0.0653333333333333 -4.80985953249471
				0.0653333333333333 -4.80985953249471
			};
		\addplot [semithick, white!29.8039215686275!black, forget plot]
		table {%
				-0.0653333333333333 -1.8964424298428
				0.0653333333333333 -1.8964424298428
			};
		\addplot [semithick, white!29.8039215686275!black, forget plot]
		table {%
				0.266666666666667 -3.93821977446198
				0.266666666666667 -4.75071755554685
			};
		\addplot [semithick, white!29.8039215686275!black, forget plot]
		table {%
				0.266666666666667 -3.27984985034484
				0.266666666666667 -2.30609641547047
			};
		\addplot [semithick, white!29.8039215686275!black, forget plot]
		table {%
				0.201333333333333 -4.75071755554685
				0.332 -4.75071755554685
			};
		\addplot [semithick, white!29.8039215686275!black, forget plot]
		table {%
				0.201333333333333 -2.30609641547047
				0.332 -2.30609641547047
			};
		\addplot [semithick, white!29.8039215686275!black, forget plot]
		table {%
				-0.358133333333333 -12.0948502079205
				-0.1752 -12.0948502079205
			};
		\addplot [semithick, white!29.8039215686275!black, forget plot]
		table {%
				-0.0914666666666667 -3.53922831823539
				0.0914666666666667 -3.53922831823539
			};
		\addplot [semithick, white!29.8039215686275!black, forget plot]
		table {%
				0.1752 -3.61119378304041
				0.358133333333333 -3.61119378304041
			};
	\end{axis}

\end{tikzpicture}
		\end{adjustbox}
		\caption{UR10}
		\label{fig:boxplot_ur10}
	\end{subfigure}
	~
	\begin{subfigure}[t]{0.4\textwidth}
		\centering
		\begin{adjustbox}{width=\linewidth}
\begin{tikzpicture}

	\definecolor{color0}{rgb}{0.917647058823529,0.917647058823529,0.949019607843137}
	\definecolor{color1}{rgb}{0.347058823529412,0.458823529411765,0.641176470588235}
	\definecolor{color2}{rgb}{0.71078431372549,0.363725490196079,0.375490196078431}

	\begin{axis}[
			axis background/.style={fill=color0},
			axis line style={white},
			tick align=outside,
			x grid style={white},
			xmajorticks=false,
			xmin=-0.5, xmax=0.5,
			xtick style={color=white!15!black},
			xtick={0},
			xticklabels={ur10},
			y grid style={white},
			ylabel={$\log_{10}$ Pos. Error},
			ymajorgrids,
			ymin=-10.3463844175168, ymax=0.655106088532708,
			ytick style={color=white!15!black},
			ytick={-12,-10,-8,-6,-4,-2,0,2},
			width=9cm,
			height=6cm
		]
		\path [draw=white!29.8039215686275!black, fill=color1, semithick]
		(axis cs:-0.358133333333333,-7.03499914390769)
		--(axis cs:-0.1752,-7.03499914390769)
		--(axis cs:-0.1752,-0.917636638779786)
		--(axis cs:-0.358133333333333,-0.917636638779786)
		--(axis cs:-0.358133333333333,-7.03499914390769)
		--cycle;
		\path [draw=white!29.8039215686275!black, fill=white!54.9019607843137!black, semithick]
		(axis cs:-0.0914666666666667,-3.87846280041835)
		--(axis cs:0.0914666666666667,-3.87846280041835)
		--(axis cs:0.0914666666666667,-2.14478312967737)
		--(axis cs:-0.0914666666666667,-2.14478312967737)
		--(axis cs:-0.0914666666666667,-3.87846280041835)
		--cycle;
		\path [draw=white!29.8039215686275!black, fill=color2, semithick]
		(axis cs:0.1752,-3.83370263844915)
		--(axis cs:0.358133333333333,-3.83370263844915)
		--(axis cs:0.358133333333333,-1.27059828644112)
		--(axis cs:0.1752,-1.27059828644112)
		--(axis cs:0.1752,-3.83370263844915)
		--cycle;
		\draw[draw=white!29.8039215686275!black,fill=color1,line width=0.3pt] (axis cs:0,0) rectangle (axis cs:0,0);

		\draw[draw=white!29.8039215686275!black,fill=white!54.9019607843137!black,line width=0.3pt] (axis cs:0,0) rectangle (axis cs:0,0);

		\draw[draw=white!29.8039215686275!black,fill=color2,line width=0.3pt] (axis cs:0,0) rectangle (axis cs:0,0);

		\addplot [semithick, white!29.8039215686275!black, forget plot]
		table {%
				-0.266666666666667 -7.03499914390769
				-0.266666666666667 -9.84631666724179
			};
		\addplot [semithick, white!29.8039215686275!black, forget plot]
		table {%
				-0.266666666666667 -0.917636638779786
				-0.266666666666667 0.155038338257731
			};
		\addplot [semithick, white!29.8039215686275!black, forget plot]
		table {%
				-0.332 -9.84631666724179
				-0.201333333333333 -9.84631666724179
			};
		\addplot [semithick, white!29.8039215686275!black, forget plot]
		table {%
				-0.332 0.155038338257731
				-0.201333333333333 0.155038338257731
			};
		\addplot [semithick, white!29.8039215686275!black, forget plot]
		table {%
				0 -3.87846280041835
				0 -4.79046324901803
			};
		\addplot [semithick, white!29.8039215686275!black, forget plot]
		table {%
				0 -2.14478312967737
				0 -0.128089156961937
			};
		\addplot [semithick, white!29.8039215686275!black, forget plot]
		table {%
				-0.0653333333333333 -4.79046324901803
				0.0653333333333333 -4.79046324901803
			};
		\addplot [semithick, white!29.8039215686275!black, forget plot]
		table {%
				-0.0653333333333333 -0.128089156961937
				0.0653333333333333 -0.128089156961937
			};
		\addplot [semithick, white!29.8039215686275!black, forget plot]
		table {%
				0.266666666666667 -3.83370263844915
				0.266666666666667 -4.79285597854558
			};
		\addplot [semithick, white!29.8039215686275!black, forget plot]
		table {%
				0.266666666666667 -1.27059828644112
				0.266666666666667 -0.340402114783476
			};
		\addplot [semithick, white!29.8039215686275!black, forget plot]
		table {%
				0.201333333333333 -4.79285597854558
				0.332 -4.79285597854558
			};
		\addplot [semithick, white!29.8039215686275!black, forget plot]
		table {%
				0.201333333333333 -0.340402114783476
				0.332 -0.340402114783476
			};
		\addplot [semithick, white!29.8039215686275!black, forget plot]
		table {%
				-0.358133333333333 -5.75195676411938
				-0.1752 -5.75195676411938
			};
		\addplot [semithick, white!29.8039215686275!black, forget plot]
		table {%
				-0.0914666666666667 -3.42433652832338
				0.0914666666666667 -3.42433652832338
			};
		\addplot [semithick, white!29.8039215686275!black, forget plot]
		table {%
				0.1752 -3.32487485662387
				0.358133333333333 -3.32487485662387
			};
	\end{axis}

\end{tikzpicture}
		\end{adjustbox}
		\caption{UR10$+$}
		\label{fig:boxplot_ur10_bounded}
	\end{subfigure}
	\caption{Box-and-whiskers plots summarizing end-effector position error over 2,000 experiments with the UR10 manipulator, (a) without and (b) with joint angle limits.}
	\label{fig:boxplots_ur10}
\end{figure*}

\begin{figure*}
	\centering
	\begin{subfigure}[t]{0.4\textwidth}
		\centering
		\begin{adjustbox}{width=\linewidth}
\begin{tikzpicture}

	\definecolor{color0}{rgb}{0.917647058823529,0.917647058823529,0.949019607843137}
	\definecolor{color1}{rgb}{0.16078431372549,0.27843137254902,0.447058823529412}

	\begin{axis}[
			axis background/.style={fill=color0},
			axis line style={white},
			legend cell align={left},
			legend style={
					fill opacity=0.8,
					draw opacity=1,
					text opacity=1,
					at={(0.45,0.35)},
					anchor=north west,
					draw=white!80!black,
					fill=color0,
					nodes={scale=0.8, transform shape}
				},
			tick align=outside,
			x grid style={white},
			xlabel={log\(\displaystyle _{10}\) Pos. Error Tolerance (m)},
			xmajorgrids,
			xmin=-6.35, xmax=1.35,
			xtick style={color=white!15!black},
			y grid style={white},
			ylabel={Success Rate},
			ymajorgrids,
			ymin=-0.05, ymax=1.15,
			ytick style={color=white!15!black},
			width=9cm,
			height=6cm
		]
		\path [draw=white, fill=color1, opacity=0.29]
		(axis cs:-6,0.90314751261968)
		--(axis cs:-6,0.875724784561096)
		--(axis cs:-5,0.875724784561096)
		--(axis cs:-4,0.875724784561096)
		--(axis cs:-3,0.877304817509467)
		--(axis cs:-2,0.884161269171258)
		--(axis cs:-1,0.885217551899566)
		--(axis cs:0,0.885217551899566)
		--(axis cs:1,0.885217551899566)
		--(axis cs:1,0.911628743038352)
		--(axis cs:1,0.911628743038352)
		--(axis cs:0,0.911628743038352)
		--(axis cs:-1,0.911628743038352)
		--(axis cs:-2,0.910687914665301)
		--(axis cs:-3,0.904563145647629)
		--(axis cs:-4,0.90314751261968)
		--(axis cs:-5,0.90314751261968)
		--(axis cs:-6,0.90314751261968)
		--cycle;

		\path [draw=white, fill=black, opacity=0.29]
		(axis cs:-6,0.00125502633866996)
		--(axis cs:-6,0)
		--(axis cs:-5,0)
		--(axis cs:-4,0.205165943594171)
		--(axis cs:-3,0.771737803190953)
		--(axis cs:-2,0.855776133625265)
		--(axis cs:-1,0.861014730995337)
		--(axis cs:0,0.861539004137503)
		--(axis cs:1,0.861539004137503)
		--(axis cs:1,0.890372303914597)
		--(axis cs:1,0.890372303914597)
		--(axis cs:0,0.890372303914597)
		--(axis cs:-1,0.889898022044709)
		--(axis cs:-2,0.88515106976592)
		--(axis cs:-3,0.807423565987449)
		--(axis cs:-4,0.241635098263937)
		--(axis cs:-5,0.00125502633866996)
		--(axis cs:-6,0.00125502633866996)
		--cycle;

		\path [draw=white, fill=red!60!black, opacity=0.29]
		(axis cs:-6,0.00125502633866996)
		--(axis cs:-6,0)
		--(axis cs:-5,0)
		--(axis cs:-4,0.189674982086987)
		--(axis cs:-3,0.876778049783862)
		--(axis cs:-2,0.949086799562536)
		--(axis cs:-1,0.954600103782466)
		--(axis cs:0,0.954600103782466)
		--(axis cs:1,0.954600103782466)
		--(axis cs:1,0.971060238203285)
		--(axis cs:1,0.971060238203285)
		--(axis cs:0,0.971060238203285)
		--(axis cs:-1,0.971060238203285)
		--(axis cs:-2,0.966587891485102)
		--(axis cs:-3,0.904091358034759)
		--(axis cs:-4,0.225172322945761)
		--(axis cs:-5,0.00125502633866996)
		--(axis cs:-6,0.00125502633866996)
		--cycle;

		\addplot [line width=0.52pt, color1, opacity=1.0, mark=square*, mark size=2.5, mark options={solid}]
		table {%
				-6 0.89
				-5 0.89
				-4 0.89
				-3 0.8915
				-2 0.898
				-1 0.899
				0 0.899
				1 0.899
			};
		\addlegendentry{\texttt{trust-exact/constr}}
		\addplot [line width=0.52pt, black, opacity=1.0, dash pattern=on 1pt off 3pt on 3pt off 3pt, mark=diamond*, mark size=3.5, mark options={solid}]
		table {%
				-6 0
				-5 0
				-4 0.223
				-3 0.79
				-2 0.871
				-1 0.876
				0 0.8765
				1 0.8765
			};
		\addlegendentry{\texttt{RTR}}
		\addplot [line width=0.52pt, red!60!black, opacity=1.0, mark=asterisk, mark size=3.5, mark options={solid}]
		table {%
				-6 0
				-5 0
				-4 0.207
				-3 0.891
				-2 0.9585
				-1 0.9635
				0 0.9635
				1 0.9635
			};
		\addlegendentry{\texttt{RTR-B}}
	\end{axis}

\end{tikzpicture}
		\end{adjustbox}
		\caption{UR10}
		\label{fig:waterfall_ur10}
	\end{subfigure}
	~
	\begin{subfigure}[t]{0.4\textwidth}
		\centering
		\begin{adjustbox}{width=\linewidth}
\begin{tikzpicture}

	\definecolor{color0}{rgb}{0.917647058823529,0.917647058823529,0.949019607843137}
	\definecolor{color1}{rgb}{0.16078431372549,0.27843137254902,0.447058823529412}

	\begin{axis}[
			axis background/.style={fill=color0},
			axis line style={white},
			tick align=outside,
			x grid style={white},
			xlabel={log\(\displaystyle _{10}\) Pos. Error Tolerance (m)},
			xmajorgrids,
			xmin=-6.35, xmax=1.35,
			xtick style={color=white!15!black},
			y grid style={white},
			ylabel={Success Rate},
			ymajorgrids,
			ymin=-0.05, ymax=1.15,
			ytick style={color=white!15!black},
			width=9cm,
			height=6cm
		]
		\path [draw=white, fill=color1, opacity=0.29]
		(axis cs:-6,0.453794794750727)
		--(axis cs:-6,0.410401858750292)
		--(axis cs:-5,0.59049436754273)
		--(axis cs:-4,0.59352063645919)
		--(axis cs:-3,0.595034091222919)
		--(axis cs:-2,0.607149480369027)
		--(axis cs:-1,0.610685754385021)
		--(axis cs:0,0.610685754385021)
		--(axis cs:1,0.610685754385021)
		--(axis cs:1,0.652932508480512)
		--(axis cs:1,0.652932508480512)
		--(axis cs:0,0.652932508480512)
		--(axis cs:-1,0.652932508480512)
		--(axis cs:-2,0.649478904167661)
		--(axis cs:-3,0.637628996281155)
		--(axis cs:-4,0.636146788925477)
		--(axis cs:-5,0.633181733609159)
		--(axis cs:-6,0.453794794750727)
		--cycle;

		\path [draw=white, fill=black, opacity=0.29]
		(axis cs:-6,0.00125502633866996)
		--(axis cs:-6,0)
		--(axis cs:-5,0)
		--(axis cs:-4,0.172298233381162)
		--(axis cs:-3,0.651187141169958)
		--(axis cs:-2,0.731694561764896)
		--(axis cs:-1,0.737328390878813)
		--(axis cs:0,0.737840763285224)
		--(axis cs:1,0.737840763285224)
		--(axis cs:1,0.775416025955465)
		--(axis cs:1,0.775416025955465)
		--(axis cs:0,0.775416025955465)
		--(axis cs:-1,0.774929844164983)
		--(axis cs:-2,0.769579577186646)
		--(axis cs:-3,0.692315446552138)
		--(axis cs:-4,0.206601115379196)
		--(axis cs:-5,0.00125502633866996)
		--(axis cs:-6,0.00125502633866996)
		--cycle;

		\path [draw=white, fill=red!60!black, opacity=0.29]
		(axis cs:-6,0.00125502633866996)
		--(axis cs:-6,0)
		--(axis cs:-5,0)
		--(axis cs:-4,0.15354041524985)
		--(axis cs:-3,0.586964795919246)
		--(axis cs:-2,0.649665375962096)
		--(axis cs:-1,0.65676898873443)
		--(axis cs:0,0.65676898873443)
		--(axis cs:1,0.65676898873443)
		--(axis cs:1,0.697717693844559)
		--(axis cs:1,0.697717693844559)
		--(axis cs:0,0.697717693844559)
		--(axis cs:-1,0.697717693844559)
		--(axis cs:-2,0.69084154953538)
		--(axis cs:-3,0.629721426970925)
		--(axis cs:-4,0.186415310512347)
		--(axis cs:-5,0.00125502633866996)
		--(axis cs:-6,0.00125502633866996)
		--cycle;

		\addplot [line width=0.52pt, color1, opacity=1.0, mark=square*, mark size=2.5, mark options={solid}]
		table {%
				-6 0.432
				-5 0.612
				-4 0.615
				-3 0.6165
				-2 0.6285
				-1 0.632
				0 0.632
				1 0.632
			};
		\addplot [line width=0.52pt, black, opacity=1.0, dash pattern=on 1pt off 3pt on 3pt off 3pt, mark=diamond*, mark size=3.5, mark options={solid}]
		table {%
				-6 0
				-5 0
				-4 0.189
				-3 0.672
				-2 0.751
				-1 0.7565
				0 0.757
				1 0.757
			};
		\addplot [line width=0.52pt, red!60!black, opacity=1.0, mark=asterisk, mark size=3.5, mark options={solid}]
		table {%
				-6 0
				-5 0
				-4 0.1695
				-3 0.6085
				-2 0.6705
				-1 0.6775
				0 0.6775
				1 0.6775
			};
	\end{axis}

\end{tikzpicture}
		\end{adjustbox}
		\caption{UR10$+$}
		\label{fig:waterfall_ur10_bounded}
	\end{subfigure}
	\caption{Waterfall curves of success rate versus position error tolerance for 2,000 experiments with the UR10 manipulator, without (a) and with (b) joint angle limits. The shaded regions are 95$\%$ Jeffreys confidence intervals centered on the solid lines. The rotation error tolerance is fixed at 0.01 rad.}
	\label{fig:waterfall_ur10}
\end{figure*}
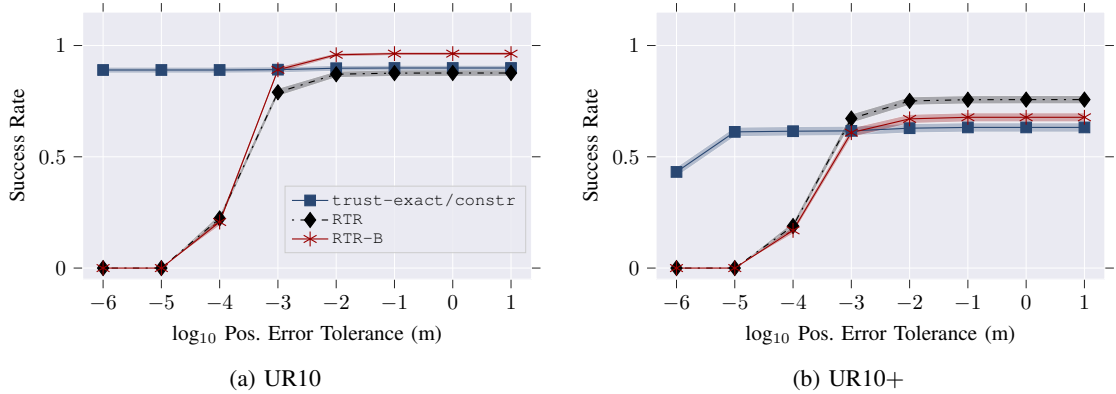

\subsection{Obstacle Avoidance}\label{subsec:revolute_results_unc}
\begin{figure*}
	\centering
	\begin{subfigure}{0.31\textwidth}
		\centering
		\includegraphics[width = \textwidth]{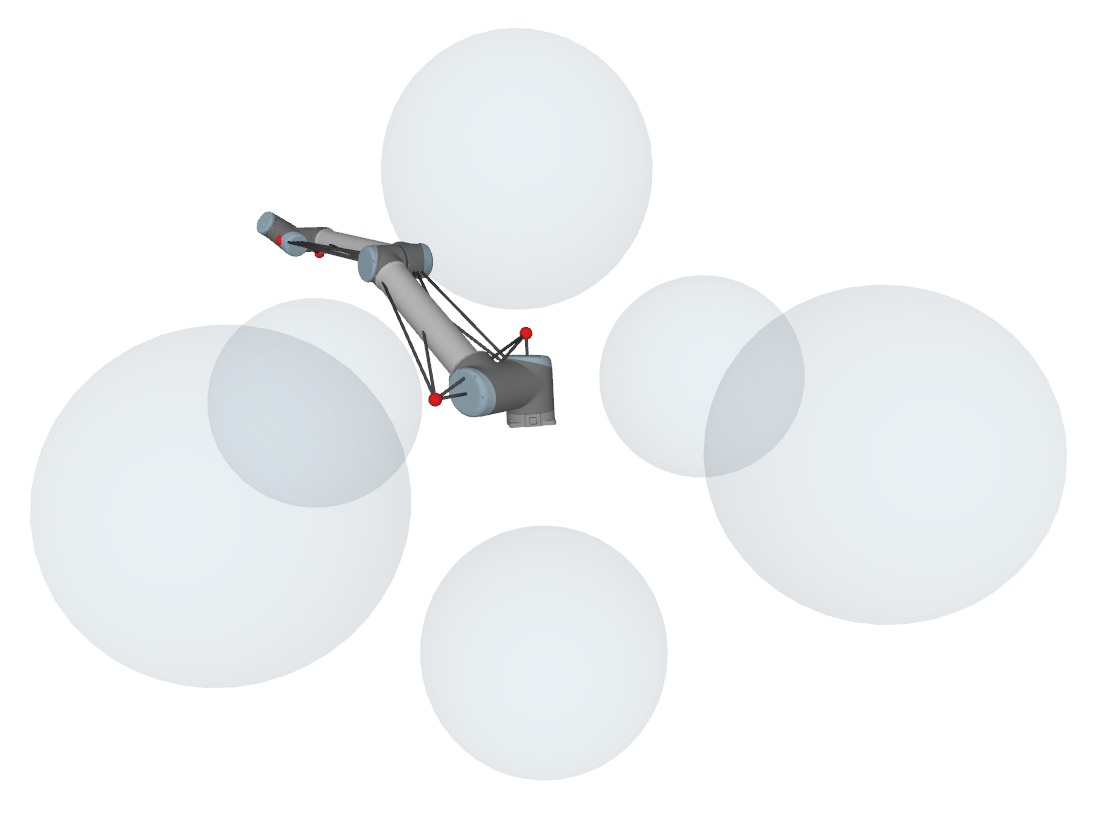}
		\caption{Universal Robotics UR10 (\textit{octahedron})}
	\end{subfigure}
	\begin{subfigure}{0.31\textwidth}
		\centering
		\includegraphics[width = 0.7\textwidth]{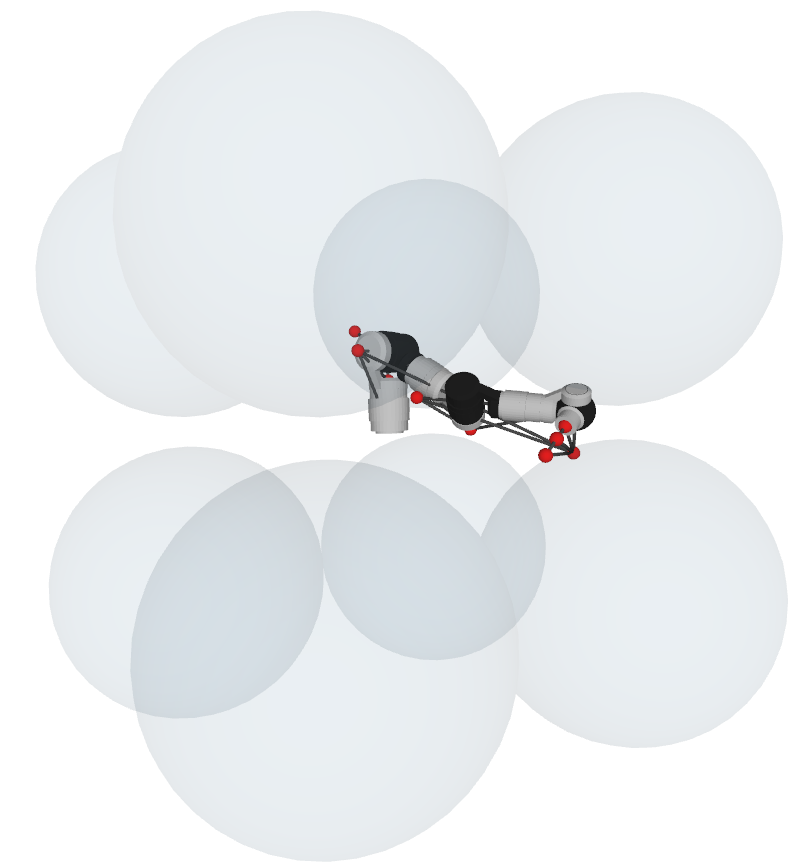}
		\caption{Schunk LWA4D (\textit{cube})}
	\end{subfigure}
	\begin{subfigure}{0.31\textwidth}
		\centering
		\includegraphics[width = 0.8\textwidth]{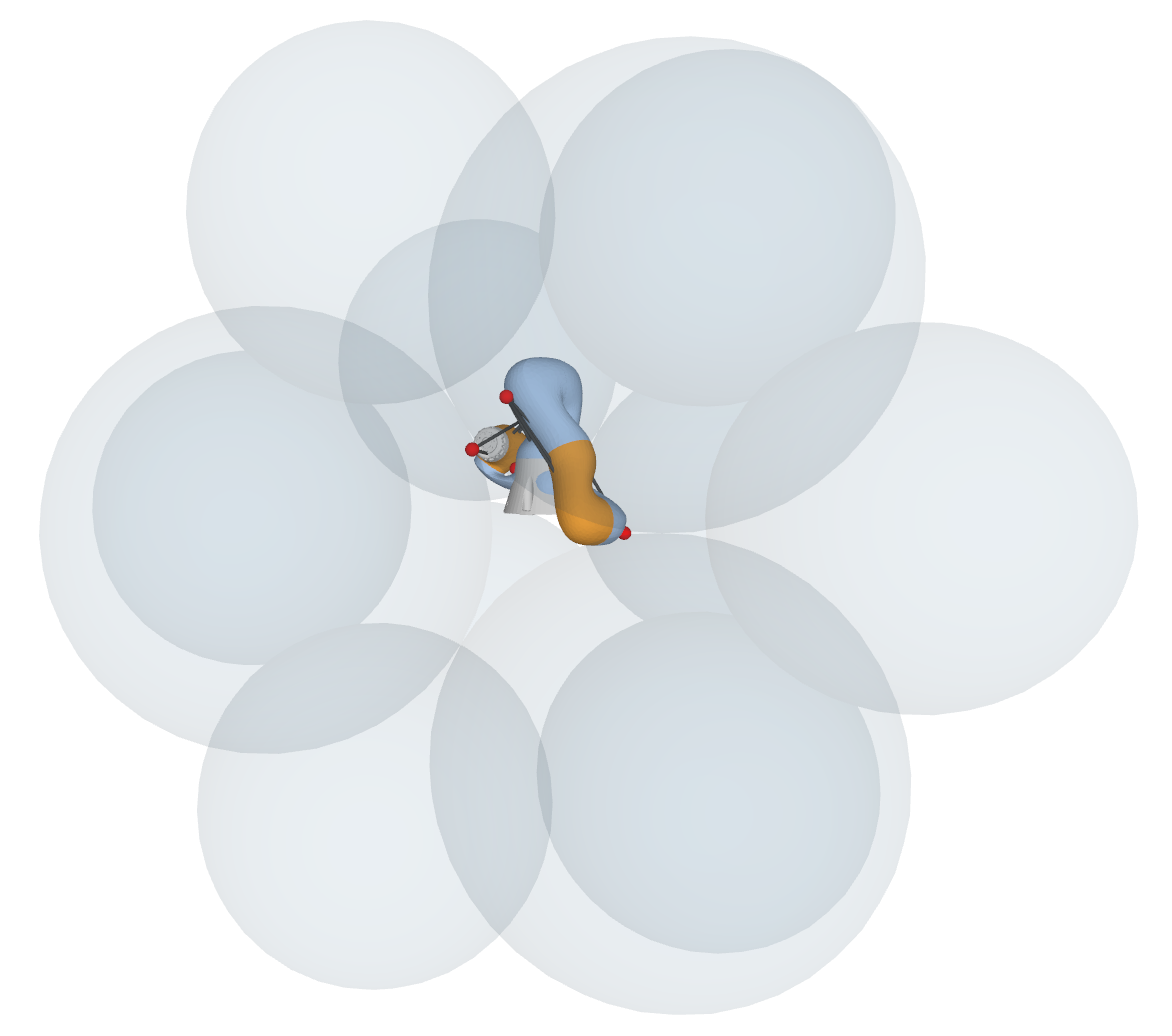}
		\caption{KUKA-IIWA (\textit{icosahedron})}
	\end{subfigure}
	\caption{A selection of robot and environment combinations used to generate IK problems in \Cref{subsec:revolute_results_unc}.}\label{fig:envs}
\end{figure*}
Finally, we analyze the performance of our algorithm on revolute manipulators in environments with a varying number of spherical obstacles, as shown in~\cref{fig:envs}.
For each environment and robot pair, 3,000 IK problems were randomly generated, some of which may not be solvable (i.e., no collision-free solutions exist).
For experiments performed in this section we chose $\Vector{e} = \log(\Matrix{T}(\Matrix{\Theta})^{-1}\Matrix{T}_{goal})$ as the error in \cref{prob:slsqp}, which is the vector of exponential coordinates describing the screw motion between the current pose and goal pose of the end-effector.
Further, the FK and Jacobian computations are carried out using the popular \textit{product of exponentials} approach~\cite{lynch2017modern}, avoiding trigonometric identities inherent to the DH parametrization.
We solve the problem in~\cref{prob:slsqp} using sequential quadratic programming, namely the \texttt{SLSQP} solver implemented in the \texttt{scipy} library.
Due to its speed and accuracy, this method is a popular choice for nonlinear programming solutions to IK~\cite{beeson2015trac}.
We empirically determined that a maximum of 200 iterations and an objective function value of $10^{-7}$ were effective stopping criteria for our experiments.

The results of our evaluation are shown in \cref{fig:exp1}.
Each column represents a different fixed set of obstacles in the manipulators' environment.
The first column contains results for obstacle-free problems, while the remaining columns use spherical obstacles placed on the vertices of an octahedron, cube, and icosahedron, respectively.
The top row compares the success rate of \texttt{RTR-B} and \texttt{SLSQP}, with dashed lines indicating the portion of problems that are known to be feasible (i.e., the configuration used to generate the problem is not in collision).
The box-and-whiskers plots in the middle two rows describe the distribution of position and rotation errors, with the thresholds for success (0.01 m and 0.01 rad) drawn as dashed lines.
Finally, the bottom row compares the distribution of solution times.

In terms of success rate, our algorithm outperforms \texttt{SLSQP} in all experiments.
The position and rotation error distributions displayed in the box-and-whiskers plots reveal that this is due to \texttt{RTR-B} providing solutions with lower position and rotation error on average.
The performance gap is particularly large for the UR10, which both algorithms struggled most with in all environments.
Finally, the runtime for our Riemannian solver is expectedly higher than for \texttt{SLSQP} in the obstacle-free case, as seen in previous experiments.
However, the addition of obstacles leads to significantly faster relative performance for \texttt{RTR-B} on all the manipulators tested.
More importantly, we note that the runtime of \texttt{SLSQP} exhibits a more significant relative increase than \texttt{RTR-B} when obstacles are introduced.
We suspect that this is due to the nonlinear mapping between joint angles and obstacle locations that makes the problem significantly more difficult to solve in configuration space.
In contrast, the effect is avoided by our distance-based approach because collision avoidance constraints are treated in the same way as structural and joint limit constrains.

\begin{figure*}
	\captionsetup[subfloat]{position=top,labelformat=empty}
	\centering
	\resizebox{\textwidth}{!}{
		\begin{tabular}{@{}c@{}}
			\subfloat[]{
				\centering
				\begin{adjustbox}{width=\linewidth}
\begin{tikzpicture}

\definecolor{color0}{rgb}{0.917647058823529,0.917647058823529,0.949019607843137}
\definecolor{color1}{rgb}{0.347058823529412,0.458823529411765,0.641176470588235}
\definecolor{color2}{rgb}{0.71078431372549,0.363725490196079,0.375490196078431}

\begin{groupplot}[group style={group size=4 by 1}]
\nextgroupplot[
axis background/.style={fill=color0},
axis line style={white},
legend cell align={left},
legend style={
  fill opacity=0.8,
  draw opacity=1,
  text opacity=1,
  at={(0.03,0.03)},
  anchor=south west,
  draw=white!80!black,
  fill=color0
},
tick align=outside,
tick pos=both,
title={Environment = Free},
x grid style={white},
xmin=-0.5, xmax=2.5,
xtick style={color=white!15!black},
xtick={0,1,2},
xticklabels={UR10,KUKA-IIWA,Schunk-LWA4D},
y grid style={white},
ylabel={Success Rate},
ymajorgrids,
ymin=0, ymax=1.0416,
ytick style={color=white!15!black},
ytick={0,0.2,0.4,0.6,0.8,1,1.2},
yticklabels={0.0,0.2,0.4,0.6,0.8,1.0,1.2}
]
\draw[draw=black,fill=color1] (axis cs:-0.35,0) rectangle (axis cs:-0.05,0.881);
\addlegendimage{ybar,ybar legend,draw=black,fill=color1};
\addlegendentry{j-SLSQP}

\draw[draw=black,fill=color1] (axis cs:0.65,0) rectangle (axis cs:0.95,0.9758);
\draw[draw=black,fill=color1] (axis cs:1.65,0) rectangle (axis cs:1.95,0.9614);
\draw[draw=black,fill=color2] (axis cs:0.05,0) rectangle (axis cs:0.35,0.9466);
\addlegendimage{ybar,ybar legend,draw=black,fill=color2};
\addlegendentry{RTR-B}

\draw[draw=black,fill=color2] (axis cs:1.05,0) rectangle (axis cs:1.35,0.992);
\draw[draw=black,fill=color2] (axis cs:2.05,0) rectangle (axis cs:2.35,0.9888);
\addplot [line width=1.08pt, white!26!black, forget plot]
table {%
-0.2 0.881
-0.2 0.881
};
\addplot [line width=1.08pt, white!26!black, forget plot]
table {%
0.8 0.9758
0.8 0.9758
};
\addplot [line width=1.08pt, white!26!black, forget plot]
table {%
1.8 0.9614
1.8 0.9614
};
\addplot [line width=1.08pt, white!26!black, forget plot]
table {%
0.2 0.9466
0.2 0.9466
};
\addplot [line width=1.08pt, white!26!black, forget plot]
table {%
1.2 0.992
1.2 0.992
};
\addplot [line width=1.08pt, white!26!black, forget plot]
table {%
2.2 0.9888
2.2 0.9888
};
\addplot [semithick, black, dashed, forget plot]
table {%
-0.35 1
0.4 1
};
\addplot [semithick, black, dashed, forget plot]
table {%
0.655 1
1.405 1
};
\addplot [semithick, black, dashed, forget plot]
table {%
1.66 1
2.38 1
};

\nextgroupplot[
axis background/.style={fill=color0},
axis line style={white},
scaled y ticks=manual:{}{\pgfmathparse{#1}},
tick align=outside,
tick pos=both,
title={Environment = Octahedron},
x grid style={white},
xmin=-0.5, xmax=2.5,
xtick style={color=white!15!black},
xtick={0,1,2},
xticklabels={UR10,KUKA-IIWA,Schunk-LWA4D},
y grid style={white},
ymajorgrids,
ymin=0, ymax=1.0416,
ytick style={color=white!15!black},
yticklabels={}
]
\draw[draw=black,fill=color1] (axis cs:-0.35,0) rectangle (axis cs:-0.05,0.3508);
\draw[draw=black,fill=color1] (axis cs:0.65,0) rectangle (axis cs:0.95,0.5904);
\draw[draw=black,fill=color1] (axis cs:1.65,0) rectangle (axis cs:1.95,0.656);
\draw[draw=black,fill=color2] (axis cs:0.05,0) rectangle (axis cs:0.35,0.4618);
\draw[draw=black,fill=color2] (axis cs:1.05,0) rectangle (axis cs:1.35,0.772);
\draw[draw=black,fill=color2] (axis cs:2.05,0) rectangle (axis cs:2.35,0.7542);
\addplot [line width=1.08pt, white!26!black]
table {%
-0.2 0.3508
-0.2 0.3508
};
\addplot [line width=1.08pt, white!26!black]
table {%
0.8 0.5904
0.8 0.5904
};
\addplot [line width=1.08pt, white!26!black]
table {%
1.8 0.656
1.8 0.656
};
\addplot [line width=1.08pt, white!26!black]
table {%
0.2 0.4618
0.2 0.4618
};
\addplot [line width=1.08pt, white!26!black]
table {%
1.2 0.772
1.2 0.772
};
\addplot [line width=1.08pt, white!26!black]
table {%
2.2 0.7542
2.2 0.7542
};
\addplot [semithick, black, dashed]
table {%
-0.35 0.451
0.4 0.451
};
\addplot [semithick, black, dashed]
table {%
0.655 0.6208
1.405 0.6208
};
\addplot [semithick, black, dashed]
table {%
1.66 0.6496
2.38 0.6496
};

\nextgroupplot[
axis background/.style={fill=color0},
axis line style={white},
scaled y ticks=manual:{}{\pgfmathparse{#1}},
tick align=outside,
tick pos=both,
title={Environment = Cube},
x grid style={white},
xmin=-0.5, xmax=2.5,
xtick style={color=white!15!black},
xtick={0,1,2},
xticklabels={UR10,KUKA-IIWA,Schunk-LWA4D},
y grid style={white},
ymajorgrids,
ymin=0, ymax=1.0416,
ytick style={color=white!15!black},
yticklabels={}
]
\draw[draw=black,fill=color1] (axis cs:-0.35,0) rectangle (axis cs:-0.05,0.319);
\draw[draw=black,fill=color1] (axis cs:0.65,0) rectangle (axis cs:0.95,0.6324);
\draw[draw=black,fill=color1] (axis cs:1.65,0) rectangle (axis cs:1.95,0.715);
\draw[draw=black,fill=color2] (axis cs:0.05,0) rectangle (axis cs:0.35,0.3326);
\draw[draw=black,fill=color2] (axis cs:1.05,0) rectangle (axis cs:1.35,0.6506);
\draw[draw=black,fill=color2] (axis cs:2.05,0) rectangle (axis cs:2.35,0.7342);
\addplot [line width=1.08pt, white!26!black]
table {%
-0.2 0.319
-0.2 0.319
};
\addplot [line width=1.08pt, white!26!black]
table {%
0.8 0.6324
0.8 0.6324
};
\addplot [line width=1.08pt, white!26!black]
table {%
1.8 0.715
1.8 0.715
};
\addplot [line width=1.08pt, white!26!black]
table {%
0.2 0.3326
0.2 0.3326
};
\addplot [line width=1.08pt, white!26!black]
table {%
1.2 0.6506
1.2 0.6506
};
\addplot [line width=1.08pt, white!26!black]
table {%
2.2 0.7342
2.2 0.7342
};
\addplot [semithick, black, dashed]
table {%
-0.349999999999999 0.3786
0.400000000000001 0.3786
};
\addplot [semithick, black, dashed]
table {%
0.655000000000001 0.5424
1.405 0.5424
};
\addplot [semithick, black, dashed]
table {%
1.66 0.7112
2.38 0.7112
};

\nextgroupplot[
axis background/.style={fill=color0},
axis line style={white},
scaled y ticks=manual:{}{\pgfmathparse{#1}},
tick align=outside,
tick pos=both,
title={Environment = Icosahedron},
x grid style={white},
xmin=-0.5, xmax=2.5,
xtick style={color=white!15!black},
xtick={0,1,2},
xticklabels={UR10,KUKA-IIWA,Schunk-LWA4D},
y grid style={white},
ymajorgrids,
ymin=0, ymax=1.0416,
ytick style={color=white!15!black},
yticklabels={}
]
\draw[draw=black,fill=color1] (axis cs:-0.35,0) rectangle (axis cs:-0.05,0.0222);
\draw[draw=black,fill=color1] (axis cs:0.65,0) rectangle (axis cs:0.95,0.4156);
\draw[draw=black,fill=color1] (axis cs:1.65,0) rectangle (axis cs:1.95,0.5622);
\draw[draw=black,fill=color2] (axis cs:0.05,0) rectangle (axis cs:0.35,0.1014);
\draw[draw=black,fill=color2] (axis cs:1.05,0) rectangle (axis cs:1.35,0.4832);
\draw[draw=black,fill=color2] (axis cs:2.05,0) rectangle (axis cs:2.35,0.6288);
\addplot [line width=1.08pt, white!26!black]
table {%
-0.2 0.0222
-0.2 0.0222
};
\addplot [line width=1.08pt, white!26!black]
table {%
0.8 0.4156
0.8 0.4156
};
\addplot [line width=1.08pt, white!26!black]
table {%
1.8 0.5622
1.8 0.5622
};
\addplot [line width=1.08pt, white!26!black]
table {%
0.2 0.1014
0.2 0.1014
};
\addplot [line width=1.08pt, white!26!black]
table {%
1.2 0.4832
1.2 0.4832
};
\addplot [line width=1.08pt, white!26!black]
table {%
2.2 0.6288
2.2 0.6288
};
\addplot [semithick, black, dashed]
table {%
-0.35 0.1206
0.4 0.1206
};
\addplot [semithick, black, dashed]
table {%
0.655 0.3786
1.405 0.3786
};
\addplot [semithick, black, dashed]
table {%
1.66 0.569
2.38 0.569
};
\end{groupplot}

\end{tikzpicture}
				\end{adjustbox}
			} \\[-3ex]
			\subfloat[]{
				\centering
				\begin{adjustbox}{width=\linewidth}
					\input{figs/results/exp1_poserr.tex}
				\end{adjustbox}
			} \\[-3ex]
			\subfloat[]{
				\centering
				\begin{adjustbox}{width=\linewidth}
					\input{figs/results/exp1_roterr.tex}
				\end{adjustbox}
			} \\[-3ex]
			\subfloat[]{
				\centering
				\begin{adjustbox}{width=\linewidth}
					\input{figs/results/exp1_soltime.tex}
				\end{adjustbox}
			}
		\end{tabular}
	}
	\caption{Experimental results comparing our method with local optimization. Each column contains the success rates, position errors, rotation errors, and solution times for 3,000 randomly generated problems in the titular environment. The success rate in the top row is measured relative to the total number of generated problems, many of which are infeasible. The upper bound defining success in each box-and-whiskers plot is shown as a dashed line. The dashed lines in the top row indicate the lower bound on the number of feasible problems for each robot-environment pair.}\label{fig:exp1}
\end{figure*}


\section{Conclusion}\label{sec:conclusion}
In this paper, we have presented a novel and elegant procedure for formulating many inverse kinematics problems in the language of distance geometry.
This distance-geometric perspective on IK allows us to leverage powerful low-rank matrix completion methods, resulting in an algorithm (\texttt{RTR-*}) that can efficiently compute IK solutions for a variety of robots using Riemannian optimization.
Our experiments show that that \texttt{RTR-*} outperforms competing algorithms in terms of success rate and number of iterations on IK problems for robots with multiple end-effectors, both with and without the inclusion of joint limits.
We have also demonstrated the feasibility of this approach to solve IK for commercial manipulators, achieving success rates competitive with both comparable~\cite{erleben_solving_2019} and conventional~\cite{lynch2017modern} angle-based methods.
Notably, we observe that our algorithm performs significantly better than a conventional method, both in terms of success rate and runtime, when the IK problem requires finding configurations that are not in collision with obstacles (modelled as spheres).
Overall, these experimental results indicate that a distance-geometric approach is particularly advantageous when a large number of workspace constraints are present.
We believe our algorithm provides a valuable benchmark for IK solvers, as well as a good starting point for future research into distance-geometric formulations of this problem.

\subsection{Limitations}
Avoiding reflections in the solution set of certain problems remains a core challenge when using a purely distance-based IK approach.
As noted in \Cref{prop:equivalence}, this spells out an important limitation of our formulation: its inability to handle revolute manipulators with non-coplanar consecutive axes of rotation.
This also restricts the capacity of our formulation to represent joint angle limits to those that are symmetrical about $\Vector{\Theta}_0 = \Vector{0}$ (i.e., of the form $[{-\theta_{\mathrm{lim}}}, \ \theta_{\mathrm{lim}}]$).
At the cost of some of the theoretical foundations laid out in \Cref{sec:background} and \Cref{sec:eqdgp}, it is possible to address these ambiguities by extending our formulation to include cross products (allowing ``handedness'' to be expressed).
For example, solutions that include reflections for non-coplanar joints $u$ and $v$ in \cref{fig:linkage_model} could be removed by taking ${\Vector{c} = (\Vector{p}_{\tilde{u}} - \Vector{p}_{u})\times(\Vector{p}_{\tilde{v}} - \Vector{p}_{v})}$ and constraining the sign of the dot product ${\Vector{c}\cdot(\Vector{p}_{v} - \Vector{p}_{u})}$, thereby only allowing one of the two possible relative orientations satisfying distance constraints.
Similarly, a non-symmetric joint limit on $v$ can be obtained by taking $\Vector{a} = (\Vector{p}_{v} - \Vector{p}_{u})\times(\Vector{p}_{\tilde{v}} - \Vector{p}_{v})$, $\Vector{b} = (\Vector{p}_{w} - \Vector{p}_{u})\times(\Vector{p}_{\tilde{v}} - \Vector{p}_{v})$ and constraining the sign of $\Vector{a} \cdot \Vector{b}$.
Since adding these constraints would require a lengthy and detailed deviation from the purely distance-geometric view of IK developed herein, we leave their characterization as future work. 


\subsection{Future Work}
We have identified several exciting research directions for the distance-geometric IK formulation presented in this paper.
Our algorithm utilizes a Riemannian optimization-based solution because it is fast, can easily be extended (e.g., to include obstacles or other cost terms), and avoids problems associated with redundant degrees of freedom.
However, we believe a thorough exploration of the vast body of literature on distance geometry may yield other effective approaches or insights into our particular problem structure.
Our hope is that further study of approaches such as semidefinite programming relaxations for EDM completion will help to yield deep theoretical insight into the geometric structure of IK and its relationship with other problems in distance geometry.
This will permit us to compare the runtime of our algorithm against a greater variety of IK solvers, including complex algorithms like \texttt{TRAC-IK} that utilize multiple methods in parallel~\cite{beeson2015trac}.
Additionally, we are eager to develop an optimized version of our algorithm in a fast compiled language such as \cpp.
We believe that the distance-geometric IK formulation described herein provides a strong mathematical basis for future research in kinematics, motion planning, and control.


%


\section*{Acknowledgment}
This work was supported in part by the European Regional Development Fund under the grant KK.01.1.1.01.0009 (DATACROSS) and by the Natural Sciences and Engineering Research Council of Canada (NSERC).
Jonathan Kelly gratefully acknowledges support from the Canada Research Chairs program.
Matthew Giamou is a Vector Institute Postgraduate Affiliate and RBC Fellow.
Jonathan Kelly is a Vector Institute Faculty Affiliate.

\ifCLASSOPTIONcaptionsoff
	\newpage
\fi



\bibliographystyle{IEEEtran}
\bibliography{robotics_abbrv, references}

\begin{thebibliography}{10}
\providecommand{\url}[1]{#1}
\csname url@samestyle\endcsname
\providecommand{\newblock}{\relax}
\providecommand{\bibinfo}[2]{#2}
\providecommand{\BIBentrySTDinterwordspacing}{\spaceskip=0pt\relax}
\providecommand{\BIBentryALTinterwordstretchfactor}{4}
\providecommand{\BIBentryALTinterwordspacing}{\spaceskip=\fontdimen2\font plus
\BIBentryALTinterwordstretchfactor\fontdimen3\font minus
  \fontdimen4\font\relax}
\providecommand{\BIBforeignlanguage}[2]{{%
\expandafter\ifx\csname l@#1\endcsname\relax
\typeout{** WARNING: IEEEtran.bst: No hyphenation pattern has been}%
\typeout{** loaded for the language `#1'. Using the pattern for}%
\typeout{** the default language instead.}%
\else
\language=\csname l@#1\endcsname
\fi
#2}}
\providecommand{\BIBdecl}{\relax}
\BIBdecl

\bibitem{siciliano2010robotics}
B.~Siciliano, L.~Sciavicco, L.~Villani, and G.~Oriolo, \emph{Robotics:
  Modelling, Planning and Control}.\hskip 1em plus 0.5em minus 0.4em\relax
  Springer Science \& Business Media, 2010.

\bibitem{dokmanic_euclidean_2015}
I.~Dokmanic, R.~Parhizkar, J.~Ranieri, and M.~Vetterli,
  ``\BIBforeignlanguage{en}{Euclidean {{Distance Matrices}}: {{Essential
  Theory}}, {{Algorithms}} and {{Applications}}},''
  \emph{\BIBforeignlanguage{en}{IEEE Signal Process. Mag.}}, vol.~32, no.~6,
  pp. 12--30, Nov. 2015.

\bibitem{dejalonTwentyfiveYearsNatural2007}
J.~G. {de Jal{\'o}n}, ``\BIBforeignlanguage{en}{Twenty-five years of natural
  coordinates},'' \emph{\BIBforeignlanguage{en}{Multibody Sys. Dyn.}}, vol.~18,
  no.~1, pp. 15--33, Aug. 2007.

\bibitem{blanchini_convex_2017}
F.~Blanchini, G.~Fenu, G.~Giordano, and F.~A. Pellegrino,
  ``\BIBforeignlanguage{en}{A convex programming approach to the inverse
  kinematics problem for manipulators under constraints},''
  \emph{\BIBforeignlanguage{en}{Eur. J. Control}}, vol.~33, pp. 11--23, Jan.
  2017.

\bibitem{le2019kinematics}
T.~Le~Naour, N.~Courty, and S.~Gibet, ``Kinematics in the metric space,''
  \emph{Comput. Graph.}, vol.~84, pp. 13--23, 2019.

\bibitem{porta_inverse_2005}
J.~M. Porta, L.~Ros, and F.~Thomas, ``Inverse kinematics by distance matrix
  completion,'' in \emph{Proc. 12th Int. Workshop Computational
  Kinematics}.\hskip 1em plus 0.5em minus 0.4em\relax Elsevier, 2005.

\bibitem{han_inverse_2006}
L.~Han and L.~Rudolph, ``Inverse kinematics for a serial chain with joints
  under distance constraints.'' in \emph{Robotics: Science and Systems (RSS)},
  2006.

\bibitem{libertiEuclideanDistanceGeometry2014}
L.~Liberti, C.~Lavor, N.~Maculan, and A.~Mucherino,
  ``\BIBforeignlanguage{en}{Euclidean {{Distance Geometry}} and
  {{Applications}}},'' \emph{\BIBforeignlanguage{en}{SIAM Rev.}}, vol.~56,
  no.~1, pp. 3--69, Jan. 2014.

\bibitem{nguyenLowRankMatrixCompletion2019}
L.~T. Nguyen, J.~Kim, and B.~Shim, ``Low-rank matrix completion: A contemporary
  survey,'' \emph{IEEE Access}, vol.~7, pp. 94\,215--94\,237, 2019.

\bibitem{Mishra_2011}
B.~Mishra, G.~Meyer, and R.~Sepulchre, ``Low-rank optimization for distance
  matrix completion,'' in \emph{Proc. IEEE Conf. Decision and Control and Eur.
  Control Conf.}, Dec 2011, pp. 4455--4460.

\bibitem{Journ_e_2010}
M.~Journée, F.~Bach, P.-A. Absil, and R.~Sepulchre, ``Low-rank optimization on
  the cone of positive semidefinite matrices,'' \emph{SIAM J. Optim.}, vol.~20,
  no.~5, p. 2327–2351, Jan. 2010.

\bibitem{havelDistanceGeometryTheory2002}
T.~F. Havel, ``\BIBforeignlanguage{en}{Distance {{Geometry}}: {{Theory}},
  {{Algorithms}}, and {{Chemical Applications}}},'' in
  \emph{\BIBforeignlanguage{en}{Encyclopedia of {{Computational
  Chemistry}}}}.\hskip 1em plus 0.5em minus 0.4em\relax {John Wiley \& Sons,
  Ltd}, Apr. 2002, pp. 723--742.

\bibitem{angeles2013computational}
J.~Angeles, G.~Hommel, and P.~Kov{\'a}cs, \emph{Computational
  Kinematics}.\hskip 1em plus 0.5em minus 0.4em\relax Springer Science \&
  Business Media, 2013, vol.~28.

\bibitem{aristidouInverseKinematicsTechniques2018}
A.~Aristidou, J.~Lasenby, Y.~Chrysanthou, and A.~Shamir,
  ``\BIBforeignlanguage{en}{Inverse {{Kinematics Techniques}} in {{Computer
  Graphics}}: {{A Survey}}},'' \emph{\BIBforeignlanguage{en}{Comput. Graph.
  Forum.}}, vol.~37, no.~6, pp. 35--58, Sep. 2018.

\bibitem{lee1988new}
H.-Y. Lee and C.-G. Liang, ``A new vector theory for the analysis of spatial
  mechanisms,'' \emph{Mech. Mach. Theory}, vol.~23, no.~3, pp. 209--217, 1988.

\bibitem{manocha1994efficient}
D.~Manocha and J.~F. Canny, ``Efficient inverse kinematics for general 6r
  manipulators,'' \emph{IEEE Trans. Robot.}

\bibitem{husty2007new}
M.~L. Husty, M.~Pfurner, and H.-P. Schr{\"o}cker, ``A new and efficient
  algorithm for the inverse kinematics of a general serial 6r manipulator,''
  \emph{Mech. Mach. Theory}, vol.~42, no.~1, pp. 66--81, 2007.

\bibitem{diankov2010automated}
R.~Diankov, ``Automated construction of robotic manipulation programs,'' Ph.D.
  dissertation, Carnegie Mellon University, Pittsburgh, PA, September 2010.

\bibitem{kenwright2012inverse}
B.~Kenwright, ``Inverse kinematics--cyclic coordinate descent ({CCD}),''
  \emph{J. Graphics Tools}, vol.~16, no.~4, pp. 177--217, 2012.

\bibitem{muller2007triangualation}
R.~Muller-Cajar and R.~Mukundan, ``Triangulation - a new algorithm for inverse
  kinematics,'' in \emph{Int. Conf. Image and Vision Computing New Zealand},
  Dec 2007, pp. 181--186.

\bibitem{han_unified_2007}
``A unified geometric approach for inverse kinematics.''

\bibitem{Aristidou_2011}
A.~Aristidou and J.~Lasenby, ``{FABRIK}: A fast, iterative solver for the
  inverse kinematics problem,'' \emph{Graph. Models}, vol.~73, no.~5, p.
  243–260, Sep. 2011.

\bibitem{aristidou_extending_2016}
A.~Aristidou, Y.~Chrysanthou, and J.~Lasenby,
  ``\BIBforeignlanguage{en}{Extending {{FABRIK}} with model constraints},''
  \emph{\BIBforeignlanguage{en}{Comput. Animat. Virtual Worlds}}, vol.~27,
  no.~1, pp. 35--57, Jan. 2016.

\bibitem{zhu_algorithm_1997}
C.~Zhu, R.~H. Byrd, P.~Lu, and J.~Nocedal, ``\BIBforeignlanguage{en}{Algorithm
  778: {{L}}-{{BFGS}}-{{B}}: {{Fortran}} subroutines for large-scale
  bound-constrained optimization},'' \emph{\BIBforeignlanguage{en}{ACM Trans.
  Math. Softw.}}, vol.~23, no.~4, pp. 550--560, Dec. 1997.

\bibitem{schulman2014motion}
J.~Schulman, Y.~Duan, J.~Ho, A.~Lee, I.~Awwal, H.~Bradlow, J.~Pan, S.~Patil,
  K.~Goldberg, and P.~Abbeel, ``Motion planning with sequential convex
  optimization and convex collision checking,'' \emph{Int. J. Rob. Res.}, 2014.

\bibitem{boyd2004convex}
S.~Boyd and L.~Vandenberghe, \emph{Convex Optimization}.\hskip 1em plus 0.5em
  minus 0.4em\relax Cambridge University Press, 2004.

\bibitem{beeson2015trac}
P.~Beeson and B.~Ames, ``{TRAC-IK}: An open-source library for improved solving
  of generic inverse kinematics,'' in \emph{Proc. IEEE Int. Conf. Humanoid
  Robots (Humanoids)}, 2015, pp. 928--935.

\bibitem{sciavicco1986coordinate}
L.~Sciavicco and B.~Siciliano, ``Coordinate transformation: A solution
  algorithm for one class of robots,'' \emph{{IEEE} Trans. Syst., Man,
  Cybern.}, vol.~16, no.~4, pp. 550--559, 1986.

\bibitem{buss2005selectively}
S.~R. Buss and J.-S. Kim, ``Selectively damped least squares for inverse
  kinematics,'' \emph{J. Graphics Tools}, vol.~10, pp. 37--49, 2005.

\bibitem{nakamura1987task}
Y.~Nakamura, H.~Hanafusa, and T.~Yoshikawa, ``Task-priority based redundancy
  control of robot manipulators,'' \emph{Int. J. Rob. Res.}, vol.~6, no.~2, pp.
  3--15, 1987.

\bibitem{spong2020robot}
M.~W. Spong, S.~Hutchinson, and M.~Vidyasagar, \emph{Robot Modeling and
  Control}.\hskip 1em plus 0.5em minus 0.4em\relax John Wiley \& Sons, 2005.

\bibitem{erleben_solving_2019}
K.~Erleben and S.~Andrews, ``\BIBforeignlanguage{en}{Solving inverse kinematics
  using exact {{Hessian}} matrices},'' \emph{\BIBforeignlanguage{en}{Comput.
  Graph.}}, vol.~78, pp. 1--11, Feb. 2019.

\bibitem{dai_global_2019}
H.~Dai, G.~Izatt, and R.~Tedrake, ``\BIBforeignlanguage{en}{Global inverse
  kinematics via mixed-integer convex optimization},''
  \emph{\BIBforeignlanguage{en}{Int. J. Rob. Res.}}, vol.~38, no. 12-13, pp.
  1420--1441, Oct. 2019.

\bibitem{yenamandra_convex_2019}
T.~Yenamandra, F.~Bernard, J.~Wang, F.~Mueller, and C.~Theobalt,
  ``\BIBforeignlanguage{en}{Convex {{Optimisation}} for {{Inverse
  Kinematics}}},'' \emph{\BIBforeignlanguage{en}{Proc. Int. Conf. 3D Vision
  (3DV)}}, pp. 318--327, Sep. 2019.

\bibitem{blanchini_inverse_2015}
F.~Blanchini, G.~Fenu, G.~Giordano, and F.~A. Pellegrino, ``Inverse kinematics
  by means of convex programming: {{Some}} developments,'' in \emph{IEEE Int.
  Conf. Autom. Sci. Eng. (CASE)}, Aug. 2015, pp. 515--520.

\bibitem{dingSensorNetworkLocalization2010}
Y.~Ding, N.~Krislock, J.~Qian, and H.~Wolkowicz,
  ``\BIBforeignlanguage{en}{Sensor network localization, {Euclidean} distance
  matrix completions, and graph realization},''
  \emph{\BIBforeignlanguage{en}{Optim. Eng.}}, vol.~11, no.~1, pp. 45--66, Feb.
  2010.

\bibitem{cox2008multidimensional}
M.~A. Cox and T.~F. Cox, ``Multidimensional scaling,'' in \emph{Handbook of
  Data Visualization}.\hskip 1em plus 0.5em minus 0.4em\relax Springer, 2008,
  pp. 315--347.

\bibitem{liberti2008branch}
L.~Liberti, C.~Lavor, and N.~Maculan, ``A branch-and-prune algorithm for the
  molecular distance geometry problem,'' \emph{Int. Trans. Oper. Res.}

\bibitem{biswas2006semidefinite}
P.~Biswas, T.-C. Lian, T.-C. Wang, and Y.~Ye, ``Semidefinite programming based
  algorithms for sensor network localization,'' \emph{ACM Trans. Sensor
  Networks (TOSN)}, vol.~2, no.~2, pp. 188--220, 2006.

\bibitem{leung2010sdp}
N.-H.~Z. Leung and K.-C. Toh, ``An {SDP}-based divide-and-conquer algorithm for
  large-scale noisy anchor-free graph realization,'' \emph{SIAM J. Scientific
  Comput.}, vol.~31, no.~6, pp. 4351--4372, 2010.

\bibitem{vandereyckenLowrankMatrixCompletion2012}
B.~Vandereycken, ``\BIBforeignlanguage{en}{Low-rank matrix completion by
  {{Riemannian}} optimization\textemdash extended version},''
  \emph{\BIBforeignlanguage{en}{SIAM J. Optim.}}, vol.~23, no.~2, pp.
  1214--1236, Sep. 2012.

\bibitem{nguyenLocalizationIoTNetworks2019}
L.~T. Nguyen, J.~Kim, S.~Kim, and B.~Shim,
  ``\BIBforeignlanguage{en}{Localization of {{IoT Networks}} via {{Low}}-{{Rank
  Matrix Completion}}},'' \emph{\BIBforeignlanguage{en}{IEEE Trans. Commun.}},
  vol.~67, no.~8, pp. 5833--5847, Aug. 2019.

\bibitem{portaDistanceGeometryActive2018}
J.~M. Porta, N.~Rojas, and F.~Thomas, ``\BIBforeignlanguage{en}{Distance
  {{Geometry}} in {{Active Structures}}},'' in
  \emph{\BIBforeignlanguage{en}{Mechatronics for {{Cultural Heritage}} and
  {{Civil Engineering}}}}, 2018, vol.~92, pp. 115--136.

\bibitem{porta_branch-and-prune_2005}
J.~Porta, L.~Ros, F.~Thomas, and C.~Torras, ``A branch-and-prune solver for
  distance constraints,'' \emph{IEEE Trans. Robot.}, vol.~21, pp. 176--187,
  Apr. 2005.

\bibitem{maricInverseKinematicsSerial2020}
F.~Mari{\'c}, M.~Giamou, S.~Khoubyarian, I.~Petrovi{\'c}, and J.~Kelly,
  ``Inverse kinematics for serial kinematic chains via sum of squares
  optimization,'' in \emph{Proc. IEEE Int. Conf. Robot. Autom. (ICRA)}, Aug.
  2020, pp. 7101--7107.

\bibitem{parrilo_semidefinite_2003}
P.~A. Parrilo, ``\BIBforeignlanguage{en}{Semidefinite programming relaxations
  for semialgebraic problems},'' \emph{\BIBforeignlanguage{en}{Math.
  Program.}}, vol.~96, no.~2, pp. 293--320, May 2003.

\bibitem{lasserre_global_2001}
J.~B. Lasserre, ``\BIBforeignlanguage{en}{Global {{Optimization}} with
  {{Polynomials}} and the {{Problem}} of {{Moments}}},''
  \emph{\BIBforeignlanguage{en}{SIAM J. Optim.}}, vol.~11, no.~3, pp. 796--817,
  Jan. 2001.

\bibitem{Hendrickson_1992}
B.~Hendrickson, ``Conditions for unique graph realizations,'' \emph{SIAM J.
  Comput.}, vol.~21, no.~1, p. 65–84, Feb. 1992.

\bibitem{Sippl_1986}
M.~J. Sippl and H.~A. Scheraga, ``Cayley-menger coordinates.'' \emph{Proc.
  Natl. Acad. Sci.}, vol.~83, no.~8, p. 2283–2287, Apr. 1986.

\bibitem{alfakih1999solving}
A.~Y. Alfakih, A.~Khandani, and H.~Wolkowicz, ``Solving euclidean distance
  matrix completion problems via semidefinite programming,'' \emph{Comput.
  Optim. Appl.}, vol.~12, no. 1-3, pp. 13--30, 1999.

\bibitem{so_theory_2007}
A.~M.-C. So and Y.~Ye, ``\BIBforeignlanguage{en}{Theory of semidefinite
  programming for {{Sensor Network Localization}}},''
  \emph{\BIBforeignlanguage{en}{Math. Program.}}, vol. 109, no. 2-3, pp.
  367--384, Jan. 2007.

\bibitem{Burer_2004}
S.~Burer and R.~D. Monteiro, ``Local minima and convergence in low-rank
  semidefinite programming,'' \emph{Mathematical Programming}, vol. 103, no.~3,
  p. 427–444, Dec. 2004.

\bibitem{fangEuclideanDistanceMatrix2012}
H.~Fang and D.~P. O'Leary, ``\BIBforeignlanguage{en}{Euclidean distance matrix
  completion problems},'' \emph{\BIBforeignlanguage{en}{Optim. Methods
  Softw.}}, vol.~27, no. 4-5, pp. 695--717, Oct. 2012.

\bibitem{chuLEASTSQUARESEUCLIDEAN}
D.~I. Chu, H.~C. Brown, and M.~T. Chu, ``\BIBforeignlanguage{en}{{On Least
  Squares Euclidean Distance Matrix Approximation and Completion}},''
  Department of Mathematics, North Carolina State University, Tech. Rep., 2003.

\bibitem{Pearlmutter_1994}
B.~A. Pearlmutter, ``Fast exact multiplication by the {Hessian},'' \emph{Neural
  Comput.}, vol.~6, no.~1, p. 147–160, Jan. 1994.

\bibitem{absil2009optimization}
P.-A. Absil, R.~Mahony, and R.~Sepulchre, \emph{Optimization Algorithms on
  Matrix Manifolds}.\hskip 1em plus 0.5em minus 0.4em\relax Princeton
  University Press, 2009.

\bibitem{boumal2015lowrank}
N.~Boumal and P.-A. Absil, ``Low-rank matrix completion via preconditioned
  optimization on the {G}rassmann manifold,'' \emph{Linear Algebra Appl.}, vol.
  475, pp. 200--239, 2015.

\bibitem{weiGuaranteesRiemannianOptimization2016}
K.~Wei, J.-F. Cai, T.~F. Chan, and S.~Leung,
  ``\BIBforeignlanguage{en}{Guarantees of {{Riemannian Optimization}} for {{Low
  Rank Matrix Recovery}}},'' \emph{\BIBforeignlanguage{en}{{SIAM} J. Matrix
  Anal. Appl.}}, vol.~37, no.~3, Apr. 2016.

\bibitem{Le_Naour_2019}
T.~Le~Naour, N.~Courty, and S.~Gibet, ``Kinematics in the metric space,''
  \emph{Comput. Graph.}, vol.~84, p. 13–23, Nov. 2019.

\bibitem{hartenberg1955kinematic}
R.~S. Hartenberg and J.~Denavit, ``A kinematic notation for lower pair
  mechanisms based on matrices,'' \emph{J. Appl. Mech.}, vol.~77, no.~2, pp.
  215--221, 1955.

\bibitem{lynch2017modern}
K.~M. Lynch and F.~C. Park, \emph{Modern Robotics}.\hskip 1em plus 0.5em minus
  0.4em\relax Cambridge University Press, 2017.

\bibitem{ananthanarayanan_real-time_2015}
H.~Ananthanarayanan and R.~Ord{\'o}{\~n}ez, ``\BIBforeignlanguage{en}{Real-time
  inverse kinematics of (2n+1) {{DOF}} hyper-redundant manipulator arm via a
  combined numerical and analytical approach},''
  \emph{\BIBforeignlanguage{en}{Mech. Mach. Theory}}, vol.~91, pp. 209--226,
  Sep. 2015.

\bibitem{genrtr}
P.-A. Absil, C.~G. Baker, and K.~A. Gallivan, ``Trust-region methods on
  {Riemannian} manifolds,'' \emph{Found. Comput. Math.}, vol.~7, no.~3, pp.
  303--330, 2007.

\bibitem{tonycai_one-sided_2005}
T.~Tony~Cai, ``\BIBforeignlanguage{en}{One-sided confidence intervals in
  discrete distributions},'' \emph{\BIBforeignlanguage{en}{J. Stat. Plan.
  Inference}}, vol. 131, no.~1, pp. 63--88, Apr. 2005.

\bibitem{townsend2016pymanopt}
J.~Townsend, N.~Koep, and S.~Weichwald, ``Pymanopt: A {Python} toolbox for
  optimization on manifolds using automatic differentiation,'' \emph{J. Mach.
  Learn. Res.}, vol.~17, no.~1, pp. 4755--4759, 2016.

\bibitem{virtanen2020scipy}
P.~Virtanen, R.~Gommers, T.~E. Oliphant, M.~Haberland, T.~Reddy, D.~Cournapeau,
  E.~Burovski, P.~Peterson, W.~Weckesser, J.~Bright \emph{et~al.}, ``Scipy 1.0:
  fundamental algorithms for scientific computing in {Python},'' \emph{Nat.
  Methods}, vol.~17, no.~3, pp. 261--272, 2020.

\end{thebibliography}
%

%

\begin{IEEEbiography}	[{\includegraphics[width=1in,height=1.25in,clip,keepaspectratio]{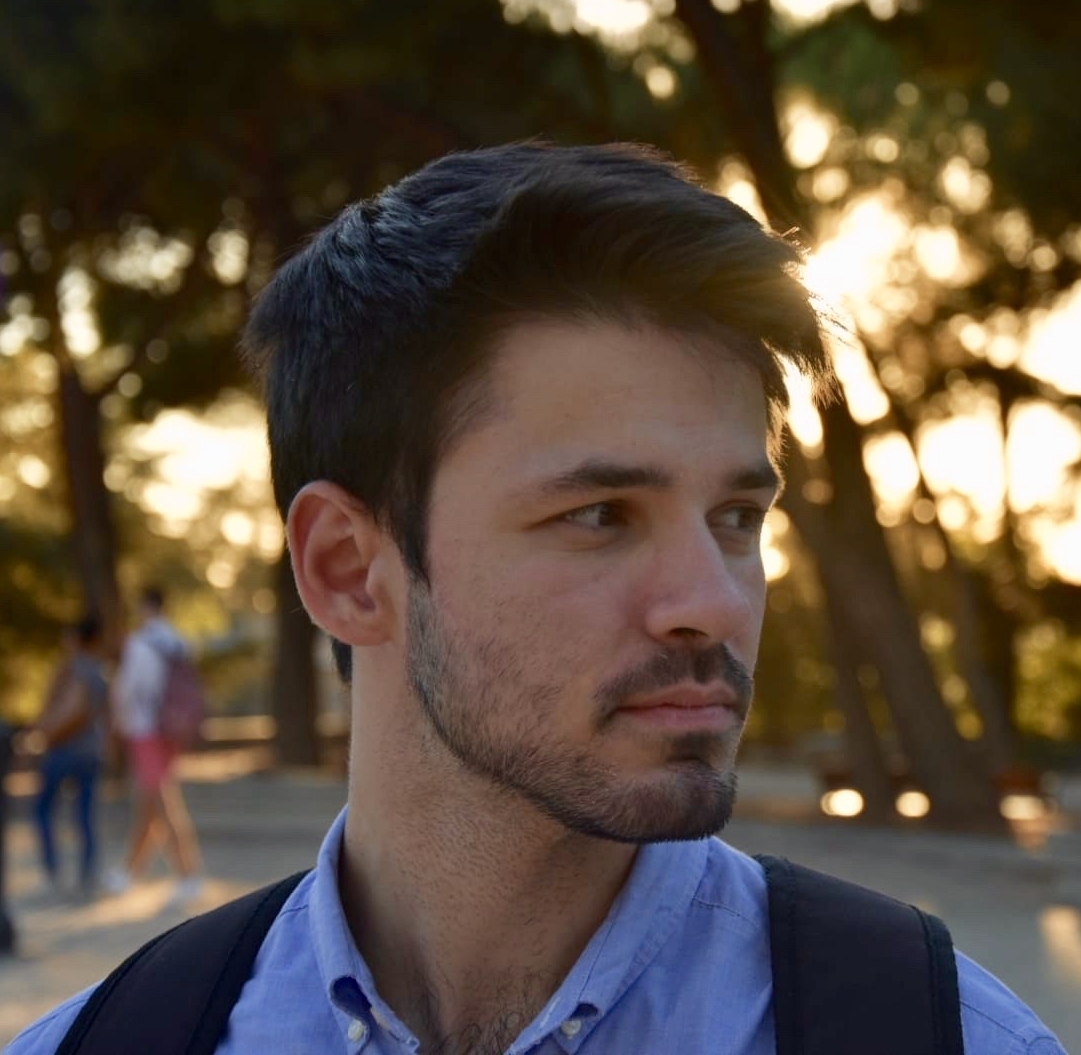}}]{Filip Mari\'c}
received his B.Sc. and M.Sc. Degrees in electrical engineering and information technology from the University of Zagreb, Faculty of Electrical Engineering and Computing in 2015 and 2017, respectively. He is currently pursuing a Ph.D. degree at the Space and Terrestrial Autonomous Robotic Systems (STARS) laboratory at the University of Toronto, jointly with the Laboratory for Autonomous Systems and Mobile Robotics (LAMOR) at the University of Zagreb. His research is focused on applying concepts from differential geometry to kinematics and motion planning in robotics.\end{IEEEbiography}

\begin{IEEEbiography}[{\includegraphics[width=1in,height=1.25in,clip,keepaspectratio]{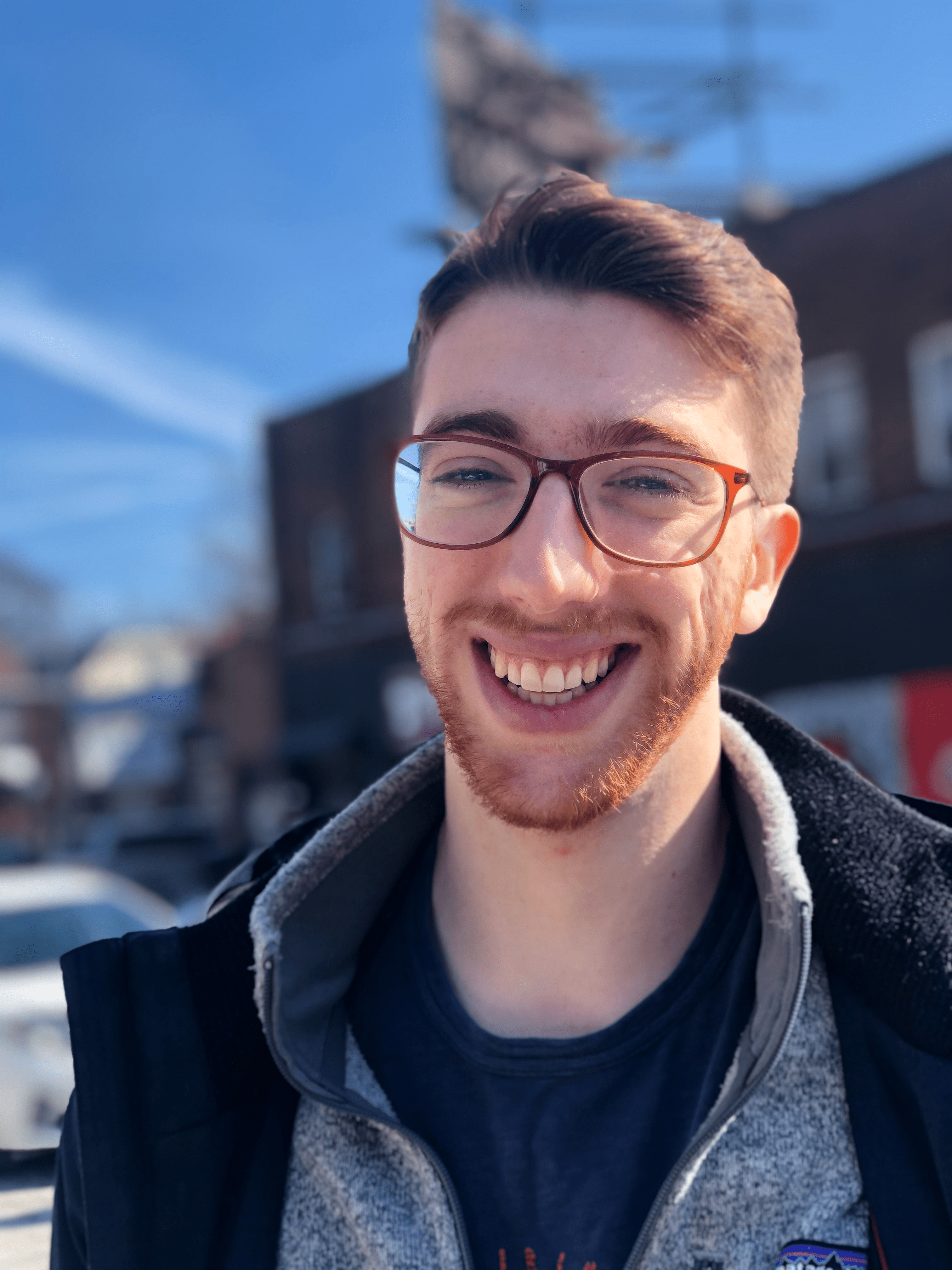}}]{Matthew Giamou}
	received his B.A.Sc. from the University of Toronto in 2015 and his M.Sc. from MIT In 2017. 
	He is currently a Ph.D. candidate at the Space and Terrestrial Autonomous Robotic Systems (STARS) laboratory at the University of Toronto, where he is investigating convex relaxations for various challenging state estimation and planning problems.
	His research interests also include multi-robot systems and the integration of learned and classical models for robot perception.	
\end{IEEEbiography}

\begin{IEEEbiography}[{\includegraphics[width=1in,height=1.25in,clip,keepaspectratio]{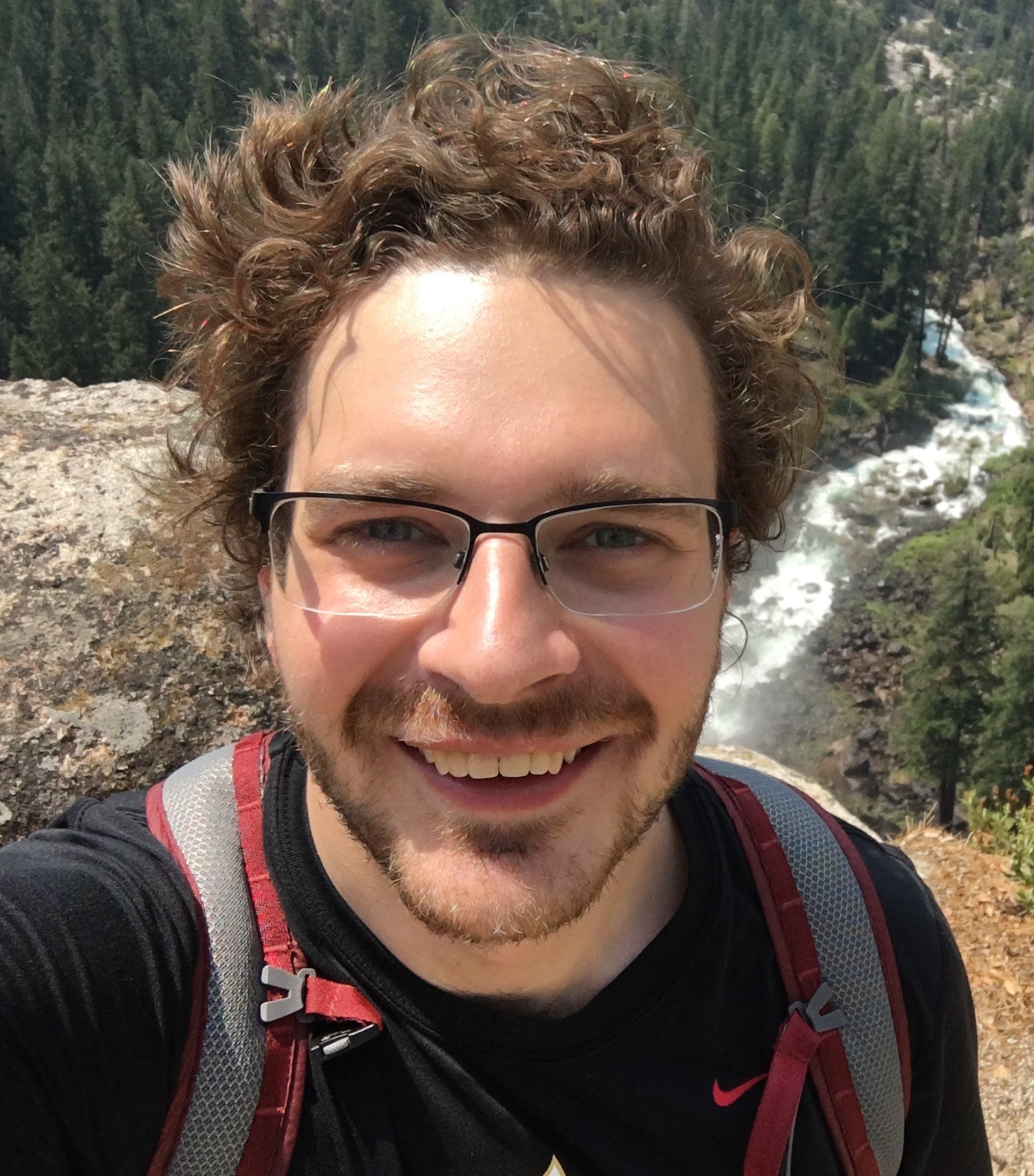}}]{Adam W. Hall}
	received a B.Sc.Eng. in engineering physics and a B.Sc. in chemistry from Queen's University in 2014. 
	Upon graduating, he worked in the space robotics industry before starting his Ph.D. at the University of Toronto.
	Currently, as a member of both the Space and Terrestrial Autonomous Robotic Systems (STARS) laboratory and the Dynamic Systems Laboratory (DSL), Adam is exploring how to apply classical notions of safety to learning-based control problems.
\end{IEEEbiography}

\begin{IEEEbiography}[{\includegraphics[width=1in,height=1.25in,clip,keepaspectratio]{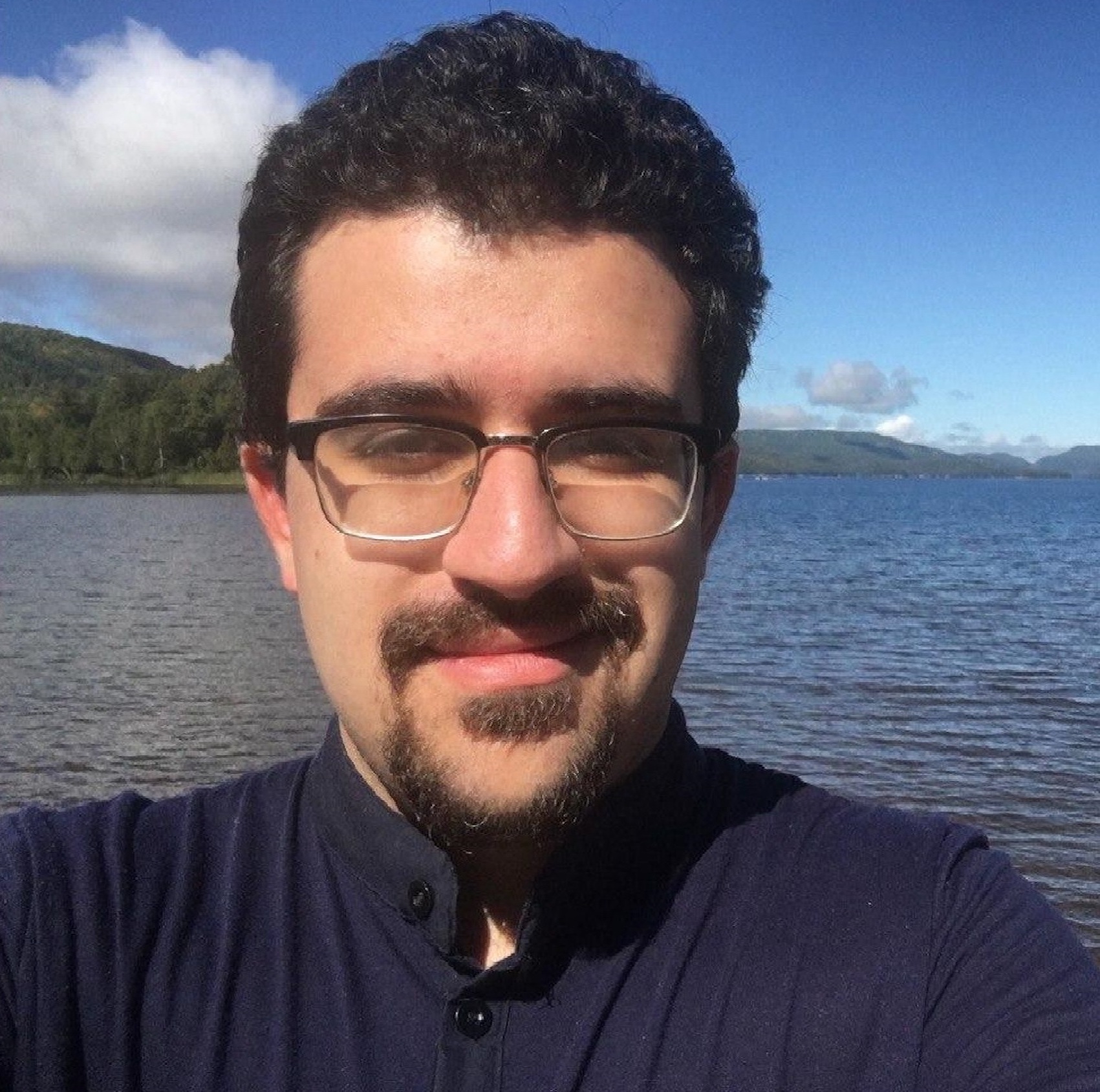}}]{Soroush Khoubyarian}
	is a third year Engineering Science student at the University of Toronto. He was a student at the Space and Terrestrial Autonomous Robotic Systems (STARS) laboratory at the University of Toronto Institute for Aerospace Studies in the summers of 2019 and 2020, where he worked on convex optimization methods for sensor calibration and manipulation. He specializes in Engineering Physics and is expected to receive his B.A.Sc in 2023.
\end{IEEEbiography}

\begin{IEEEbiography}[{\includegraphics[width=1in,height=1.25in,clip,keepaspectratio]{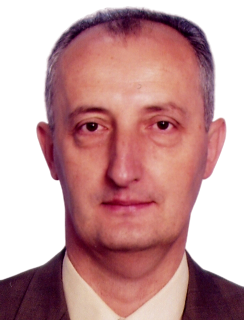}}]{Ivan Petrovi\'c} received the Master of Science and the Ph.D. degrees from FER Zagreb, Zagreb, Croatia, in 1990 and 1998, respectively. He is a Professor and the Head of the Laboratory for Autonomous Systems and Mobile Robotics, Faculty of Electrical Engineering and Computing, University of Zagreb, Zagreb, Croatia. He has published about 60 journal and 200 conference papers. His current research interest includes advanced control and estimation techniques and their application in autonomous systems and robotics. Prof. Petrović is a Full Member of the Croatian Academy of Engineering, and the Chair of the IFAC Technical Committee on Robotics.
\end{IEEEbiography}

\begin{IEEEbiography}[{\includegraphics[width=1in,height=1.25in,clip,keepaspectratio]{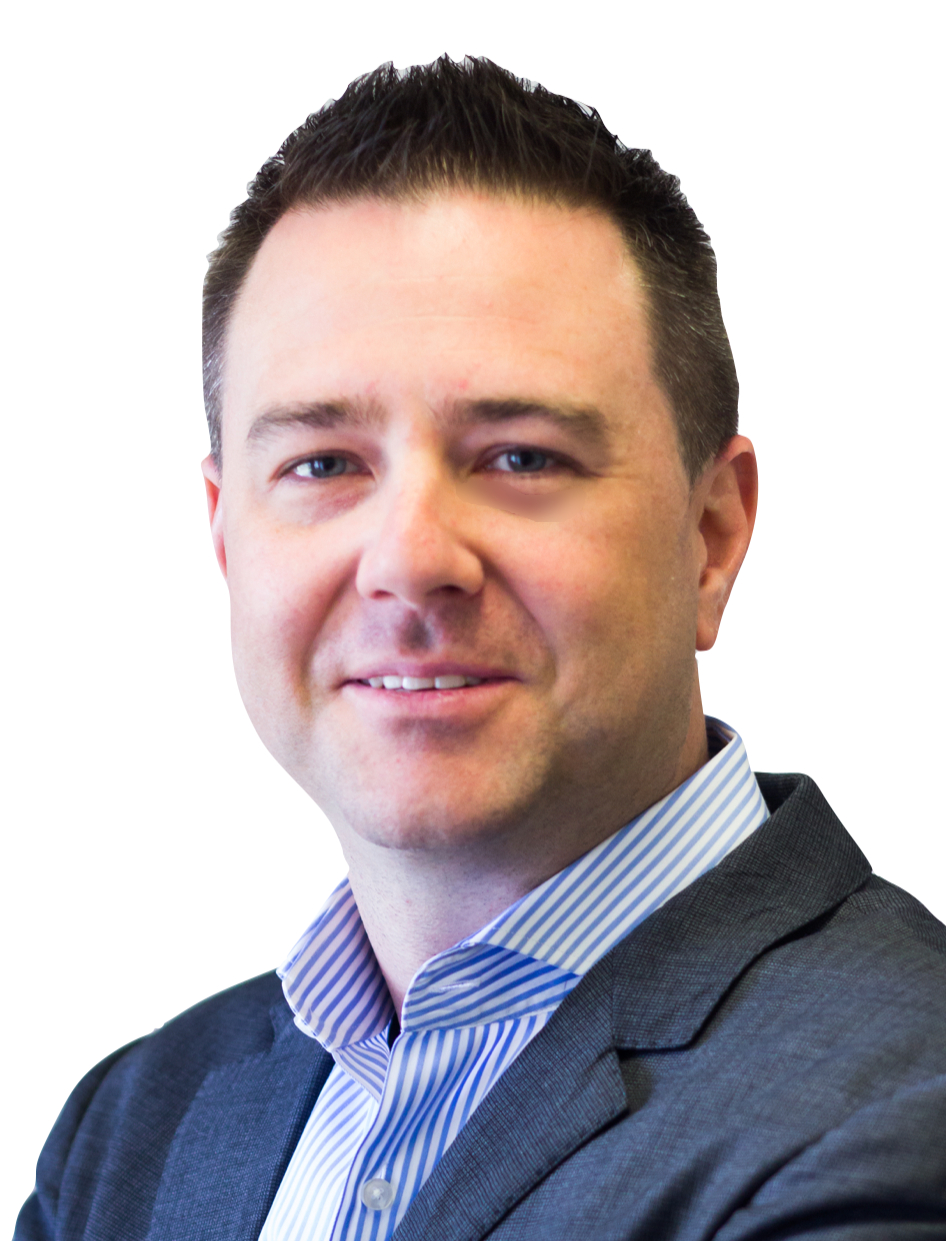}}]{Jonathan Kelly} received his Ph.D. degree from the University of Southern California, Los Angeles, USA, in 2011. From 2011 to 2013 he was a postdoctoral fellow in the Computer Science and Artificial Intelligence Laboratory at the Massachusetts Institute of Technology, Cambridge, USA. He is currently Dean's Catalyst Professor and Director of the Space and Terrestrial Autonomous Robotic Systems Laboratory, University of Toronto Institute for Aerospace Studies, Toronto, Canada. Prof. Kelly holds the Tier II Canada Research Chair in Collaborative Robotics. His research interests include perception, planning, and learning for interactive robotic systems.
\end{IEEEbiography}







\end{document}